\newtheorem{theorem}{Theorem}
\newtheorem{lemma}[theorem]{Lemma}
\newtheorem*{mydef}{Definition}
\newcommand{\uh}{\ensuremath{\hat{u}}}
\newcommand{\us}{\ensuremath{u^\ast}}
\newcommand{\vs}{\ensuremath{v^\ast}}
\newcommand{\Uc}{\ensuremath{\mathcal{U}}}
\newcommand{\Vc}{\ensuremath{\mathcal{V}}}
\icmltitlerunning{$K$-Beam Minimax: Efficient Optimization for Deep Adversarial Learning}
\begin{document} 

\twocolumn[
\icmltitle{$K$-Beam Minimax: Efficient Optimization for Deep Adversarial Learning}



\icmlsetsymbol{equal}{*}

\begin{icmlauthorlist}
\icmlauthor{Jihun Hamm}{1}
\icmlauthor{Yung-Kyun Noh}{2}
\end{icmlauthorlist}

\icmlaffiliation{1}{The Ohio State University, Columbus, OH, USA.}
\icmlaffiliation{2}{Seoul National University, Seoul, Korea}
\icmlcorrespondingauthor{Jihun Hamm}{hammj@cse.ohio-state.edu}

\icmlkeywords{minimax optimization, saddle point, GAN, subgradient-descent}

\vskip 0.3in
]



\printAffiliationsAndNotice{}  

\begin{abstract}
Minimax optimization plays a key role in adversarial training of machine
learning algorithms, such as learning generative models,
domain adaptation, privacy preservation, and robust learning.
In this paper, we demonstrate the failure of alternating gradient descent
in minimax optimization problems due to the discontinuity of solutions of 
the inner maximization. 
To address this, we propose a new $\epsilon$-subgradient descent algorithm 
that addresses this problem by simultaneously tracking $K$ candidate solutions.
Practically, the algorithm can find solutions that previous saddle-point
algorithms cannot find, with only a sublinear increase of complexity in $K$. 
We analyze the conditions under which the algorithm converges to
the true solution in detail. 
A significant improvement in stability and convergence speed of the algorithm
is observed in simple representative problems, GAN training, and domain-adaptation problems.
\end{abstract}

\section{Introduction}\label{sec:intro}

There is a wide range of problems in machine learning which can be formulated
as continuous minimax optimization problems. 
Examples include generative adversarial nets (GANs)
\cite{goodfellow2014generative}, 
privacy preservation \cite{hamm2015preserving,edwards2015censoring},
domain adaption \cite{ganin2015unsupervised}, 
and robust learning \cite{globerson2006nightmare} to list a few.
More broadly, the problem of finding a worst-case solution or an equilibrium of
a leader-follower game \cite{bruckner2011stackelberg} can be formulated as 
a minimax problem. 
Furthermore, the KKT condition for a convex problem can be considered 
a minimax point of the Lagrangian \cite{arrow1958studies}. 
Efficient solvers for minimax problems can have positive impacts on all
these fronts.

To define the problem, consider a real-valued function $f(u,v)$ on 
a subset $\Uc \times \Vc \subseteq \mathbb{R}^d \times \mathbb{R}^D$. 
A (continuous) minimax optimization problem is 
$\min_{u\in\Uc} \max_{v \in \Vc} f(u,v)$\footnote{A more general problem is $\inf_{u\in\Uc} \sup_{v \in \Vc} f(u,v)$,
but we will assume that the min and the max exist and are achievable,
which are explained further in Sec.~\ref{sec:backgrounds}.}.
 It is called a discrete minimax problem if the maximization domain $\Vc$ is finite. 
A related notion is the (global) saddle point $(\us,\vs)$ which is a point that satisfies
\[
f(\us,v) \leq f(\us,\vs) \leq f(u,\vs),\;\;\forall (u,v) \in \Uc \times \Vc.
\]
When $f(u,v)$ is convex in $u$ and concave in $v$, saddle points coincide with
minimax points, due to the Von Neumann's theorem \cite{v1928theorie}:
$
\max_v \min_u f(u,v) = f(\us,\vs) = \min_u \max_v f(u,v).
$
The problem of finding saddle points has been studied intensively since the seminal
work of \citet{arrow1958studies}, and a gradient descent method was proposed by
\citet{uzawa1958iterative}. Much theoretical work has ensued,
in particular on the stability of saddle points and convergence
(see Sec.~\ref{sec:related work}).
However, the cost function $f$ in realistic machine learning applications is
seldom convex-concave
and may not have a global saddle point. 
Fig.~\ref{fig:simple} shows motivating examples of surfaces on 
$[-0.5,0.5]^2\subseteq \mathbb{R}^2$.
Examples (a),(b), and (c) are saddle point problems: all three have 
a critical point at the origin $(u,v)=(0,0)$, which is also a saddle point and a
minimax point.
However, examples (d),(e), and (f) do not have global saddle points:
example (d) has minimax points $(u,v)\in\{(\pm 0.25, -0.25),(\pm 0.25, 0.5)\}$ and 
examples (e) and (f) have minimax points $(u,v)=(0, \pm 0.5)$. 
These facts are not obvious until one analyzes each surface. 
(See Appendix for more information.)
Furthermore, the non-existence of saddle points also happens with unconstrained
problems: consider the function $f(u,v) = -0.5u^2 +2uv -v^2$ defined on $\mathbb{R}^2$. 
The inner maximum has the closed-form solution $\phi(u) = 0.5u^2$, and the outer
minimum $\min_u \phi(u)$ is $0$ at $u=0$. Therefore, $(u,v)=(0,0)$ is the global
minimax point (and also a critical point). 
However, $f$ cannot have a saddle point, local or global, since $f$ is strictly
concave in $u$ and $v$ respectively. 

\begin{figure*}[thb]
\centering
\begin{subfigure}{.3\linewidth}
	\centering
	\includegraphics[width=0.8\linewidth]{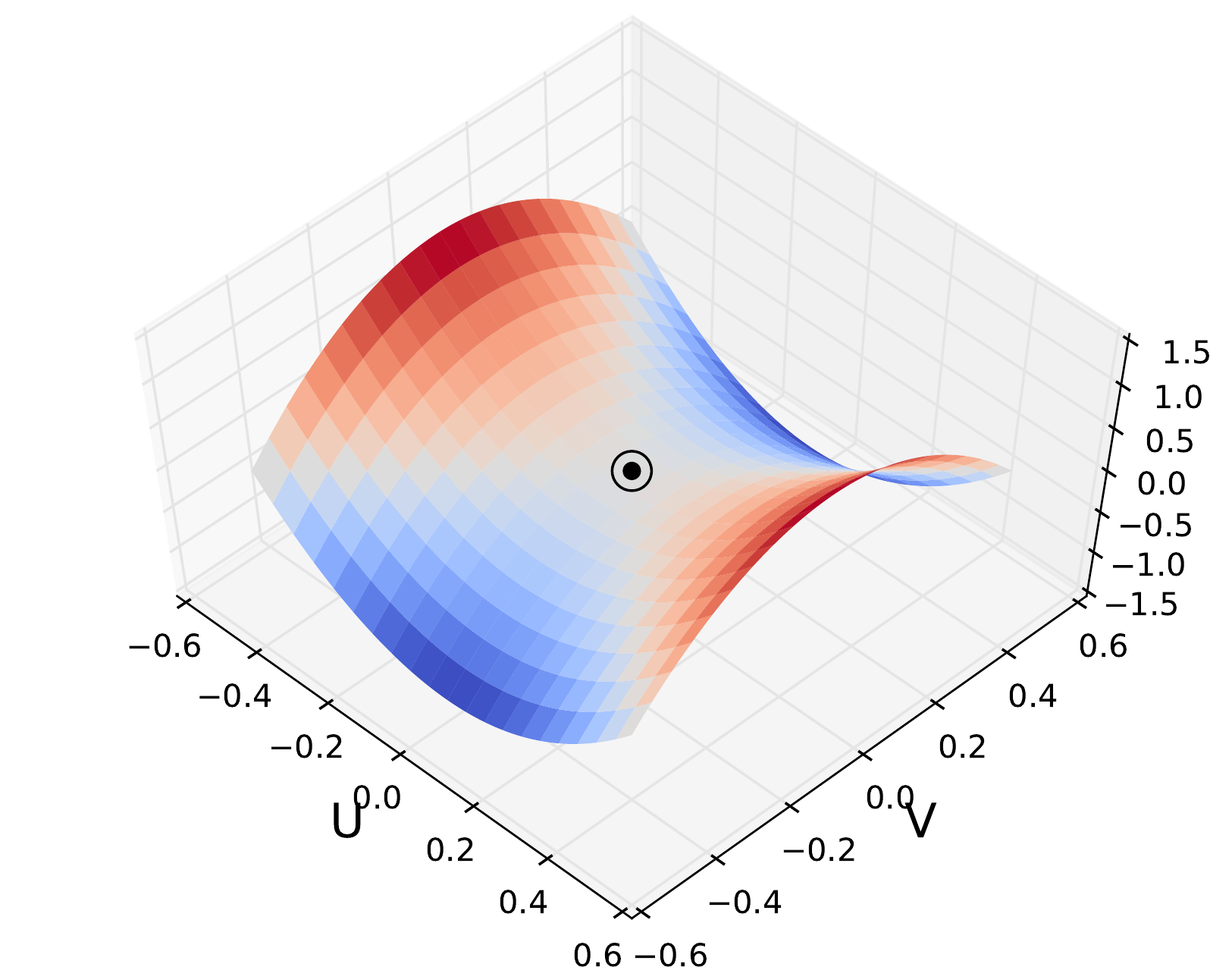}
	\caption{Saddle ($u^2-v^2$)}
\end{subfigure}
\begin{subfigure}{.3\linewidth}
	\centering
	\includegraphics[width=0.8\linewidth]{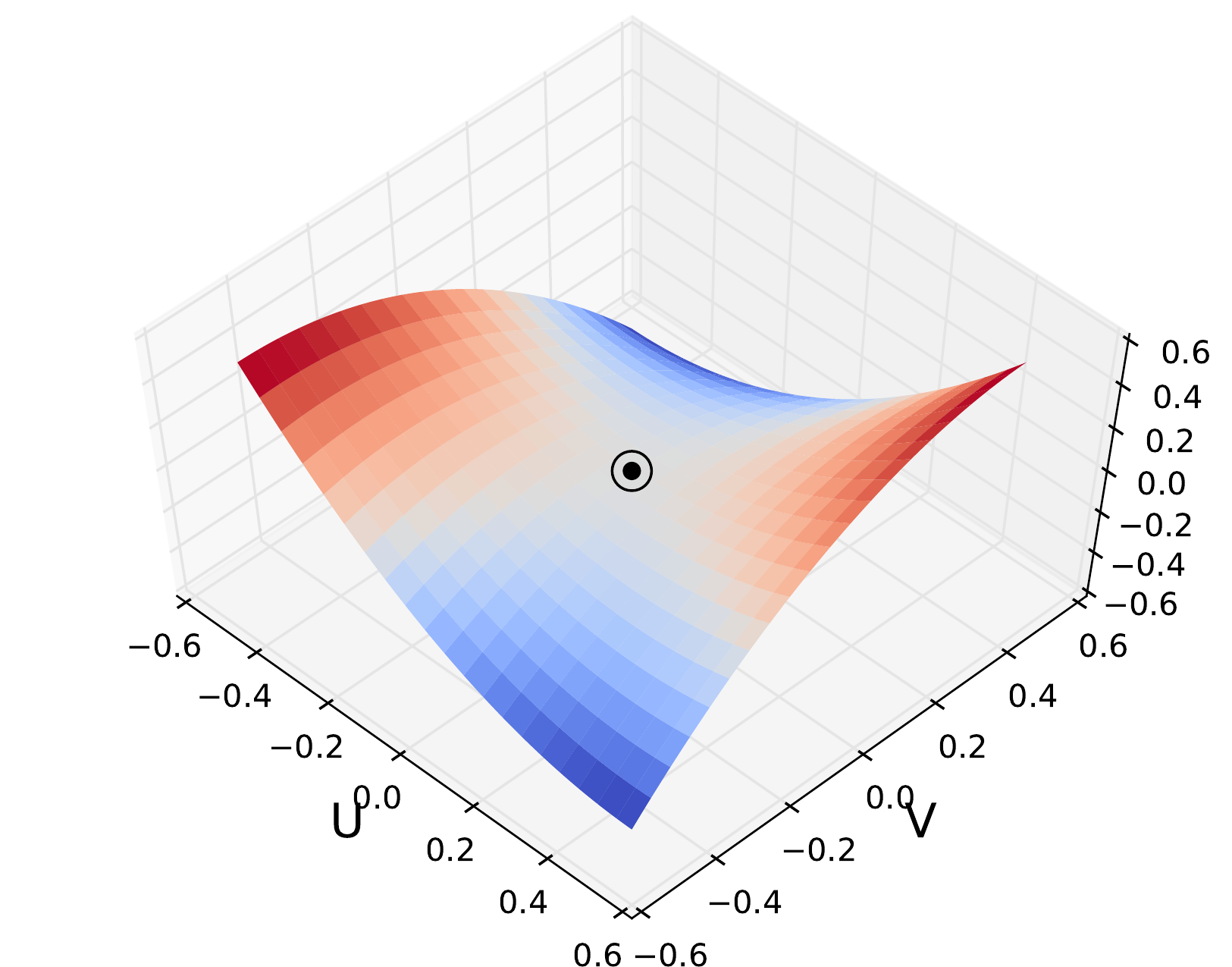}
	\caption{Rotated saddle ($u^2-v^2+2uv$)}
\end{subfigure}
\begin{subfigure}{.3\linewidth}
	\centering
	\includegraphics[width=0.8\linewidth]{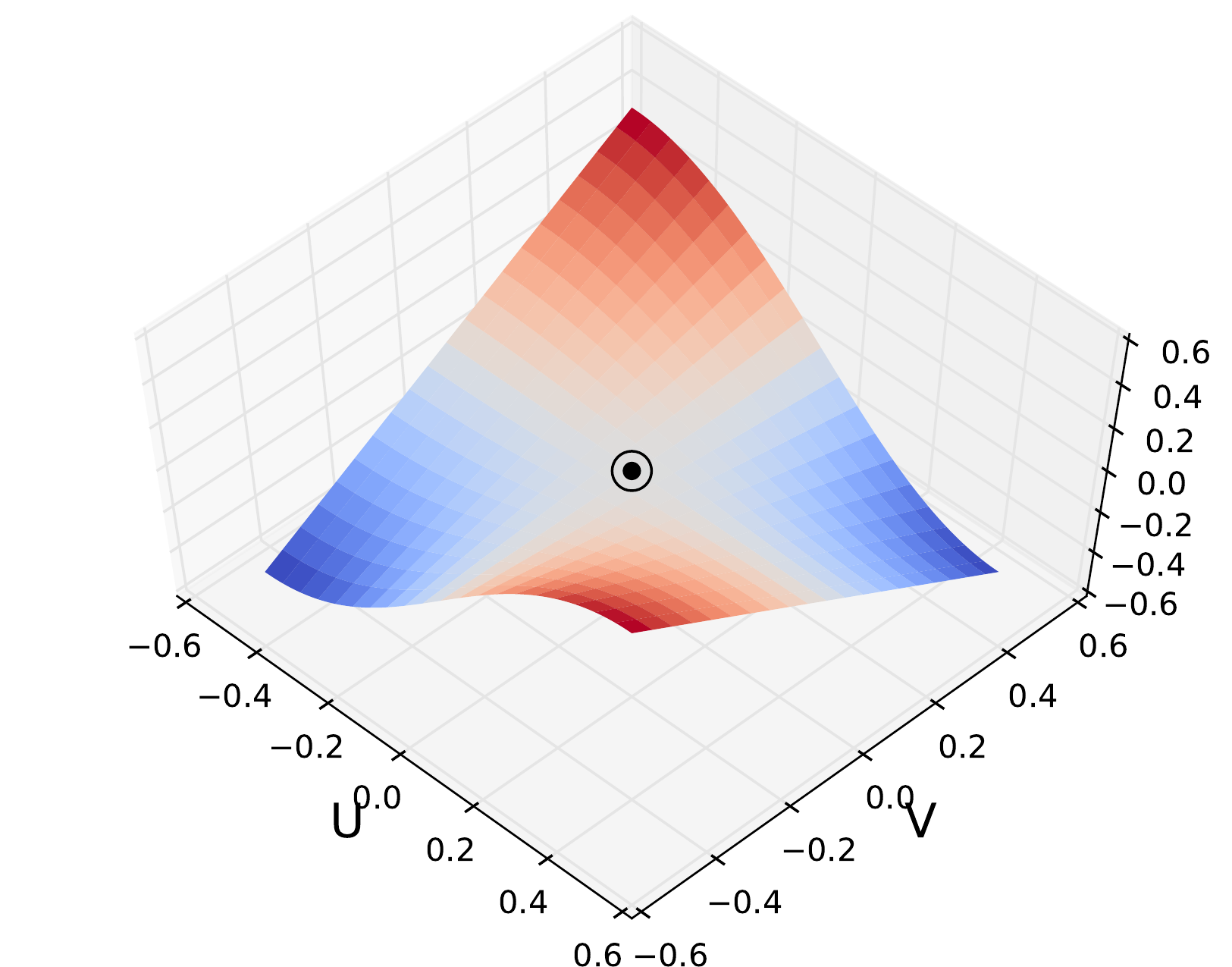}
	\caption{Seesaw ($-v\sin(\pi u)$)}
\end{subfigure}
\begin{subfigure}{.3\linewidth}
	\centering
	\includegraphics[width=0.8\linewidth]{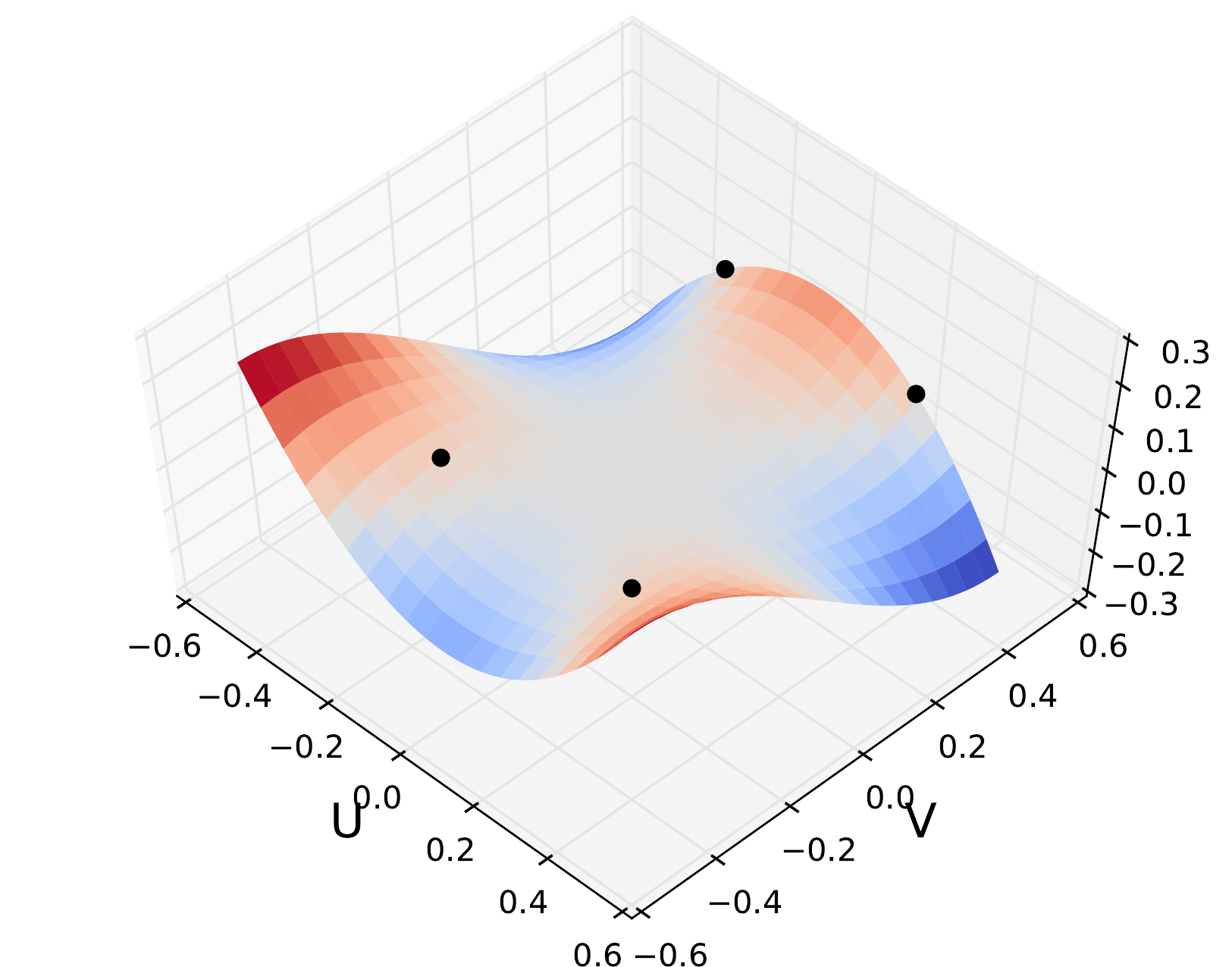}
	\caption{Monkey saddle ($v^3-3vu^2$)}
\end{subfigure}
\begin{subfigure}{.3\linewidth}
	\centering
	\includegraphics[width=0.8\linewidth]{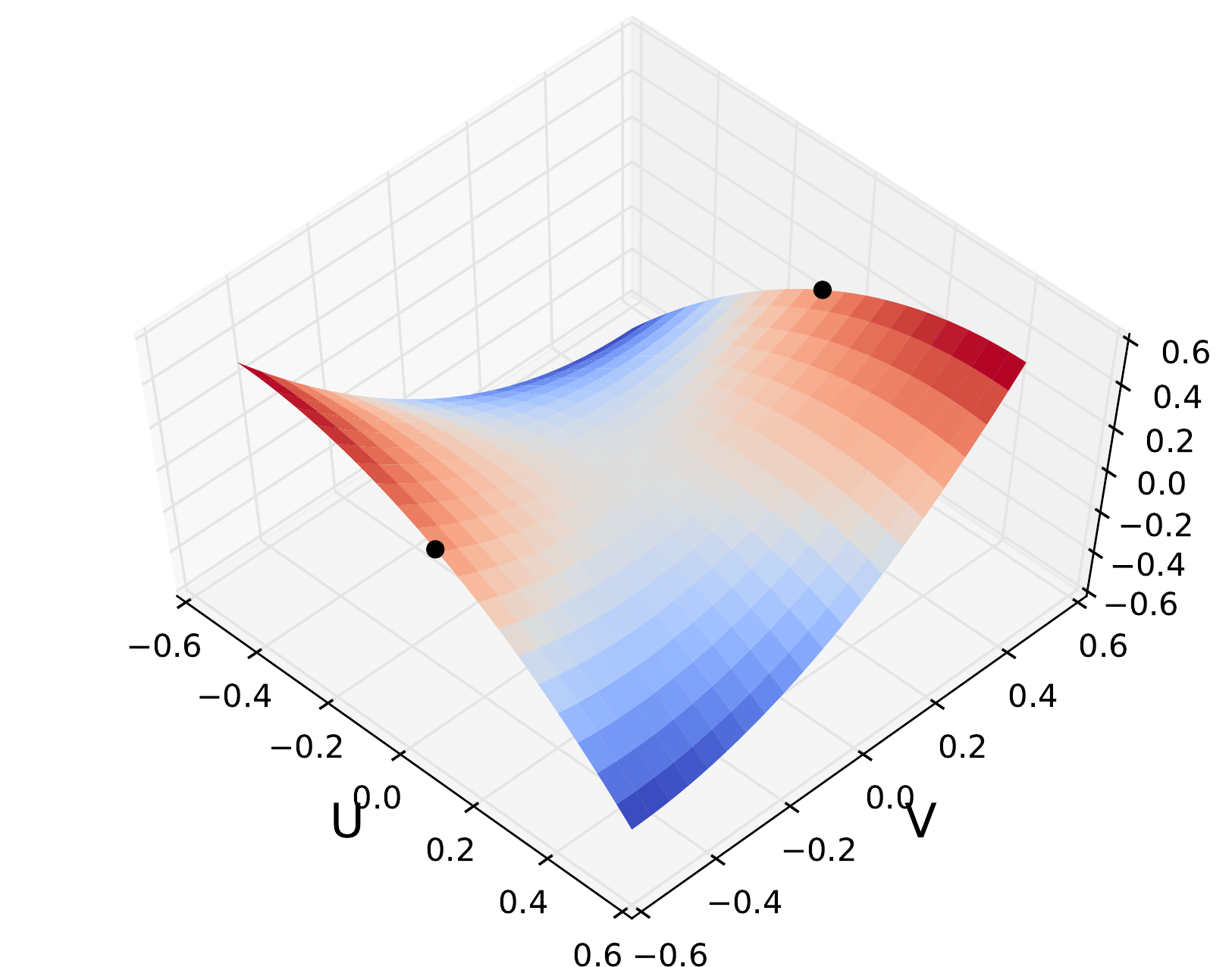}
	\caption{Anti-saddle ($-u^2+v^2+2uv$)}
\end{subfigure}
\begin{subfigure}{.3\linewidth}
	\centering
	\includegraphics[width=0.8\linewidth]{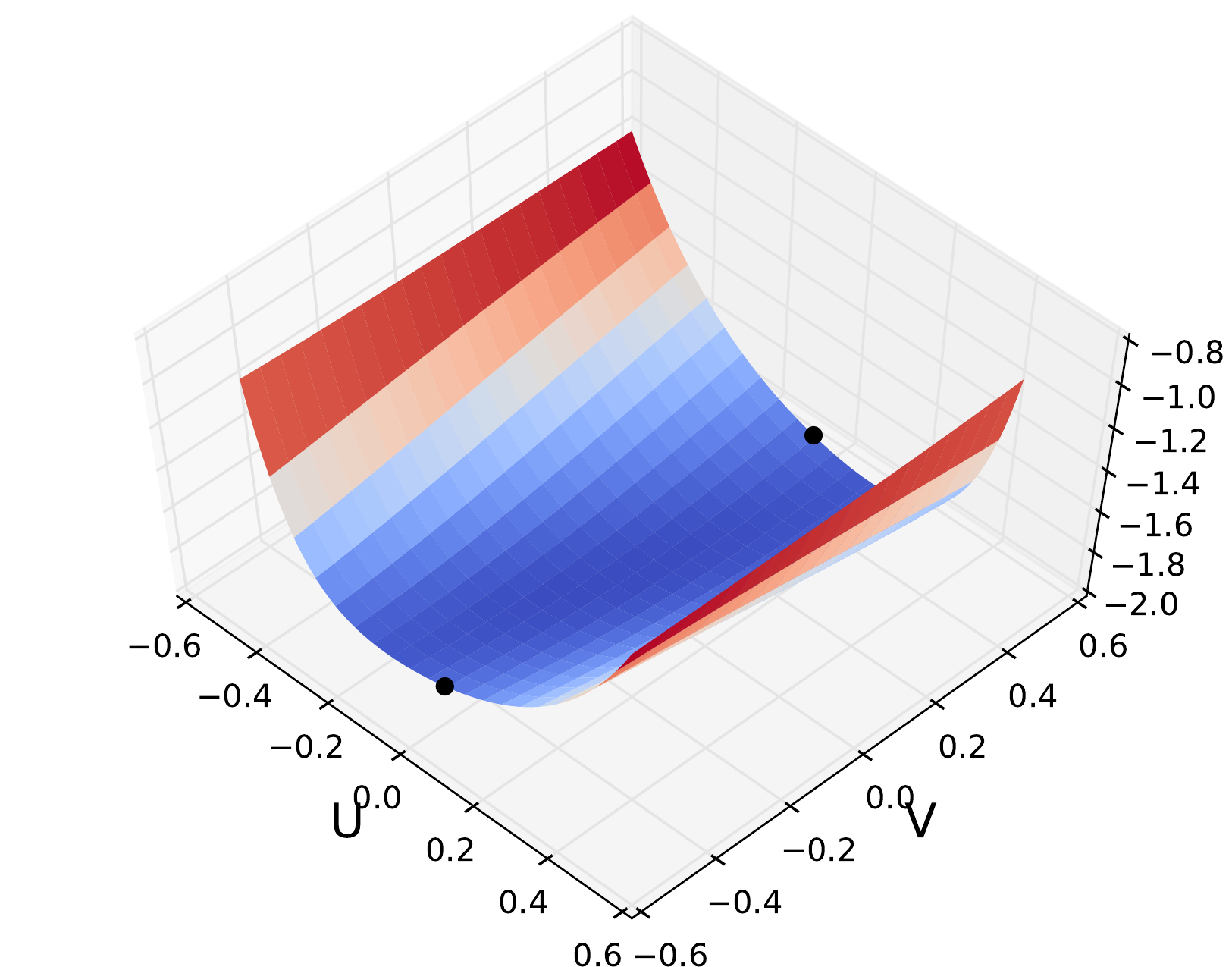}
	\caption{Weapons \citep{danskin1967theory} \\
	($ e^{-10(u+.5)e^{-(v+.5)}}+e^{-10(.5-u)e^{v-.5}}$)}
\end{subfigure}
\caption{Saddle points (empty black circles) and minimax points (filled black circles).
Surfaces (a),(b),(c) have both a saddle and minimax point at $(0,0)$,
whereas surfaces (d),(e),(f) do not have global saddle points but only minimax points.
All surfaces have a critical point at $(0,0)$.
}
\label{fig:simple}
\end{figure*}

Despite the fact that saddle points and minimax points are conceptually different, 
many machine learning applications in the literature do not distinguish the two. 
This is a mistake, because a local saddle point is only an equilibrium point and 
is not the robust or worst case solution that problem may be seeking. 
Furthermore, most papers have used the alternating gradient descent method
\begin{equation}\label{eq:alternating}
u \leftarrow u - \rho \nabla_u f(u,v),\;\;\mathrm{and}\;\; v \leftarrow v + \eta \nabla_v f(u,v).
\end{equation}
\if0
or the equivalent gradient descent method
\[
\left(\begin{array}{c} u\\v\end{array}\right)
 \leftarrow 
 \left(\begin{array}{c} u\\v\end{array}\right) - \left(\begin{array}{c}\rho \nabla_u f(u,v)\\ -\eta \nabla_v f(u,v)\end{array}\right).
\]
\fi
Alternating descent fails to find minimax points even for 2-dimensional examples (d)-(f)
in Fig.~\ref{fig:simple} as we show empirically in the Sec.~\ref{sec:exp simple}.
To explain the reason for failure, let's define the inner {maximum value}  
$\phi(u):=\max_{v\in \Vc} f(u,v)$  
and the corresponding maximum points $R(u):=\arg\max_{v\in \Vc} f(u,v)$. 
The main reason for failure is that the solution $R(u)$ may not be unique and can
be discontinuous w.r.t. $u$. For example, in Fig.~\ref{fig:simple} (e), we have
$R(u)=\{-0.5\}$ for $u<0$ and $R(u)=\{+0.5\}$ for $u>0$. 
\if0
\[
R(u)=\left\{\begin{array}{cc}
\{-0.5\} & (u<0)\\ \{-0.5,0.5\} & (u=0)\\
\{0.5\} & (u>0)\end{array}\right. .
\]
\fi
This discontinuity at $u=0$ makes it impossible for a gradient descent-type method
to keep track of the true inner maximization solution as $v$ has to jump between 
$\pm 0.5$.\footnote{Also note that a 
gradient descent-type algorithms will diverge away from $(0,0)$ which is 
an anti-saddle, i.e., $f$ is concave-convex at $(0,0)$ instead of convex-concave.}

In this paper, we propose a $K$-beam approach that tracks $K$ candidate solutions 
(or ``beams'') of the inner maximization problem to handle the discontinuity. 
The proposed $\epsilon$-subgradient algorithm (Algs.~\ref{alg:proposed} and ~\ref{alg:descent direction})
generalizes the alternating gradient-descent method ($K$=1) and 
also exact subgradient methods. 
In the analysis, we prove that it can find minimax points if the inner problem
$\max_{v\in\Vc} f(u,v)$ can be approximated well by $\max_{v \in A} f(u,v)$
over a finite set $A$ at each $u$, summarized by Theorem~\ref{thm:main}
which is the main result of analysis. 
For the purpose of analysis we assume that $f$ is convex in $u$ similar to 
the majority of the analyses on gradient-type algorithms. 
However, we allow $f$ to be non-concave in $v$ and have multiple local maxima,
which makes our setting much more general than that of classic saddle point problems
with convex-concave $f$ or previous work which assumed only bilinear couplings 
between $u$ and $v$ \cite{chambolle2011first,he2012convergence}. 

Practically, the algorithm can find solutions that gradient descent
cannot find with only a sublinear increase of time complexity in $K$. 
To demonstrate the advantages of the algorithm, we test the algorithm on 
the toy surfaces (Fig.~\ref{fig:simple}) for which we know the true minimax solutions.
For real-world demonstrations, we also test the algorithm on GAN problems \cite{goodfellow2014generative},
and unsupervised domain-adaptation problems \cite{ganin2015unsupervised}. %
Examples were chosen so that the performance can be measured objectively -- by 
the Jensen-Shannon divergence for GAN and by cross-domain classification error 
for domain adaptation. 
Evaluations show that the proposed $K$-beam subgradient-descent approach
can significantly improve stability and convergence speed of minimax optimization. 

The remainder of the paper is organized as follows.
We discuss related work in Sec.~\ref{sec:related work} and backgrounds in Sec.~\ref{sec:backgrounds}.
We propose the main algorithm in Sec.~\ref{sec:minimax}, and present the analysis
in Sec.~\ref{sec:analysis}. 
The results of experiments are summarized in Sec.~\ref{sec:experiments}, and
the paper is concluded in Sec.~\ref{sec:conclusions}.
Due to space limits, all proofs in Sec.~\ref{sec:analysis} and additional figures
are reported in Appendix. 
The codes for the project can be found at
{\url{https://github.com/jihunhamm/k-beam-minimax}.}

\if0
\subsection*{Contributions}
\begin{enumerate}
\item We clarify the difference between saddle-point problems and minimax problems.
\item We present concise backgrounds on fundamental properties of minimax problems
and a few known algorithms. 
\item We propose a new algorithm which address the issues of previous approaches.
\item We demonstrate the algorithm on several applications of minimax approach,
including, 
\end{enumerate}
\fi

\section{Related work}\label{sec:related work}


Following the seminal work of \citet{arrow1958studies}
(Chap.~10 of \citet{uzawa1958iterative} in particular), many researchers
have studied the questions of the convergence of (sub)gradient descent 
for saddle point problems under different stability conditions 
\cite{dem1972numerical,golshtein1972generalized,maistroskii1977gradient,
zabotin1988subgradient,nedic2009subgradient}.  
Optimization methods for minimax problems have also been studied somewhat independently. 
The algorithm proposed by \citet{salmon1968minimax}, referred to as the Salmon-Daraban
method by \citet{dem1972numerical}, 
finds continuous minimax points by solving successively larger discrete minimax
problems. The algorithm can find minimax points for a differentiable $f$ on
compact $\Uc$ and $\Vc$. 
However, the Salmon-Daraban method is impractical, as its requires 
exact minimization and maximization steps at each iteration, 
and also because the memory footprint increases linearly with iteration.
Another method of continuous minimax optimization was proposed by 
 \citet{dem1971theory,dem1974introduction}.  
The grid method, similar to the Salmon-Daraban method, iteratively solves a
discrete minimax problem to a finite precision using the $\epsilon$-steepest
descent method. 

Recently, a large number of papers tried to improve GAN models in particular 
by modifying the objective (e.g.,\citet{uehara2016generative,nowozin2016f,arjovsky2017wasserstein}),
but relatively little attention was paid to the improvement of the 
optimization itself. 
Exceptions are the multiadversarial GAN \cite{durugkar2016generative}, 
and the Bayesian GAN \cite{saatci2017bayesian}, both of which used multiple 
discriminators and have shown improved performance, although no analysis was provided. 
Also, gradient-norm regularization has been studied recently to stabilize
gradient descent \cite{mescheder2017numerics,nagarajan2017gradient,roth2017stabilizing},
which is orthogonal to and can be used simultaneously with the proposed method.
Note that there can be multiple causes of instability in minimax optimization,
and what we address here is more general and not GAN-specific.



\if0
``Unrolling'' GAN.
Let $\hat{f}(u,v)$ be an approximation of $f(u,v)$.
Let $\hat{\Phi}(u) = \max_v \hat{f}(u,v)$ and
$\hat{v}(u) = \arg\max_v \hat{f}(u,v)$.

\begin{eqnarray}
\frac{d \hat{\Phi}(u)}{du}&=&\frac{d f(u,\hat{v})}{du} = \frac{\partial f(u,\hat{v})}{\partial u}
+ \frac{\partial f(u,\hat{v})}{\partial \hat{v}} \frac{d \hat{v}}{du}.
\end{eqnarray}
\fi

\section{Backgrounds}\label{sec:backgrounds}

Throughout the paper, we assume that
$f(u,v): \Uc \times \Vc \to \mathbb{R}$ is a continuously differentiable function
in $u$ and $v$ separately. 
A general form of the minimax problem is $\inf_{u\in\Uc} \sup_{v \in \Vc} f(u,v)$.
We assume that $\Uc$ and $\Vc$ are compact and convex subsets of Euclidean spaces
such as a ball with a large but finite radius.
Since $f$ is continuous, min and max values are bounded and attainable. 
In addition, the solutions to min or max problems are assumed to be 
in the interior of $\Uc$ ad $\Vc$, enforced by adding appropriate
regularization (e.g, $\|u\|^2$ and $-\|v\|^2$) to the optimization problems if necessary. 

As already introduced in Sec.~\ref{sec:intro}, the inner maximum value and points 
are the key objects in the analysis of minimax problems.
\begin{mydef}
The {maximum value} $\phi(u)$ is $\max_{v\in \Vc} f(u,v)$.
\end{mydef}
\begin{mydef}
The corresponding {maximum points} $R(u)$ is 
$\arg\max_{v\in \Vc} f(u,v)$, i.e., $R(u)=\{v \in \Vc\;|\; f(u,v)=\max_{v\in \Vc} f(u,v)\}$.
\end{mydef}
Note that $\phi(u)$ and $R(u)$ are functions of $u$. With abuse of notation,
the $R(\Uc)$ is the union of maximum points for all $u \in \Uc$, i.e., 
$R(\Uc):= \bigcup_{u \in \Uc} R(u)$

\if0
For example, the maximum points of the examples in Fig.~\ref{fig:simple} are 
(a) $R(\Uc)=\{0\}$, (b) $R(\Uc)=[-0.5,0.5]$,
(c) $R(\Uc)=\{-0.5,0.5\}$ except for $R(0)=[-0.5,0.5]$, (d) $R(\Uc)=[-0.5,-0.25] \cup \{0.5\}$,
(e) $R(\Uc)=\{-0.5,0.5\}$, and (f) $R(\Uc)=\{-0.5,0.5\}$.
\fi

As a generalization, the $\epsilon$-maximum points $R^\epsilon(u)$ are the points 
whose values are $\epsilon$-close to the maximum: \\
$R^\epsilon(u):=\{v\in \Vc\;|\; \max_{v\in \Vc} f(u,v)- f(u,v)\leq \epsilon\}$.

\begin{mydef}
$S(u)$ is the set of local maximum points 
\begin{eqnarray*}
S(u)&:=&\{v_0\in \Vc\;|\;\exists r>0\;\; s.t.\;\; \forall v\in\Vc,\\
&&\;\;\;\;\;\;\|v_0-v\|\leq r \Rightarrow f(u,v_0)\geq f(u,v)\}.
\end{eqnarray*}
\end{mydef}
Note that $\nabla_v f(u,v)=0$ for $v \in S(u)$ due to differentiability assumption,
and that $R(u) \subseteq S(u)$. 

\begin{mydef}
$\min_{u\in\Uc} \max_{v\in A} f(u,v)$  is a discrete minimax problem if 
$A$ is a finite set $A:=\{v^1,...,v^K\} \subseteq \Vc$.
\end{mydef}
We accordingly define $\phi_A(u)$, $R_A(u)$ and $R^\epsilon_A(u)$ by
$\phi_A(u):=\max_{v \in A} f(u,v)$, and 
$R^\epsilon_A(u):=\{v\in A\;|\; \max_{v\in A} f(u,v)- f(u,v)\leq \epsilon\}$.

We also summarize a few results we will use, which can be found in convex analysis
textbooks such as \citet{hiriart2001fundamentals}.
\begin{mydef}
An $\epsilon$-subgradient of a convex function $\phi(u)$ at $u_0$ is $g\in \mathbb{R}^d$
that satisfies for all $u$
\[
\phi(u)-\phi(u_0) \geq \langle g,\; u - u_0\rangle - \epsilon. 
\]
The $\epsilon$-subdifferential $\partial_\epsilon \phi(u_0)$ is the set of all 
$\epsilon$-subgradients at $u_0$.
\end{mydef}

Consider the convex hull $\mathrm{co}\{\cdot\}$ of the set of gradients.
\begin{lemma}[Corollary 4.3.2, Theorem 4.4.2, \citet{hiriart2001fundamentals}]
Suppose $f(u,v)$ is convex in $u$ for each $v \in A$. 
Then $\partial \phi_A(u)=\mathrm{co}\{\cup_{v \in A} \nabla_u f(u,v)\}$.
Similarly, suppose $f(u,v)$ is convex in $u$ for each $v \in \Vc$. Then $\partial \phi(u)=\mathrm{co}\{\cup_{v \in \Vc} \nabla_u f(u,v)\}$.
\end{lemma}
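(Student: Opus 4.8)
These are standard facts about the subdifferential of a pointwise maximum (resp.\ supremum) of convex functions --- note first that $\phi_A$ and $\phi$ are themselves convex, being pointwise max/sup of the convex maps $f(\cdot,v)$ --- so the plan is to reconstruct those arguments, treating the finite case $\phi_A$ first and then obtaining the supremum case $\phi$ by a compactness step. A preliminary reduction: since $A$ is finite and each $f(\cdot,v)$ is continuous, for $u'$ near $u$ the maximum over $A$ is attained only within the active set $A(u):=\{v\in A:\ f(u,v)=\phi_A(u)\}$, so locally $\phi_A=\max_{v\in A(u)}f(\cdot,v)$; the identity to establish is thus $\partial\phi_A(u)=\mathrm{co}\{\nabla_u f(u,v):\ v\in A(u)\}$ (gradients at inactive $v$ contribute nothing near $u$), and analogously $R(u)$ replaces $\Vc$ in the second identity.

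The inclusion ``$\supseteq$'' is easy: for $v\in A(u)$, convexity and differentiability of $f(\cdot,v)$ give $\phi_A(u')\ge f(u',v)\ge f(u,v)+\langle\nabla_u f(u,v),u'-u\rangle=\phi_A(u)+\langle\nabla_u f(u,v),u'-u\rangle$ for all $u'$, so $\nabla_u f(u,v)\in\partial\phi_A(u)$, and convexity of the subdifferential gives the convex hull. For ``$\subseteq$'', set $C:=\mathrm{co}\{\nabla_u f(u,v):v\in A(u)\}$, compact and convex, and suppose $g\in\partial\phi_A(u)\setminus C$. A separating hyperplane yields a direction $d$ with $\langle g,d\rangle>m:=\max_{v\in A(u)}\langle\nabla_u f(u,v),d\rangle$. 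The one-sided directional derivative is $\phi_A'(u;d)=m$: for small $t>0$ the active set at $u+td$ lies in $A(u)$, hence $\phi_A(u+td)=\max_{v\in A(u)}\bigl(f(u,v)+t\langle\nabla_u f(u,v),d\rangle\bigr)+o(t)=\phi_A(u)+tm+o(t)$. But $g\in\partial\phi_A(u)$ forces $\phi_A'(u;d)\ge\langle g,d\rangle>m$ --- a contradiction --- so $\partial\phi_A(u)\subseteq C$.

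For $\phi(u)=\sup_{v\in\Vc}f(u,v)$: the supremum is attained on $R(u)$ by compactness of $\Vc$, and $\{\nabla_u f(u,v):v\in R(u)\}$ is compact by continuity of $\nabla_u f$, so its convex hull is already closed. The crux is a Danskin-type directional-derivative formula $\phi'(u;d)=\max_{v\in R(u)}\langle\nabla_u f(u,v),d\rangle$ \citep{danskin1967theory}; granting it, the finite-case argument carries over, since for a convex function $\phi'(u;\cdot)$ is exactly the support function of $\partial\phi(u)$ while the right-hand side is the support function of $\mathrm{co}\{\nabla_u f(u,v):v\in R(u)\}$, forcing the two compact convex sets to coincide. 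I expect this directional-derivative formula to be the only real obstacle: the bound $\phi'(u;d)\ge\langle\nabla_u f(u,v),d\rangle$ for each fixed $v\in R(u)$ is immediate from the difference quotient, whereas the matching upper bound requires showing that any sequence of (near-)maximizers at $u+td$ clusters into $R(u)$ as $t\downarrow0$, using compactness of $\Vc$ and uniform continuity of $f$ near $u$. The remaining ingredients --- separation, convexity of subdifferentials, and the support-function / directional-derivative duality for convex functions --- are routine.
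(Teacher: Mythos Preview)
The paper does not prove this lemma at all: it is quoted verbatim as a citation to \citet{hiriart2001fundamentals}, so there is no ``paper's proof'' to compare against. Your sketch is correct and is essentially the standard textbook argument (directional derivatives plus separating hyperplane for the reverse inclusion, Danskin's theorem for the compact supremum), so you have simply supplied what the paper delegated to the reference.

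One substantive point worth flagging: you correctly observe that the statement as printed in the paper is imprecise. The subdifferential of a finite max at $u$ is the convex hull of gradients over the \emph{active} indices $R_A(u)$, not over all of $A$; likewise $\partial\phi(u)$ involves $R(u)$, not all of $\Vc$. Your preliminary reduction handles this, and indeed the paper uses the correct active-set version elsewhere (e.g.\ in Lemma~\ref{lem:stationary} and in Alg.~2), so the looseness here appears to be a transcription slip rather than a genuine claim.
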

\if0
\begin{lemma}[Corollary 4.3.2, \citet{hiriart2001fundamentals}]
Suppose $f(u,v)$ is convex in $u$ for each $v \in A$. 
Then $\partial \phi_A(u)=\mathrm{co}\{\cup_{v \in A} \nabla_u f(u,v)\}$.
\end{lemma}

\begin{lemma}[Theorem 4.4.2, \citet{hiriart2001fundamentals}]
Suppose $f(u,v)$ is convex in $u$ for each $v \in \Vc$. Then $\partial \phi(u)=\mathrm{co}\{\cup_{v \in \Vc} \nabla_u f(u,v)\}$.
\end{lemma}
\fi

\if0
\begin{lemma}\label{lem:subdifferential}
If $f(u,v)$ is convex in $u$ for each $v \in A$, then
$\mathrm{Co}\{\nabla_u f(u,v)\;|\;v \in A_i\}$ is the subdifferential of $\phi_A(u)$.
\end{lemma}
\begin{proof}
\end{proof}

\subsection*{Directional derivatives}

Recall that the directional derivative of a function $h: \mathbb{R}^d \to \mathbb{R}$ 
at $u_0$ in the direction of $g\in\mathbb{R}^d$, $(\|g\|=1)$ is  
\[
D_g h(u_0) = \lim_{c\to +0} \frac{h(u_0+cg)-h(u_0)}{c}.
\]
\begin{theorem}\label{thm:derivative}
The $\epsilon$-directional derivative of $\phi(u)$ at $u_0$ in the direction of $g$ is $(\|g\|=1)$
\begin{equation}
D_g^{\epsilon} \phi(u_0) = \max_{v \in R^\epsilon(u_0)}\; \langle \nabla_u f(u_0,v)\;,\;g\rangle.
\end{equation}
\end{theorem}
\fi

\begin{mydef}
A point $u$ is called an $\epsilon$-stationary point of $\phi_{A}(u)$ if 
$\max_{v \in R^\epsilon_{A}} \langle \nabla_u f(u,v),\;g\rangle \geq 0$
for all $g \in \mathbb{R}^d$.
\end{mydef}


\begin{lemma}[Chap 3.6, \citet{dem1974introduction}]\label{lem:stationary}
A point $u$ is an $\epsilon$-stationary point of $\phi(u)$
if and only if $0 \in \mathrm{co}\{\cup_{v \in R^\epsilon(u)} \nabla_u f(u,v)\}$.
\end{lemma}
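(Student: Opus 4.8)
The plan is to read the statement as the standard separating-hyperplane dichotomy applied to the set $C:=\mathrm{co}\{\cup_{v\in R^\epsilon(u)}\nabla_u f(u,v)\}$, taking as the working definition (the $A=\Vc$ analogue of the $\epsilon$-stationarity definition stated above) that $u$ is an $\epsilon$-stationary point of $\phi$ iff $\max_{v\in R^\epsilon(u)}\langle\nabla_u f(u,v),g\rangle\ge 0$ for every $g\in\mathbb{R}^d$. First I would reduce the left-hand condition to a statement purely about $C$: since a linear functional attains its maximum over a convex hull already on the generating set, $\max_{v\in R^\epsilon(u)}\langle\nabla_u f(u,v),g\rangle=\max_{h\in C}\langle h,g\rangle$, i.e.\ the support function of $C$ evaluated at $g$. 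So $u$ is $\epsilon$-stationary iff this support function is nonnegative for all $g$, and it suffices to prove: $0\in C$ iff $\max_{h\in C}\langle h,g\rangle\ge 0$ for all $g$.

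The implication $0\in C\Rightarrow\epsilon\text{-stationary}$ is then immediate, since $\max_{h\in C}\langle h,g\rangle\ge\langle 0,g\rangle=0$ for every $g$. For the converse I would argue by contraposition. Assume $0\notin C$. Provided $C$ is nonempty, closed and convex, let $p$ be the Euclidean projection of $0$ onto $C$; then $p\neq 0$, and the projection (variational) inequality $\langle 0-p,\,h-p\rangle\le 0$ for all $h\in C$ rearranges to $\langle h,p\rangle\ge\|p\|^2>0$ for all $h\in C$. Choosing $g=-p$ gives $\max_{h\in C}\langle h,g\rangle\le -\|p\|^2<0$, so $u$ fails the $\epsilon$-stationarity condition. (One could instead quote the separating hyperplane theorem directly, but the explicit projection exhibits a quantitative gap $\|p\|^2$ that is convenient for the later convergence analysis; note also that the identical argument with $A$ in place of $\Vc$ yields the $\phi_A$ version used elsewhere.)

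What remains, and what I expect to be the only real obstacle, is the topological bookkeeping that makes $C$ nonempty, closed and convex. Convexity holds by construction, and nonemptiness because the inner max is attained on the compact set $\Vc$, so $R(u)\subseteq R^\epsilon(u)\neq\emptyset$. Closedness is the delicate point: $R^\epsilon(u)=\{v\in\Vc:\phi(u)-f(u,v)\le\epsilon\}$ is the intersection of the compact set $\Vc$ with the preimage of $(-\infty,\epsilon]$ under the continuous map $v\mapsto\phi(u)-f(u,v)$, hence compact; then $v\mapsto\nabla_u f(u,v)$ is continuous by the continuous-differentiability hypothesis, so $\{\nabla_u f(u,v):v\in R^\epsilon(u)\}$ is compact; and in $\mathbb{R}^d$ the convex hull of a compact set is again compact by Carathéodory's theorem. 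Thus $C$ is compact, in particular closed, and the projection argument above applies. If $0$ merely lay in the closure of $C$ rather than in $C$ itself the converse could break, so this compactness — which rests entirely on the compactness of $\Vc$ and continuity of $f$ assumed in Section~\ref{sec:backgrounds} — is genuinely where the work is.
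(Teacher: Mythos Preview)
The paper does not supply its own proof of this lemma; both in the main text and in the Appendix it is simply quoted with attribution to Chapter~3.6 of the cited reference. Your argument is correct and is essentially the classical one: the $\epsilon$-stationarity condition is precisely nonnegativity of the support function of $C$ at every direction $g$, and the dichotomy $0\in C$ versus $0\notin C$ is settled by separation, your projection construction being a constructive form of the separating-hyperplane theorem that also exhibits the quantitative gap $\|p\|^2$. The compactness bookkeeping you supply---$R^\epsilon(u)$ closed in the compact set $\Vc$, $\nabla_u f(u,\cdot)$ continuous, Carath\'eodory for the hull---is exactly what is needed and is guaranteed by the standing assumptions of Section~\ref{sec:backgrounds}.
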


\if0
\begin{mydef}
A unit vector $g \in \mathbb{R}^d$ is called the $\epsilon$-steepest descent direction
 of $\phi(u)$ if
\[
D^{\epsilon}_g \phi(u) = \min_{\|g\|=1}\; D_g^{\epsilon} \phi(u).
\]
\end{mydef}
If $|R^{\epsilon}(u)|=1$ then this direction is the negative of the gradient direction.

\begin{lemma}[Demyanov Chap 3.6]\label{lem:steepest descent}
If $\min_{\|g\|=1} D^{\epsilon}_g(u) < 0$, than $\phi(u)$ has a unique direction of
$\epsilon$-steepest descent $g^\ast=-z^\ast/\|z^\ast\|$, where $z^\ast$ is the a point
in $L_{R^{\epsilon}(u)}$ closest to the origin,
i.e.,
$\|z^\ast\| = \min_{z\in L_{R^{\epsilon}(u)}} \|z\|$.
\end{lemma}
\fi

\section{Algorithm}\label{sec:minimax}

The alternating gradient descent method predominantly used in the literature
fails when the inner maximization $\max_{v\in\Vc} f(u,v)$ has more than
one solution, i.e., $R(u)$ is not a singleton.
To address the problem, we propose the $K$-beam method to simultaneously track
the maximum points $R(u)$ by keeping the candidate set $A=(v^1,...,v^K)$ for some large $K$. (The choice for $K$ will be discussed in Analysis and Experiments.) 
This approach can be exact, if the maximum points over the whole domain $R(\Uc)$ is finite,
as in examples (a),(e) and (f) of Fig.~\ref{fig:simple} (see Appendix.)
In other words, the problem becomes a discrete minimax problem. 
More realistically, the maximum points $R(\Uc)$ is infinite but $R(u)$ can 
still be finite for each $u$, as in all the examples of Fig.~\ref{fig:simple} except (c). 
At $i$-th iteration, the $K$-beam method updates the current candidates
$A_i=(v^1_i,...,v^K_i)$ such that the discrete maximum $\phi_{A_i}(u)$ 
is a good approximation to the true $\phi(u)$. 
In addition, we present an $\epsilon$-subgradient algorithm that generalizes
exact subgradient algorithms. 

\subsection{Details of the algorithm}

\begin{algorithm}[htb] \caption{$K$-beam $\epsilon$-subgradient descent} \label{alg:proposed}
{Input}: $f, K, N, (\rho_i), (\eta_{i}), (\epsilon_i)$\\
{Output}: $u_N, A_N$ \\
Initialize $u_0, A_0=(v^1_0,...,v^K_0)$\\
{Begin}
\begin{algorithmic}
\FOR{$i=1,\;...\;,N$}
	\STATE\hspace{-0.075in}{{\it Min step:}}
	\STATE{Update $u_i = u_{i-1} + \rho_i\; g(u_i,A_i,\epsilon_i)$ where
	$g$ is a descent direction from Alg.~\ref{alg:descent direction}.}	
	\STATE\hspace{-0.075in}{{\it Max step:}}
	\FOR{$k=1,\;...\;,K$ in parallel}
		\STATE{Update $v_i^k \leftarrow  v_{i-1}^{k} + \eta_{i}\; \nabla_v f(u_i,v_{i-1}^k)$.}		
	\ENDFOR
	\STATE{Set $A_i = (v^1_i,\;...\;,v^K_i)$.}
\ENDFOR
\end{algorithmic}
\end{algorithm}

Alg.~\ref{alg:proposed} is the main algorithm for solving minimax problems. 
At each iteration, the algorithm alternates between the min step and the max step.
In the min step, it approximately minimizes $\phi(u)$ by following a
subgradient direction $z \in \partial_{\epsilon} \phi_{A_i}(u)$.
In the max step, it updates $A_i=\{v^1_i,...,v^K_i\}$ to track the local
maximum points of $f(u,\cdot)$ so that the approximate subdifferential
$\partial_{\epsilon} \phi_{A_i}(u)$ remains
close to the true subdifferential $\partial \phi(u)$.

The hyperparameters of the algorithm are the beam size ($K=|A|$), 
the total number of iterations ($N$),
and the step size schedules for min step $(\rho_i)$ and for max step $(\eta_{i})$
and the approximation schedule $(\epsilon_i)$.

\begin{algorithm}[htb] \caption{Descent direction} \label{alg:descent direction}
{Input}: $f, u, A=(v^1,...,v^K), \epsilon$\\
{Output}: $g$ \\
{Begin}
\begin{algorithmic}
\STATE{Find $k_{\max} = \arg\max_{1 \leq k \leq K} f(u,v^k)$.}
\STATE{Find $\{v^{k_1},...,v^{k_n}\}=R^{\epsilon}_A(u)=\{v\in A\;|\; f(u,v^{k_{\max}})-f(u,v^k)\leq \epsilon \}$.} 
\STATE{Compute $z_j = \nabla_u f(u,v^{k_j})$ for $j=1,..,n$.}
\STATE\hspace{-0.075in}{\it Optional stopping criterion:}
\IF{$0 \in \mathrm{co}\{ z_1 \cup ... \cup z_n\}$}
\STATE{Found a stationary point. Quit optimization.}
\ENDIF
\STATE\hspace{-0.075in}{\it Decent direction:}
\IF{$n=1$}
\STATE{Return $g=-z_1$.}
\ELSE
\STATE{Randomly choose $z \in \mathrm{co}\{ z_1 \cup ... \cup z_n\}$ and return 
$g= -z$.}
\ENDIF
\end{algorithmic}
\end{algorithm}

Alg.~\ref{alg:descent direction} is the subroutine for finding a descent direction. 
If $\epsilon$=0, this subroutine identifies the best candidate $v^{k_\mathrm{max}}$
among the current set $A_i$ and returns its gradient $\nabla_u f(u,v^{k_\mathrm{max}})$.
If $\epsilon>0$, it finds $\epsilon$-approximate candidates and returns any direction in
the convex hull of their gradients.
We make a few remarks below. 
\begin{itemize}
\itemsep0em 
\item Alternating gradient descent (\ref{eq:alternating}) is a special case of
the $K$-beam algorithm for $K=1$ and $\epsilon_1=\epsilon_2=...=0$.
\item As will be shown in the experiments, the algorithm usually performs better 
with increasing $K$. However, increase in computation can be made negligible, 
since the $K$ updates in the max step can be performed in parallel.
\item One can use different schemes for the step sizes $(\rho_i), (\eta_{i})$ and
$(\epsilon_i)$. For the purpose of analysis, we use non-summable but 
square-summable step size, e.g., $1/i$. Any decreasing sequence $(\epsilon_i)\to 0$
can be used. 
\item The algorithm uses subgradients since the maximum value $\phi(u)$ is 
non-differentiable even if $f$ is, when there are more than one maximum point
$(|R(u)|>1)$ \cite{danskin1967theory}.
In practice, when $\epsilon$ is close to 0, the approximate
maximum set $R^\epsilon_A(u)$ in Alg.~\ref{alg:descent direction} is often a singleton
in which case the descent direction from Alg.~\ref{alg:descent direction}
is simply the gradient $-\nabla_u f(u,v)$. 
\item The convergence of the algorithm (Sec.~\ref{sec:analysis}) is not affected by 
the random choice $z\in \mathrm{co}\{ z_1 \cup ... \cup z_n\}$ in 
Alg.~\ref{alg:descent direction}. 
In practice, the random choice can help to avoid local minima if $f$ is not convex. 

\item Checking the stopping criterion $0 \in \mathrm{co}\{\cup_j z_j\}$ can be 
non-trivial (see Sec.~\ref{sec:stopping criteria}), and may be
skipped in practice. 

\end{itemize}

\section{Analysis}\label{sec:analysis}

We analyze the conditions under which Alg.~\ref{alg:proposed} and 
Alg.~\ref{alg:descent direction} find a minimax point. 
We want the finite set $A_i$ at $i$-th iteration 
to approximate the true maximum points $R(u_i)$ well,
which we measure by the following two distances.
Firstly, we want the following one-sided Hausdorff distance
\begin{equation}\label{eq:dist1}
d_H(R(u_i),A_i):=\max_{v \in R(u_i)} \min_{v' \in A_i} \|v-v'\|
\end{equation}
to be small, i.e., each global maximum $v \in R(u_i)$ is close to at least one
candidate in $A_i$.
Secondly, we also want the following one-sided Hausdorff distance
\begin{equation}\label{eq:dist2}
d_H(A_i,S(u_i)):=\max_{v' \in A_i} \min_{v \in S(u_i)} \|v-v'\|
\end{equation}
to be small, where $S(u_i)$ is the local maxima, 
i.e., each candidate is close to at least one local maximum $v \in S(u_i)$.
This requires that $K$ is at least as large as $\max_u |S(u)|$. 

We discuss the consequences of these requirements more precisely in the rest
of the section. 
For the purpose of analysis, we will make the following additional assumptions.\\
{\bf Assumptions.}\emph{
$\phi(u)$ is convex and achieves the minimum $\phi^\ast=\phi(\us)$.
Also, $f(u,v)$ is $l$-Lipschitz in $v$ for all $u$, and 
$\nabla_u f(u,v)$ is $r$-Lipschitz in $v$ for all $u$.}

{\it Remark on the assumption.} 
Note that we only assume the convexity of $f$ over $u$ and not the concavity
over $v$, which makes this setting more general than that of classic analyses which
assume the concavity over $v$, or that of restricted models with a bilinear coupling
$f(u,v) = f_{\mathrm{convex}}(u) + g_{\mathrm{concave}}(v) + u^T A v$. 
While we allow $f$ to be non-concave in $v$ and have multiple local maxima, 
we also require $f$ and $\nabla_u f$ to be Lipschitz in $v$ for the purpose of analysis.  

\subsection{Finite $R(u)$, exact max step}
If $R(u)$ is finite for each $u$, and if the maximization in the max step can be done
exactly as assumed in the Salmon-Daraban method \cite{salmon1968minimax}, 
then the problem is no more difficult than a discrete minimax problem.
\begin{lemma}\label{lem:zero dist}
Suppose $R(u)$ is finite at $u$.
If $d_H(R(u),A)=0$, then $R(u) = R_{A}(u)$ and therefore 
$\partial \phi(u) = \partial \phi_{A}(u)$.
\end{lemma}
\if0
\begin{proof}
Since $A \subseteq \Vc$ , $\max_{v\in\Vc}f(u,v)=\max_{v\in R(u)} f(u,v)\geq \max_{v \in A} f(u,v)$. 
By $d_H(R(u),A)=0$, we have $R(u) \subseteq A$ and therefore for each $v \in R(u)$,
$f(u,v)=\max_{v\in \Vc} f(u,v) = \max_{v\in A}f(u,v)$, so $v \in R_A(u)$.
Conversely, if $v \in R_{A}(u)$ then $f(u,v)=\max_{v \in A}f(u,v)=\max_{v \in \Vc}f(u,v)$, so $v \in R(u)$. 
The remainder of the theorem follows from the definition of subdifferentials.
\end{proof}
\fi
Since the subdifferential is exact, Alg.~\ref{alg:proposed}
finds a minimax solution as does the subgradient-descent method 
with the true $\phi(u)$. We omit the proof and present a more general theorem shortly. 
\if0
\begin{theorem}
Suppose for all $i=1,2,...$, $R(u_i)$ is finite and $d_H(R(u_i),A_i(u_i))=0$. 
Also suppose $\sum_{i=1}^{\infty} \rho_i = \infty$ $(\rho_i \geq 0)$,  and
$\sum_{i=1}^\infty \rho_i^2 < \infty$.
If $\phi(u):\mathbb{R}^d \to \mathbb{R}$ is convex and achieves the minimum
$\phi^\ast=\phi(\us)$, then Alg.~\ref{alg:proposed} finds a minimax solution
$\phi(u_i) \to \phi(\us)$.
\end{theorem}
This is a consequence of the algorithm being a subgradient descent on $\phi(u)$
with a non-summable diminishing step size. See Theorem~\ref{thm:main} for a proof.
\fi

\begin{figure*}[thb]
\centering
\includegraphics[width=0.99\linewidth]{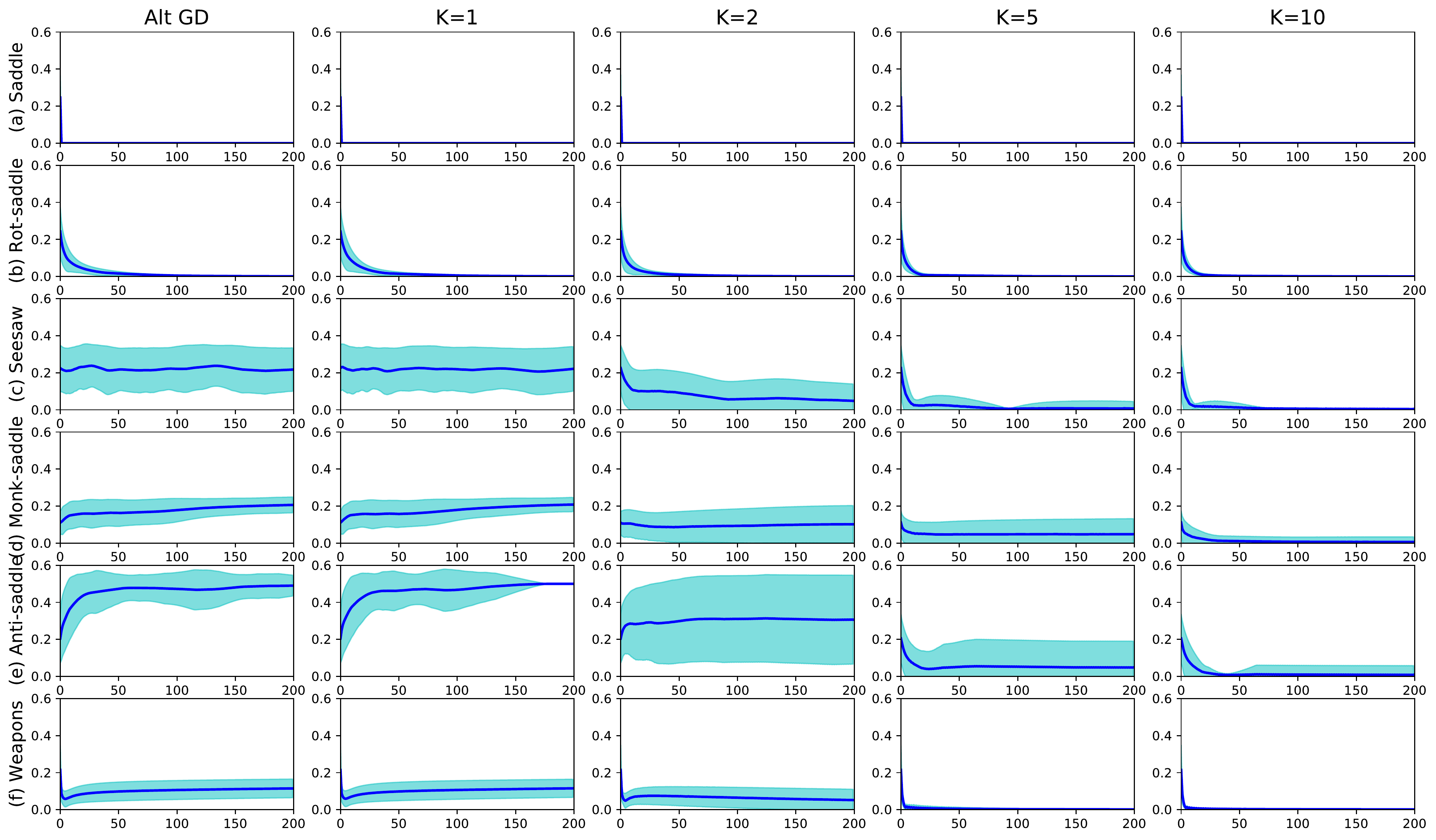}
\caption{
Convergence of Alt-GD and $K$-beam ($K=1,2,5,10$) for the six surfaces (Fig.~\ref{fig:simple})
after 200 iterations, measured by the distance to the closest optimal point.
The dark blue line is the average error and the light blue area is the avg$\pm$std. 
For easy surfaces (a) and (b), all methods converge quickly. 
For surfaces (c)-(f), Alt-GD fails to converge to the solution whereas $K$-beam does 
with $K>1$. 
}
\label{fig:simple convergence}
\end{figure*}

\subsection{Finite $R(u)$, inexact max step}

Exact maximization in each max step is unrealistic, unless $\max_v f(u,v)$ 
can be solved in closed form. Therefore we consider what happens to the convergence 
of the algorithm with an approximate max step. 
If $d_H(R(u),A)\leq \delta$ and $d_H(A,S(u))\leq \delta$ for some $\delta\geq 0$,
how close are $\phi(\cdot)$ and $\phi_A(\cdot)$ in the vicinity of $u$?
The following lemmas answer this question. 
(See Appendix for a visual aid.)
From the smoothness assumptions on $f$, we have
\if0
\begin{lemma}
If $f(u,v)$ is $L$-Lipschitz in $v$ for all $v$, i.e.,
$|f(u,v)-f(u,v')|\leq L \|v-v'\|$, then $\phi(u)$ is also $L$-Lipschitz in $v$.
\end{lemma}
\begin{proof}
$
\phi(u)-\phi(u') = \max_v f(u,v) - \max_v f(u',v)
\leq  \max_v\; |f(u,v) - f(u',v)| 
\leq  \sup_v\; L_v \|u-u'\|.
$
\end{proof}
Consequently, $\phi(u)-\phi_A(u) \leq L_v \delta$.
\begin{lemma}
If $\phi(u)$ and $\phi_A(u)$ are both differentiable at $u_0$ and if $\phi(u_0)=\phi_A(u_0)$,
then $\nabla \phi(u_0) = \nabla \phi_A(u_0)$.
\end{lemma}
The point is that it is okay to have $\phi(u)\neq \phi_A(u)$ for $u\ne u_0$.
\begin{proof}
Let $h(u) = \phi_A(u) - \phi(u)$. Note that $h(u)$ is differentiable
at $u_0$, $h(u)\leq 0$ for all $u$, and $h(u_0)=0$.
If $g=\nabla h(u_0)\neq 0$, then there is $c>0$ such that
$h(u_0-cg)<h(u_0)=0$ which is not possible. Therefore $\nabla h(u)=0$.
\end{proof}
Now show something similar for subgradients and steepest direction at 
non-differentiable points $u$, assuming $d(R(u),A)\leq \delta$.
\fi
\begin{lemma}\label{lem:lipschitz}
If $d_H(R(u),A)\leq \delta$, then for each $v \in R(u)$ there is one or more
$v' \in A$ such that $\phi(u) - f(u,v') \leq l \delta$ and 
$\|\nabla_u f(u,v) - \nabla_u f(u,v')\| \leq r \delta$. 
\end{lemma}

The following lemma shows that if $A$ approximates $R(u)$ well, then
$v'$ chosen by Alg.~\ref{alg:descent direction} is not far from a true maximum 
$v\in R(u)$.
\begin{lemma}\label{lem:separation}
Assume $R(u)$ and $S(u)$ are both finite at $u$.
Let $\zeta = \phi(u) - \max_{v \in S(u)\setminus R(u)} f(u,v)$ be the
smallest gap between the global and the non-global maximum values at $u$. 
If all local maxima are global maxima, then set $\zeta=\infty$.
If $d_H(R(u),A)\leq \delta$ and $d_H(A,S(u))\leq \delta$ where
$\delta < 0.5(\zeta-\epsilon)/l$, 
then for each $v' \in R^\epsilon_A(u)$, there is
$v \in R(u)$ such that $\|v - v'\|\leq \delta$.
\end{lemma}
\if0
\begin{proof}
Let any $v' \in A$ be any $\delta$-close to a global maximum, then 
$f(u,v') \geq \phi(u) - l \delta$, whereas if $v''$ is $\delta$-close to
a non-global maximum then $f(u,v'') \leq \phi(u) -(\zeta - l\delta)$.
Consequently, 
$f(u,v') \geq f(u,v'') + \zeta - 2l\delta > f(u,v'') + \epsilon$, i.e.,
any such $v'$ and $v''$ are separated by at least $\epsilon$.
Therefore, each $v'$ satisfies $v' \in R^\epsilon_A= \{v\in A\;|\;\phi_A(u)-f(u,v)\leq \epsilon\}$
but no $v''$ satisfies $v'' \in R^\epsilon_A$.
\end{proof}
\fi
Furthermore, the subgradients at the approximate maximum points are close to the subgradients at the true maximum points.
\begin{lemma}\label{lem:approximate subdifferential}
Suppose $\delta$ is chosen as in Lemma~\ref{lem:separation} and $\Uc$ is bounded: 
$\max_{u\in \Uc}\|u\| = B$.
Then any $z'\in \mathrm{co}$ $\{\cup_{v\in {R^\epsilon_A}} \nabla_u f(u_0,v)\}$ is 
an $(2r\delta B)$-subgradient of $\phi(u_0)$.
\end{lemma}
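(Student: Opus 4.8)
The plan is to show that any $z'$ in the convex hull $\mathrm{co}\{\cup_{v\in R^\epsilon_A}\nabla_u f(u_0,v)\}$ differs in norm by at most $r\delta$ from a genuine subgradient of $\phi$ at $u_0$, and then to convert that norm gap into $\epsilon$-subgradient slack using the boundedness $\|u\|\le B$ on $\Uc$.

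First I would record that for any $v\in R(u_0)$ the gradient $\nabla_u f(u_0,v)$ is an exact subgradient of $\phi$ at $u_0$: for every $u$, convexity of $f$ in $u$ gives $\phi(u)\ge f(u,v)\ge f(u_0,v)+\langle\nabla_u f(u_0,v),\,u-u_0\rangle$, and $f(u_0,v)=\phi(u_0)$ since $v$ globally maximizes $f(u_0,\cdot)$; this is exactly the Hiriart--Urruty subdifferential formula restricted to the active (argmax) points. Because $\partial\phi(u_0)$ is convex, any convex combination $z=\sum_j\lambda_j\nabla_u f(u_0,v_j)$ with $v_j\in R(u_0)$, $\lambda_j\ge 0$, $\sum_j\lambda_j=1$, again lies in $\partial\phi(u_0)$.

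Next I would pair the approximate maximizers with true ones using Lemmas~\ref{lem:separation} and~\ref{lem:lipschitz}. Write $z'=\sum_j\lambda_j\nabla_u f(u_0,v'_j)$ with $v'_j\in R^\epsilon_A(u_0)$. Since $\delta$ is chosen as in Lemma~\ref{lem:separation}, each $v'_j$ admits a companion $v_j\in R(u_0)$ with $\|v_j-v'_j\|\le\delta$, and the $r$-Lipschitz continuity of $\nabla_u f(u,\cdot)$ (which also underlies Lemma~\ref{lem:lipschitz}) yields $\|\nabla_u f(u_0,v_j)-\nabla_u f(u_0,v'_j)\|\le r\delta$. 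Putting $z=\sum_j\lambda_j\nabla_u f(u_0,v_j)\in\partial\phi(u_0)$, the triangle inequality gives $\|z-z'\|\le\sum_j\lambda_j\,r\delta=r\delta$. Finally, for every $u\in\Uc$, $\phi(u)-\phi(u_0)\ge\langle z,u-u_0\rangle=\langle z',u-u_0\rangle+\langle z-z',u-u_0\rangle\ge\langle z',u-u_0\rangle-\|z-z'\|\,\|u-u_0\|$, and since $\|u-u_0\|\le\|u\|+\|u_0\|\le 2B$, this is at least $\langle z',u-u_0\rangle-2r\delta B$; hence $z'\in\partial_{2r\delta B}\phi(u_0)$ by definition of the $\epsilon$-subdifferential.

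I do not expect a serious obstacle here; the only step needing care is the first one, namely justifying that gradients at global maximizers are honest subgradients of $\phi$ and that their convex hull sits inside the convex set $\partial\phi(u_0)$, which rests on convexity of $f$ in $u$ together with the subdifferential formula quoted earlier. Everything after that is a Lipschitz estimate plus a Cauchy--Schwarz bound controlled by the diameter of $\Uc$.
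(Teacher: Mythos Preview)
Your proposal is correct and follows essentially the same approach as the paper: pair each $v'\in R^\epsilon_A(u_0)$ with a true maximizer $v\in R(u_0)$ via Lemma~\ref{lem:separation}, use the $r$-Lipschitz bound on $\nabla_u f$, and then apply Cauchy--Schwarz with $\|u-u_0\|\le 2B$. The only cosmetic difference is the order of operations: the paper first establishes the $\epsilon$-subgradient inequality for each individual vertex $z'_k=\nabla_u f(u_0,(v^k)')$ and then takes the convex combination, whereas you first form the convex combinations $z$ and $z'$ and then bound $\|z-z'\|\le r\delta$; both routes are trivially equivalent.
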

\if0
\begin{proof}
From Lemmas~\ref{lem:lipschitz} and ~\ref{lem:separation},
for each $v^k\in R^\epsilon_A$, there is $(v^k)' \in R(u)$ such that 
$\|\nabla_u f(u,v^k) - \nabla_u f(u,(v^k)')\| \leq r\delta.$
Let $z_k = \nabla_u f(u,v^k)$ and $z'_k = \nabla_u f(u,(v^k)')$.
Then, for all $k=1,..,K$, 
\begin{eqnarray*}
\phi(\us)-\phi(u) - \langle z_k',\;\us-u\rangle
&=& \phi(\us)-\phi(u) - \langle z_k+z'_k-z_k,\;\us-u\rangle
\geq - \langle z'_k-z_k,\;\us-u\rangle\\
&\geq& -\|z'_k-z_k\|\|\us-u\| \geq -r\delta \|\us-u\|.
\end{eqnarray*}
By taking any convex combination of $\sum_{k=1}^n a_k (\cdot)$ on both sides, 
we have 
\[
\phi(\us)-\phi(u) - \langle \sum_{k=1}^n  a_k z'_k,\;\us-u\rangle \geq -r\delta \|\us-u\|
.
\]
\end{proof}
\fi

\if0
Is $D_g \phi(u) \sim D_g^\epsilon \phi_A(u)$ in some sense?
\begin{lemma}
$ D_g \phi(u) - D_g^\epsilon \phi_A(u) \leq r\epsilon\|g\|/l $ for all $\|g\|>0$.
\end{lemma}
\begin{proof}
Consider pairs $\{(v,v')\;|\;v\in R(u),\;v' \in B\}$.
Then, 
\begin{eqnarray*}
 D_g \phi(u) - D_g^\epsilon \phi_A(u) &=&
\max_{v \in R} \langle \nabla_u f(u,v), g\rangle
- \max_{v \in R^\epsilon_A} \langle \nabla_u f(u,v),g\rangle\\
&\leq&
\max_{v \in R} \langle \nabla_u f(u,v), g\rangle
- \max_{v \in B} \langle \nabla_u f(u,v),g\rangle\\
&\leq& \max_{(v,v')} \langle \nabla_u f(u,v)-\nabla_u f(u,v'), \;g\rangle
\leq r\epsilon\|g\|/l.
\end{eqnarray*}
\end{proof}
That is, the directional derivative from Alg.~\ref{alg:proposed} is no
smaller than the true derivative by $r\epsilon\|g\|/l$. 
Take min over $g$ on both sides, then we can say the same for the steepest
descent direction.
\begin{lemma}
If $\us$ is a stationary point of $\phi(u)$ and $D_g \phi(\us)>r\epsilon/l,\;\forall \|g\|=1$
(strictly stationary), 
then $\us$ is also a stationary point of $\phi_A(u)$ for $d(R(u),A(u))\leq \epsilon$.
\end{lemma}
\fi
Now we state our main theorem that if the max step is accurate enough for a large $i$
in terms of $\zeta_i$ (a property of $f$) and $\epsilon_i,\xi_i$ (chosen by a user),
then the algorithm finds the minimum value using a step size $\rho_i \sim 1/i$.
\begin{theorem}\label{thm:main}
Suppose the conditions of Lemmas~\ref{lem:lipschitz},~\ref{lem:separation} and ~\ref{lem:approximate subdifferential} hold, and also
suppose the max step in Alg.~\ref{alg:proposed} is accurate for sufficiently
large $i\geq i_0$ for some $i_0\geq 1$ so that
$\max[ d_H(R(u_i),A_i), d_H(A_i, S(u_i))] \leq \delta_i$ holds 
where $\delta_i \leq \min\left[ 0.5(\zeta_i - \epsilon_i)/l, 
\; 0.5\xi_i/(rB)\right]$ for some non-negative sequence $(\xi_1,\xi_2,...).$ 
If the step size satisfies $\rho_i\geq 0,\forall i$, $\sum_{i=1}^{\infty} \rho_i = \infty$, 
$\sum_{i=1}^\infty \rho_i^2 < \infty$, and $\sum_{i=1}^\infty \rho_i \xi_i <\infty$,
then 
$\min [\phi(u_1),...,\phi(u_i)]$ converges to the minimum value $\phi^\ast$.
\end{theorem}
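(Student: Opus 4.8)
The plan is to recognize Alg.~\ref{alg:proposed} as an $\epsilon$-subgradient descent method applied to the convex function $\phi(u)$, with a vanishing approximation error that is summable against the step sizes, and then to run the classical telescoping argument for such methods.

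\emph{Step 1 (identify the direction as an approximate subgradient of $\phi$).} For $i\ge i_0$ the hypotheses put us exactly in the setting of Lemmas~\ref{lem:lipschitz}--\ref{lem:approximate subdifferential} at the point $u_{i-1}$: $\max[d_H(R(u_{i-1}),A_{i-1}),d_H(A_{i-1},S(u_{i-1}))]\le\delta_i$ with $\delta_i\le 0.5(\zeta_i-\epsilon_i)/l$ and $\delta_i\le 0.5\xi_i/(rB)$. The vector returned (negated) by Alg.~\ref{alg:descent direction} lies in $\mathrm{co}\{\cup_{v\in R^{\epsilon_i}_{A_{i-1}}(u_{i-1})}\nabla_u f(u_{i-1},v)\}$, so by Lemma~\ref{lem:approximate subdifferential} it is a $(2r\delta_i B)$-subgradient of $\phi$ at $u_{i-1}$, and $2r\delta_i B\le\xi_i$. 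I would therefore write the min step as $u_i=u_{i-1}-\rho_i z_i$ with $z_i\in\partial_{\xi_i}\phi(u_{i-1})$; the random choice inside the convex hull is immaterial, because \emph{every} element of that hull has this property. I would also record the uniform bound $\|z_i\|\le G:=\sup_{(u,v)\in\Uc\times\Vc}\|\nabla_u f(u,v)\|<\infty$, valid since $z_i$ is a convex combination of gradients $\nabla_u f(u_{i-1},v^k)$ and $\nabla_u f$ is continuous in $u$ and $r$-Lipschitz in $v$ on the compact set $\Uc\times\Vc$.

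\emph{Step 2 (descent inequality and telescoping).} Let $\us$ minimize $\phi$ and $\phi^\ast=\phi(\us)$. Expanding $\|u_i-\us\|^2$ and using the $\xi_i$-subgradient inequality $\langle z_i,u_{i-1}-\us\rangle\ge\phi(u_{i-1})-\phi^\ast-\xi_i$ gives, for $i\ge i_0$,
\[
\|u_i-\us\|^2\le\|u_{i-1}-\us\|^2-2\rho_i\bigl(\phi(u_{i-1})-\phi^\ast\bigr)+2\rho_i\xi_i+\rho_i^2G^2.
\]
Summing from $i_0$ to $n$ and discarding the nonnegative left-hand side,
\[
2\sum_{i=i_0}^n\rho_i\bigl(\phi(u_{i-1})-\phi^\ast\bigr)\le\|u_{i_0-1}-\us\|^2+2\sum_{i\ge i_0}\rho_i\xi_i+G^2\sum_{i\ge i_0}\rho_i^2=:C<\infty,
\]
where $\|u_{i_0-1}-\us\|$ is finite (the first $i_0-1$ steps move $u$ by a bounded amount, or simply $u_{i_0-1}\in\Uc$ which is bounded) and the two series converge by $\sum\rho_i\xi_i<\infty$ and $\sum\rho_i^2<\infty$. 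Since every term is nonnegative, $\min_{i_0\le j\le n}\bigl(\phi(u_{j-1})-\phi^\ast\bigr)\le C/\bigl(2\sum_{i=i_0}^n\rho_i\bigr)\to 0$ as $n\to\infty$, because $\sum_i\rho_i=\infty$ forces $\sum_{i=i_0}^n\rho_i\to\infty$. Hence $\min[\phi(u_1),\dots,\phi(u_i)]\to\phi^\ast$, the finitely many early iterates contributing only a transient.

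\emph{Main obstacle.} The one genuinely non-routine point is Step~1: one must certify that tracking only the finite beam $A_i$ does not corrupt the direction, i.e.\ that what Alg.~\ref{alg:descent direction} returns is a subgradient of the \emph{true} $\phi$ up to a controllable error $\xi_i$ — which is precisely why Lemmas~\ref{lem:separation} and~\ref{lem:approximate subdifferential} are needed, and why $\delta_i$ must obey a two-part bound (one part separating global from non-global maxima so that $R^{\epsilon_i}_{A_i}$ picks up only near-global points, the other part using $r$-Lipschitzness of $\nabla_u f$ in $v$ to control the gradient error). Given those lemmas, the remainder is the textbook $\epsilon$-subgradient argument, the only new bookkeeping being that the extra errors $\rho_i\xi_i$ must be summable — exactly the third step-size condition.
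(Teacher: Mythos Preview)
Your proposal is correct and follows essentially the same route as the paper's proof: invoke Lemmas~\ref{lem:lipschitz}--\ref{lem:approximate subdifferential} to certify that the direction returned by Alg.~\ref{alg:descent direction} is a $\xi_i$-subgradient of $\phi$, then run the standard telescoping argument for $\epsilon$-subgradient descent with non-summable step sizes and summable squared step sizes and error terms. You are in fact slightly more careful than the paper on two points --- you justify the uniform bound on $\|z_i\|$ via compactness of $\Uc\times\Vc$ (the paper simply assumes $\|\nabla_u f\|\le L$), and you explicitly handle the first $i_0-1$ iterates as a transient --- but these are refinements, not a different argument.
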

For $\rho_i$ and $\xi_i$ we can also use $1/i$. 
The $\epsilon_i$ can be any non-negative value.
A large $\epsilon_i$ can make each min step better since the descent direction
in Alg.~\ref{alg:descent direction} uses more $z_i$'s and therefore is more robust. 
The price to pay is that it may take more iterations for the max step to meet the condition
$\delta_i \leq \min\left[ 0.5(\zeta_i - \epsilon_i)/l, \; 0.5\xi_i/(rB)\right]$. 
\if0
\begin{proof}
We combine previous lemmas with the standard proof of $\epsilon$-subgradient descent
$u_{i+1} = u_i - \rho_i g_i$.
Note that
\begin{eqnarray*}
\|u_{i+1}-\us\|^2 &=& \|u_i-\us\|^2 + \rho_i^2 \|g_i\|^2 + 2\rho_i \langle g_i,\; \us-u_i\rangle\\
&\leq& \|u_i-\us\|^2 + \rho_i^2 \|g_i\|^2+2\rho_i(\phi(\us)-\phi(u_i)+\xi_i)
\end{eqnarray*}
by the definition of $\partial_\xi \phi(u)$.
Taking $\sum_{i=1}^n (\cdot)$ on both sides gives us
\[
\|u_{n+1}-\us\|^2 \leq \|u_1 - \us\|^2 +\sum_{i=1}^n \rho_i^2 \|g_i\|^2 
+2 \sum_{i=1}^n \rho_i(\phi(\us)-\phi(u_i)+\xi_i),
\]
and consequently,
\[
2 \sum_{i=1}^n [ \rho_i(\phi(u_i)-\phi(\us)-\xi_i) ] \leq  \|u_1 - \us\|^2 +\sum_{i=1}^n \rho_i^2 \|g_i\|^2.
\] 
If $\underline{\phi}(u_i):=\min [\phi(u_1), ... , \phi(u_i)]$, then
$\sum_{i=1}^n \rho_i (\phi(u_i)-\phi^\ast) \geq (\sum_{i=1}^n \rho_i) (\underline{\phi}(u_i) - \phi^\ast)$, and therefore we have
\[
\underline{\phi}(u_i)-\phi^\ast \leq \frac{\|u_1 - \us\|^2 +\sum_{i=1}^n \rho_i^2 \|g_i\|^2 + \sum_{i=1}^n \rho_i \xi_i}{2\sum_{i=1}^n \rho_i}  \leq
\frac{\|u_1 - \us\|^2 +\sum_{i=1}^n \rho_i^2 L^2 +\sum_{i=1}^n \rho_i \xi_i}{2\sum_{i=1}^n \rho_i}.
\]
With $\sum_{i=1}^{\infty} \rho_i = \infty$, 
$\sum_{i=1}^\infty \rho_i^2 < \infty$, and $\sum_{i=1}^\infty \rho_i \xi_i <\infty$, 
we get $\underline{\phi}(u_i) \to \phi^\ast$. 
\end{proof}
\fi

\if0
Remark. 
In the proposed algorithm we used the maxima $R_A(u)$ to compute
the approximate subgradient. Since $R_A(u)$ will typically be a singleton,
the subgradient is the gradient $\nabla_u f(u,v)$, $v\in R_A(u)$.
However, Lemma~\ref{lem:}, .. shows that $R_A^\epsilon(u)$ can be used to compute 
the subgradient, which can potentially be more robust. 
\fi

\subsection{Infinite $R(u)$}

Infinite $R(u)$ is the most challenging case.
We only mention the accuracy of the approximating $R(u)$ with a finite and fixed $A$
as in the grid methods of \citet{dem1971theory,dem1974introduction}.  
\begin{lemma}
For any $\epsilon>0$, one can choose a fixed $A=(v^1,...,v^K)$ such that 
$\phi(u)-\phi_{A}(u) \leq \epsilon$ holds for all $u$. Furthermore,
if $\hat{u}=\arg\min_u \phi_{A}(u)$ is the minimizer of the approximation, 
then $\phi(\uh) - \phi(\us) \leq \epsilon$.
\end{lemma}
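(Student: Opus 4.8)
The plan is to take $A$ to be a finite $\delta$-net of the compact set $\Vc$ with $\delta:=\epsilon/l$, and then to chain three elementary inequalities for the ``furthermore'' part. Convexity of $\phi$ is not needed here; only compactness of $\Vc$ and $\Uc$ together with the $l$-Lipschitz-in-$v$ assumption enter.

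First I would invoke total boundedness: since $\Vc\subseteq\mathbb{R}^D$ is compact, there is a finite set $A=(v^1,\dots,v^K)\subseteq\Vc$ such that every $v\in\Vc$ is within distance $\delta=\epsilon/l$ of some $v^k$. Fix an arbitrary $u\in\Uc$ and pick $v_u\in R(u)$, which exists because $f(u,\cdot)$ is continuous on the compact $\Vc$, so $f(u,v_u)=\phi(u)$. Choosing $v^k\in A$ with $\|v_u-v^k\|\le\delta$ and using that $f$ is $l$-Lipschitz in $v$ uniformly in $u$, we get $f(u,v^k)\ge f(u,v_u)-l\|v_u-v^k\|\ge\phi(u)-l\delta=\phi(u)-\epsilon$, hence $\phi_A(u)=\max_{v\in A}f(u,v)\ge f(u,v^k)\ge\phi(u)-\epsilon$. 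Since $A\subseteq\Vc$, trivially $\phi_A(u)\le\phi(u)$. As $u$ was arbitrary, $0\le\phi(u)-\phi_A(u)\le\epsilon$ for all $u\in\Uc$, which is the first claim.

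For the second claim, note that $\phi_A$ is a maximum of finitely many continuous functions, hence continuous on the compact $\Uc$, so $\uh=\arg\min_u\phi_A(u)$ is attained; $\us=\arg\min_u\phi(u)$ is attained by the standing assumption. Then
$\phi(\uh)\le\phi_A(\uh)+\epsilon$ (first claim at $u=\uh$)
$\le\phi_A(\us)+\epsilon$ (optimality of $\uh$ for $\phi_A$)
$\le\phi(\us)+\epsilon$ (pointwise bound $\phi_A\le\phi$), giving $\phi(\uh)-\phi(\us)\le\epsilon$.

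There is no serious obstacle in this argument; the only point requiring care is that the Lipschitz constant $l$ is uniform in $u$ — exactly the standing assumption — so that a single net of $\Vc$ works simultaneously for every $u$, which is what makes $\phi-\phi_A\le\epsilon$ hold as a bound over all of $\Uc$ rather than pointwise with a $u$-dependent grid. I would close with the remark that $K$ here grows with the covering number of $\Vc$ at scale $\epsilon/l$ and hence blows up as $\epsilon\to0$; this fixed-grid inefficiency is precisely what motivates letting the $K$ candidates move, as in Alg.~\ref{alg:proposed}.
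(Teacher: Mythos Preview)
Your proof is correct and follows essentially the same route as the paper: an $\epsilon/l$-net (the paper says ``uniform $\epsilon/l$-grid'') of the compact $\Vc$ combined with the uniform $l$-Lipschitz bound gives $\phi-\phi_A\le\epsilon$, and the second claim is the same three-term chain $\phi(\uh)\le\phi_A(\uh)+\epsilon\le\phi_A(\us)+\epsilon\le\phi(\us)+\epsilon$. Your write-up is in fact more careful than the paper's in spelling out existence of $\uh$ and the uniformity of $l$ in $u$.
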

If $A$ is dense enough, the solution $\hat{u}$ can be made arbitrarily accurate, 
but the corresponding $K=|A|$ can be too large and has to be limited in practice. 
\if0
\begin{proof}
Since $\Vc$ is compact and $f$ is continuous, we can find a finite grid $A$
(e.g., a uniform $\delta/l$-grid for $l$-Lipschitz $f(u,\cdot)$) such that
$\phi(h)-\phi_{A}(h) \leq \epsilon$.
Furthermore, by definition of $\phi_A$, 
\[
\phi(\uh) - \phi(\us) 
= \phi(\uh) - \phi_{A}(\uh)  + \phi_{A}(\uh) - \phi(\us)
\leq 
\phi(\uh) - \phi_{A}(\uh)  + \phi_{A}(\us) - \phi(\us)
\leq \phi(\uh) - \phi_{A}(\uh) \leq \epsilon.
\]
\end{proof}
\fi

\if0
Related: Demyanov 6.3.2
\begin{lemma}
Necessary condition for being minimum at $\us$ is 
\[
\sum_{i=1}^r a_k \nabla_u f(\us,v_k)=0
\]
for some $v_1,...,v_r \in R(\us)$ where $\sum_{i=1}^r a_i =1,\;a_i\geq 0$,  and
$1\leq r \leq D+1$ ($D$ is the dimension of the domain.)
\end{lemma}
That is, finitely many $v$'s can be used to define stationarity..
\fi

\subsection{Optional stopping criteria}\label{sec:stopping criteria}

The function $\phi(u)$ is non-smooth and its gradient need not vanish
at the minimum, causing oscillations.
A stopping criterion can help to terminate early.
We can stop at an $\epsilon$-stationary point of $\phi(u)$ 
by checking if $0 \in \partial_\epsilon \phi(u)$ from Lemma~\ref{lem:stationary}. 
Algorithmically, this check is done by solving a LP or a QP problem \cite{demjanov1968algorithms}.
\if0
\begin{eqnarray}
\min_{a}&&\;\|\sum_j a_j z_j\|^2\;\;\;\mathrm{s. t.}\;\;\;\sum_j a_j=1,\;\;a_j\geq 0,\forall j \nonumber\\
&& \mathrm{where}\; z_j=\nabla_u f(u,v_j)\;\mathrm{for}\;\;v_j \in R^\epsilon_A(u).
\end{eqnarray}
\fi
The stopping criterion presented in Alg.~\ref{alg:descent direction} is 
a necessary condition for the approximate stationarity of $\phi(u)$:
\begin{lemma}
Let $\epsilon=\epsilon'+l\delta$ $(\epsilon,\epsilon'\geq 0)$ where $l$ is the
Lipschitz coefficient of $f(u,v)$ in $v$. 
If $u_0$ is an $\epsilon$-stationary point of $\phi(u)$, 
then $u_0$ is an $\epsilon'$-stationary point of $\phi_{A}(u)$.
\end{lemma}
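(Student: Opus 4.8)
The plan is to unwind the two definitions of $\epsilon$-stationarity and use the Lipschitz bound $\phi(u) - f(u,v) \le l\delta$ (Lemma~\ref{lem:lipschitz}) to relate the approximate maximum set $R^\epsilon_A(u_0)$ to the true $\epsilon$-maximum set $R^\epsilon(u_0)$. First I would translate the hypothesis: $u_0$ being an $\epsilon$-stationary point of $\phi(u)$ means (by the definition of $\epsilon$-stationarity, applied to the continuous problem, and Lemma~\ref{lem:stationary}) that $0 \in \mathrm{co}\{\cup_{v \in R^\epsilon(u_0)} \nabla_u f(u_0,v)\}$, equivalently $\max_{v \in R^\epsilon(u_0)} \langle \nabla_u f(u_0,v), g\rangle \ge 0$ for all $g \in \mathbb{R}^d$. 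The goal is to show the analogous statement over $A$ at level $\epsilon'$: $\max_{v \in R^{\epsilon'}_A(u_0)} \langle \nabla_u f(u_0,v), g\rangle \ge 0$ for all $g$.

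The key set-inclusion step is this. Take any $v \in R^\epsilon(u_0)$, so $\phi(u_0) - f(u_0,v) \le \epsilon = \epsilon' + l\delta$. Since $d_H(R(u_0),A) \le \delta$ — which holds under the running assumptions on $\delta$ from Lemma~\ref{lem:separation}, inherited here — by Lemma~\ref{lem:lipschitz} (or directly by $l$-Lipschitzness of $f(u_0,\cdot)$) there is $v' \in A$ with $\|v - v'\| \le \delta$, hence $f(u_0,v) - f(u_0,v') \le l\delta$. Adding the two inequalities, $\phi(u_0) - f(u_0,v') \le \epsilon' + 2l\delta$, which is not quite what we want; the cleaner route is to choose $v'$ close to a global maximizer. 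So instead: pick a global maximizer $v^\ast \in R(u_0)$ and $v' \in A$ with $\|v^\ast - v'\| \le \delta$; then $\phi_A(u_0) \ge f(u_0,v') \ge \phi(u_0) - l\delta$. Now for any $v'' \in R^{\epsilon'}_A(u_0)$ we get $\phi(u_0) \ge \phi_A(u_0) \ge f(u_0,v'') \ge \phi_A(u_0) - \epsilon' \ge \phi(u_0) - l\delta - \epsilon' = \phi(u_0) - \epsilon$, i.e. $R^{\epsilon'}_A(u_0) \subseteq R^\epsilon(u_0)$. Combined with the symmetric observation that the near-global $v'$ above lies in $R^{\epsilon'}_A(u_0)$ whenever it lies in $R^\epsilon(u_0)$... actually the inclusion $R^{\epsilon'}_A(u_0) \subseteq R^\epsilon(u_0)$ is all that's needed if we also know that the witnessing directional-derivative condition transfers. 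Let me restate: having $R^{\epsilon'}_A(u_0) \subseteq R^\epsilon(u_0)$ does not immediately give the stationarity transfer, because we need the $\max$ over the smaller set $R^{\epsilon'}_A$ to still be $\ge 0$.

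So the genuine content, and the main obstacle, is showing that the $\epsilon'$-maximizers over $A$ already "see" a convex combination of their gradients equal to zero — i.e. that restricting from $R^\epsilon(u_0)$ down to $R^{\epsilon'}_A(u_0)$ does not destroy the property $0 \in \mathrm{co}\{\cdots\}$. The way around this is to invoke Lemma~\ref{lem:separation}: under the standing choice $\delta < 0.5(\zeta - \epsilon)/l$, every $v' \in R^{\epsilon'}_A(u_0)$ is within $\delta$ of some \emph{global} maximizer $v \in R(u_0)$, and conversely every global maximizer is approximated by such a $v'$; moreover the gradients satisfy $\|\nabla_u f(u_0,v) - \nabla_u f(u_0,v')\| \le r\delta$ by Lemma~\ref{lem:lipschitz}. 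Hence $\mathrm{co}\{\cup_{v' \in R^{\epsilon'}_A(u_0)} \nabla_u f(u_0,v')\}$ is within Hausdorff distance $r\delta$ of $\mathrm{co}\{\cup_{v \in R(u_0)} \nabla_u f(u_0,v)\} = \partial\phi(u_0)$, and since $\phi(u_0)$ is $\epsilon$-stationary, $0 \in \partial_\epsilon\phi(u_0)$; a short argument (projecting $0$ onto the finite-gradient convex hull and using the $r\delta$ closeness together with $\epsilon = \epsilon' + l\delta$) shows $\max_{v' \in R^{\epsilon'}_A(u_0)} \langle \nabla_u f(u_0,v'), g\rangle \ge 0$ for every $g$, which is exactly the definition of $u_0$ being an $\epsilon'$-stationary point of $\phi_A(u)$. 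The delicate bookkeeping is making sure the three small quantities $\epsilon'$, $l\delta$, $r\delta$ line up with the constant in the statement ($\epsilon = \epsilon' + l\delta$), so I would carry the $v' \mapsto v$ pairing explicitly and track each bound rather than appealing to Hausdorff-distance heuristics.
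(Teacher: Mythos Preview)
Your difficulty is diagnostic: the paper's own proof does not establish the direction stated in the lemma, but rather its \emph{converse}. The entire argument in the paper consists of the set inclusion $R^{\epsilon'}_A(u_0) \subseteq R^{\epsilon}(u_0)$ --- which you derive correctly in your second attempt, via $\phi(u_0) - f(u_0,v) \le [\phi(u_0)-\phi_A(u_0)] + [\phi_A(u_0)-f(u_0,v)] \le l\delta + \epsilon' = \epsilon$ --- followed by
\[
\max_{v \in R^{\epsilon}(u_0)} \langle \nabla_u f(u_0,v),\,g\rangle \;\ge\; \max_{v \in R^{\epsilon'}_A(u_0)} \langle \nabla_u f(u_0,v),\,g\rangle \;\ge\; 0 \quad \text{for all } g,
\]
where the last inequality is the \emph{hypothesis} that $u_0$ is $\epsilon'$-stationary for $\phi_A$, and the conclusion is $\epsilon$-stationarity for $\phi$. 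So the paper proves: $\epsilon'$-stationary for $\phi_A$ $\Rightarrow$ $\epsilon$-stationary for $\phi$. That is the opposite implication to the one written in the lemma statement.

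Your instinct that the stated direction does not follow from the inclusion alone is exactly right, and your attempt to salvage it via Lemma~\ref{lem:separation} and the $r$-Lipschitz gradient bound cannot close the gap cleanly: matching each $v \in R(u_0)$ to some $v' \in R^{\epsilon'}_A(u_0)$ gives $\|\nabla_u f(u_0,v) - \nabla_u f(u_0,v')\| \le r\delta$, which would at best yield stationarity of $\phi_A$ up to an $r\delta$ slack --- but the lemma's constant $\epsilon = \epsilon' + l\delta$ contains no $r$. Worse, $\epsilon$-stationarity of $\phi$ allows the zero to be a convex combination of gradients taken anywhere in $R^\epsilon(u_0)$, which can be much larger than $R(u_0)$ and need not be covered by $A$ at all (e.g.\ a unique global maximizer $v_1$ with gradient $(1,0)$, an $\epsilon$-near-maximizer $v_2 \notin R(u_0)$ with gradient $(-1,0)$, and $A=\{v_1\}$ gives a counterexample to the stated direction while satisfying $d_H(R(u_0),A)=0$). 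In short: the inclusion you found is the whole proof --- just for the reverse implication --- and the more elaborate machinery you were reaching for is neither what the paper does nor sufficient to rescue the direction as written.
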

\if0
\begin{proof}
At the $\epsilon'$-stationary point of $\phi_A$, we have
$\max_{v \in R^{\epsilon'}_A} \langle \nabla_u f(u,v),\;g\rangle \geq 0$ for all $g$.
Since $R^{\epsilon}(u)=R^{\epsilon'+l\delta}(u) \supseteq R^{\epsilon'}_A(u)$, 
we have $\max_{v \in R^{\epsilon}} \langle \nabla_u f(u,v),\;g\rangle \geq 
\max_{v \in R^{\epsilon'}_A} \langle \nabla_u f(u,v),\;g\rangle \geq 0$ for all $g$.
\end{proof}
\fi
\if0 
\begin{lemma}[Demyanov] 1.7.
If $\us$ is an $\epsilon$-stationary point and $\phi(u)$ is convex, then 
$\us$ is at least an $\epsilon$-suboptimal point, i.e., 
$\phi(\us) - \min_{u} \phi(u) \leq \epsilon$.
\end{lemma}
\fi
The size $n$ of the QP problem is $|R^\epsilon_A(u)|$ which is small for $\epsilon\ll 1$, 
but it can be costly to solve at every iteration. 
It is therefore more practical to stop after a maximum number of iterations
or by checking the stopping criterion only every so often.

\section{Experiments}\label{sec:experiments}


\subsection{Simple surfaces}\label{sec:exp simple}

\begin{figure*}[thb]
\centering
\includegraphics[width=0.99\linewidth]{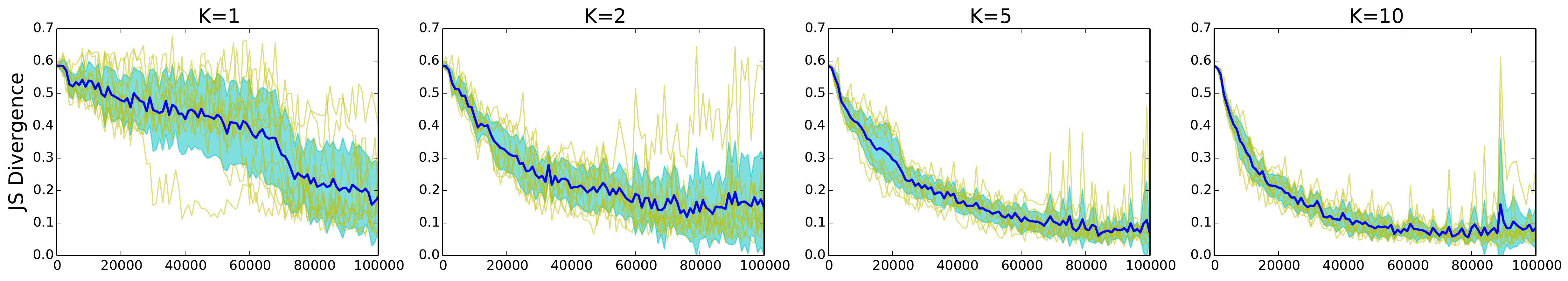}\\
\vspace{0.1in}
\includegraphics[width=0.22\linewidth]{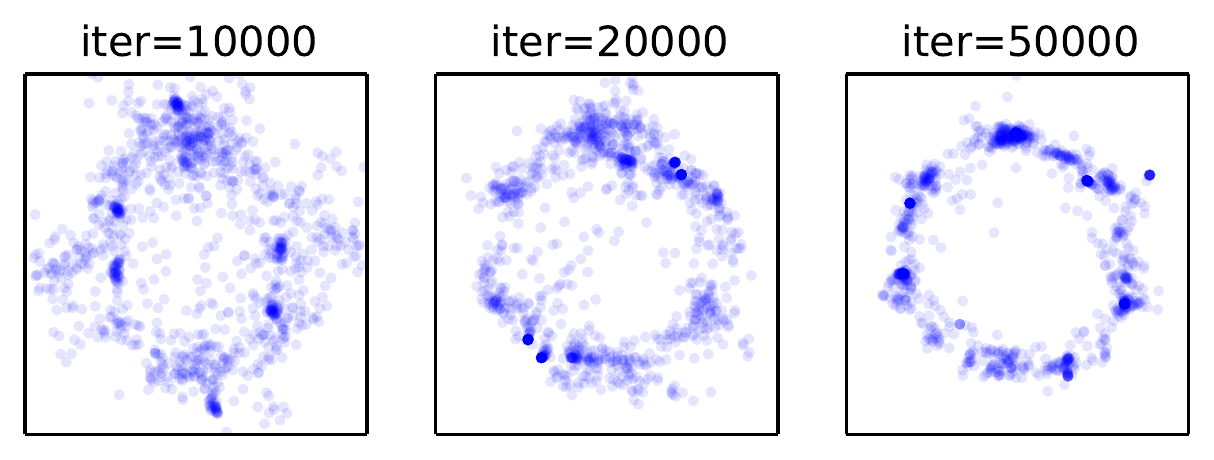}
\hspace{0.1in}
\includegraphics[width=0.22\linewidth]{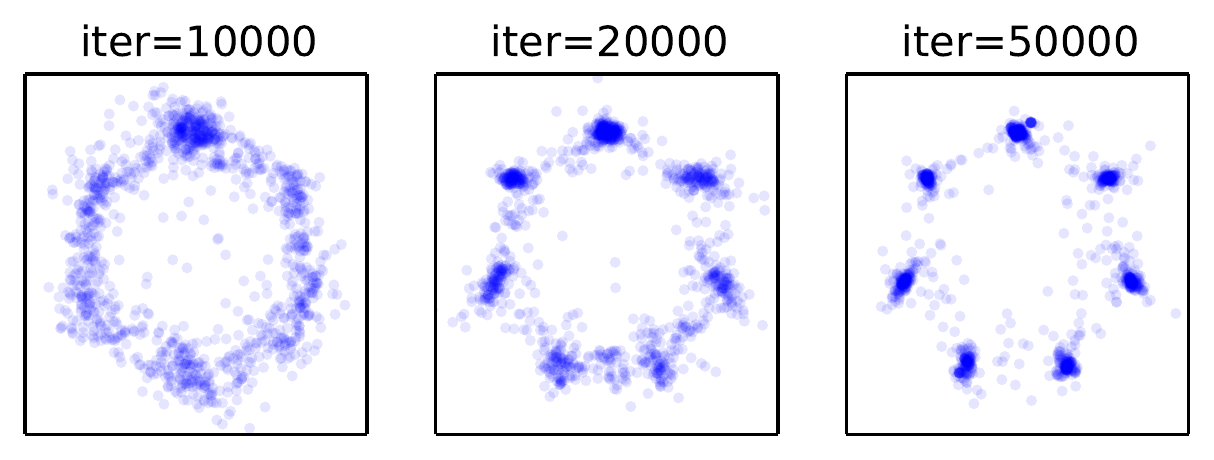}
\hspace{0.1in}	
\includegraphics[width=0.22\linewidth]{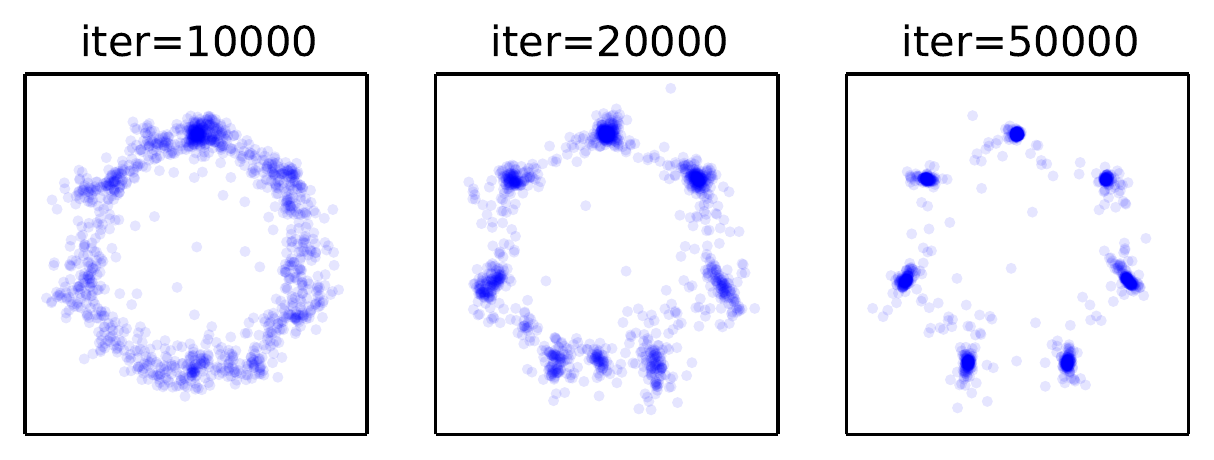}
\hspace{0.1in}	
\includegraphics[width=0.22\linewidth]{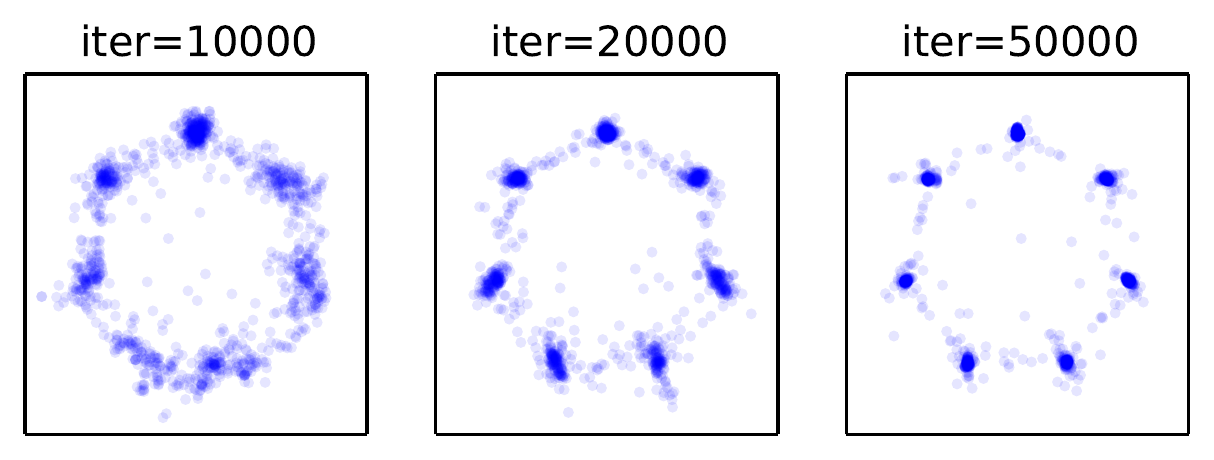}
\caption{Top row: Jensen-Shannon divergence vs iteration for GAN training with MoG. 
The dark blue line is the average divergence and the light blue area is the avg$\pm$std. The light yellow lines are traces of 10 independent trials.
Training is more stable and faster with a larger $K$.
Bottom row: Corresponding samples generated after 10000, 20000, and 50000 iterations.
}
\label{fig:jsd}
\end{figure*}

We test the proposed algorithm to find minimax points of the simple surfaces
in Fig.~\ref{fig:simple}.
We compare Alternating Gradient Descent (Alt-GD), and the proposed $K$-beam algorithm 
with $K=1,2,5,10$. 
Note that for $K=1$, the minimax algorithm is basically the same as Alt-GD.
Since the domain is constrained to $[-0.5,0.5]^2$, we use the projected gradient
at each step with the common learning rate of $\rho_i=\eta_i=0.1/i$. 
In our preliminary tests, the value of $\epsilon_i$ in Alg.~\ref{alg:proposed} 
did not critically affect the results,
and we report the case $\epsilon_i=0$ for all subsequent tests. 
The experiments are repeated for 100 trials with random initial conditions.

Fig.~\ref{fig:simple convergence} shows the convergence of Alt-GD and $K$-beam ($K=1,2,5,10$)
after 200 iterations, 
measured by the distance of the current solution to the closest optimal point
$d(u_i, U^\ast):=\min_{u \in U^\ast} \|u_i - u\|$,
where $U^\ast$ is the set of minimax solutions. 
We plot the average and the confidence level of the 100 trials. 
All methods converge well for surfaces (a) and (b).
The surface (c) is more difficult. Although $(0,0)$ is a saddle point, (i.e., 
$0=f(0,v) \leq f(0,0) \leq f(u,0)=0,\;\forall u,v$), the point $(0,0)$ is unstable
as it has no open neighborhood in which $f$ is a local minimum in $u$ and a local maximum in $v$. 
For non-saddle point problems (d)-(e), one can see that Alt-GD simply cannot
find the true solution, whereas $K$-beam can find the solution if $K$ is large enough. 
For anti-saddle (e), $K=2$ is the smallest number to find the solution
since the local maximum point $|S(u)|$ is at most 2. However, concavity-convexity of $f$
(instead of convexity-concavity) makes optimization difficult and therefore 
$K>2$ helps to recover from bad random initial points and find the solution.

\begin{figure*}[thb]
\centering
\includegraphics[width=0.99\linewidth]{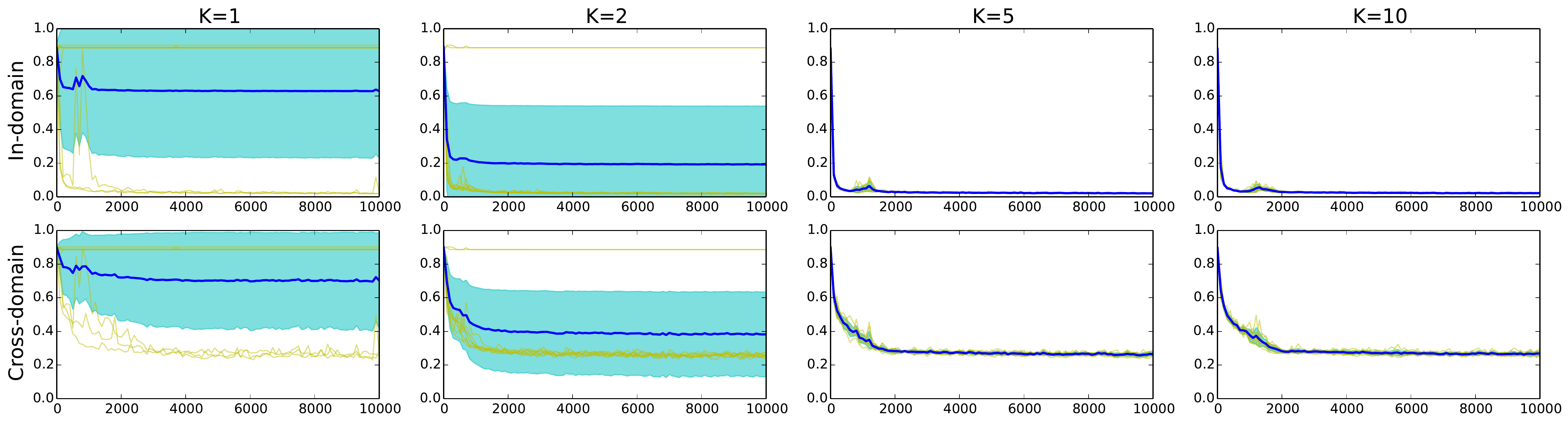}
\caption{Test error vs iteration for MNIST and MNISTM. 
The top/bottom row corresponds to in-domain/cross-domain results, respectively.
The dark blue line is the average error and the light blue area is the avg$\pm$std. 
The light yellow lines are traces of 10 independent trials. 
For $K$=1 or 2, some trials fail to converge at all. With $K$=5 or 10, all trials 
converge to $0.02$ (in-domain) and $0.27$ (cross-domain).
}
\label{fig:da}
\end{figure*}

\subsection{GAN training with MoG}


We train GANs with the proposed algorithm to learn a generative model of 
two-dimensional mixtures of Gaussians (MoGs).
Let $x$ be a sample from the MoG with the density
$p(x) = \frac{1}{7}\sum_{i=0}^6 \mathcal{N}\left((\sin(\pi i/4),\cos(\pi i/4)),\;(0.01)^2I_2\right)$,

and $z$ be a sample from the 256-dimensional Gaussian distribution
$\mathcal{N}(0,I_{256})$.
The optimization problem is
\[
\min_{u}\max_v E\left[\log D(x;v) + \log (1-D(G(z;u);v))\right],
\]
where $G(z;u)$ and $D(x;v)$ are generator and discriminator networks
respectively. 
Both $G$ and $D$ are two-layer tanh networks with 128 hidden units per layer,
trained with Adam optimizer with batch size 128 and the learning rate of $10^{-4}$
for the discriminator and $10^{-3}$ for the generator.  

For evaluation, we measure the Jensen-Shannon divergence 
\[
\mathrm{JSD} = \frac{1}{2}\mathrm{KL}\left(P,\frac{P+Q}{2}\right)+\frac{1}{2}\mathrm{KL}\left(Q,\frac{P+Q}{2}\right)
\]
between the true MoG $P$ and the samples $Q$ from the generator. 
We measure the divergence by discretizing the 2D region into $20\times 20$ bins
and compare the histograms of 64,000 random samples from the generator and 640,000
samples from the MoG.
The top row, Fig.~\ref{fig:jsd}, shows the JSD curves of $K$-beam with $K$=1,2,5,10.
Alt-GD performs nearly the same as $K$=1 and is omitted. 
The results are from 10 trials with random initialization. 
Note first that GAN training is sensitive in that each trial curve is jagged and 
often falls into the ``mode collapsing'' where there is a jump in the curve.
With $K$ increasing, the curve converges faster on average and is more stable
as evidenced by the shrinking variance. 
The bottom row, Fig.~\ref{fig:jsd}, shows the corresponding samples from the generators 
after 10,000, 20,000, and 50,000 iterations from all 10 trials. 
The generated samples are also qualitatively better with $K$ increasing.  

Additionally, we measure the runtime of the algorithms by wall clock on the same
system using a single NVIDIA GTX980 4GB GPU with a single Intel Core i7-2600 CPU.
Even on a single GPU, the runtime per iteration increases only sublinear
in K: relative to the time required for $K$=1, we get $\times$1.07 ($K$=2), $\times$1.63 ($K$=5),
and $\times$2.26 ($K$=10). 
Since the advantages are clear and the incurred time is negligible,
there is a strong motivation to use the proposed method instead of Alt-GD. 

\subsection{Unsupervised domain adaptation}

We perform experiments on unsupervised domain adaptation \citep{ganin2015unsupervised}
which is another example of minimax problems. 
In domain adaption, it is assumed that two data sets belonging to different domains
share the same structure. For examples, MNIST and MNIST-M are both images of handwritten
digits 0--9, but MNIST-M is in color and has random background patches.
Not surprisingly, the classifier trained on MNIST does not perform well with digits from
MNIST-M out of the box.
Unsupervised domain adaption tries to learn a common transformation $G$ of the domains
into another representation/features such that the distributions of the two domains 
are as similar as possible while preserving the digit class information. 
The discriminator $D_1$ tries to predict the domain accurately, and
the target classifier $D_2$ tries to predict the label correctly. 
The optimization problem can be rewritten as $\min_{u=\{u',w\}} \max_v f(u,v)$ with
\[
f(u,v)=-E[D_1(G(x;u');v)] + \lambda\; E[D_2(G(x;u');w)],
\]
which is the weighted difference of the expected risks of the domain
classifier $D_1$ and the digit classifier $D_2$. 
This form of minimax problem has also been proposed earlier
by \citet{hamm2015preserving,hamm2017minimax} to remove sensitive information from data. 
In this experiment, we show domain adaptation results. 
The transformer $G$ is a two-layer ReLU convolutional network that maps the input features
(=images) to an internal representation of dim=2352. 
The discriminator $D_1$ is a single-layer ReLU dense network of 100
hidden units,
and the digit classifier $D_2$ is a two-layer ReLU dense network of 100 hidden units.
All networks are trained with the momentum optimizer with the batch size of 128 
and the learning rate of $10^{-2}$. 
The experiments are repeated for 10 trials with random initialization. 
We use $\lambda=1$.

We performed the task of predicting the class of MNISTM digits, trained using
labeled examples of MNIST and unlabeled examples of MNISTM.
Fig.~\ref{fig:da} shows the classification error of in-domain (top row) and cross-domain
(bottom row) prediction tasks as a function of iterations.
Again we omit the result of Alt-GD as it performs nearly the same as $K$=1. 
With $K$ small, the average error is high for both in-domain and cross-domain tests,
due to failed optimization which can be observed in the traces of the trials.
As $K$ increases, instability disappears and both in-domain and cross-domain errors 
converge to their lowest values.

{\bf Summary and discussions}
\begin{itemize}
\item Experiments with 2D surfaces clearly show that the alternating 
gradient-descent method can fail completely when the minimax points are not
local saddle points, 
while the $K$-beam method can find the true solutions. 
\item For GAN and domain adaptation problems involving nonlinear neural networks,
the $K$-beam and Alt-GD can both find good solutions
if they converge. The key difference is, the $K$-beam \emph{consistently} converges
to a good solution, whereas Alt-GD finds the solution only rarely
(which are the bottom yellow curves for $K$=1 in Fig.~\ref{fig:jsd} and Fig.~\ref{fig:da}.) 
Similar results can be observed in GAN-MNIST experiments in Appendix.
\item The true $K$ value cannot be computed analytically for nontrivial functions.
However, an overestimated $K$ does not hurt the performance theoretically -- it is only redundant. 
One the other hand, an underestimated $K$ can be suboptimal but is
still better than $K$=1. Therefore, in practice, one can choose as large a 
number as allowed by resource limits such as $K$=5 or 10.
\item The $K$-beam method is different from running Alt-GD for $K$-times more
iterations, since the instability of Alt-GD hinders convergence regardless of the 
total number of iterations. 
The $K$-beam method is also different from $K$-parallel independent runs of Alt-GD,
which are basically the figures of $K$=1 in Fig.~\ref{fig:jsd} and Fig.~\ref{fig:da},
but with $K$-times more trials. The variance will be reduced but the average curve will
remain similar.
\end{itemize}


\section{Conclusions} \label{sec:conclusions}

In this paper, we propose the $K$-beam subgradient descent algorithm to solve
continuous minimax problems that appear frequently in machine learning. 
While simple in implementation, the proposed algorithm can significantly improve the
convergence of optimization compared to the alternating gradient descent approach 
as demonstrated by synthetic and real-world examples. 
We analyze the conditions for convergence without assuming concavity or bilinearity,
which we believe is the first result in the literature. 
There are open questions regarding possible relaxations of assumptions used
which are left for future work. 




{\small
\bibliographystyle{icml2018}
\bibliography{icml18_jh}
}




\section*{Appendix}
\appendix

\section{Simple surfaces}

\begin{figure*}[thb]
\begin{subfigure}{.33\linewidth}
	\centering
	\captionsetup{justification=centering}
	\includegraphics[width=0.66\linewidth]{demo_surfaces1_points.pdf}
	\includegraphics[width=0.32\linewidth]{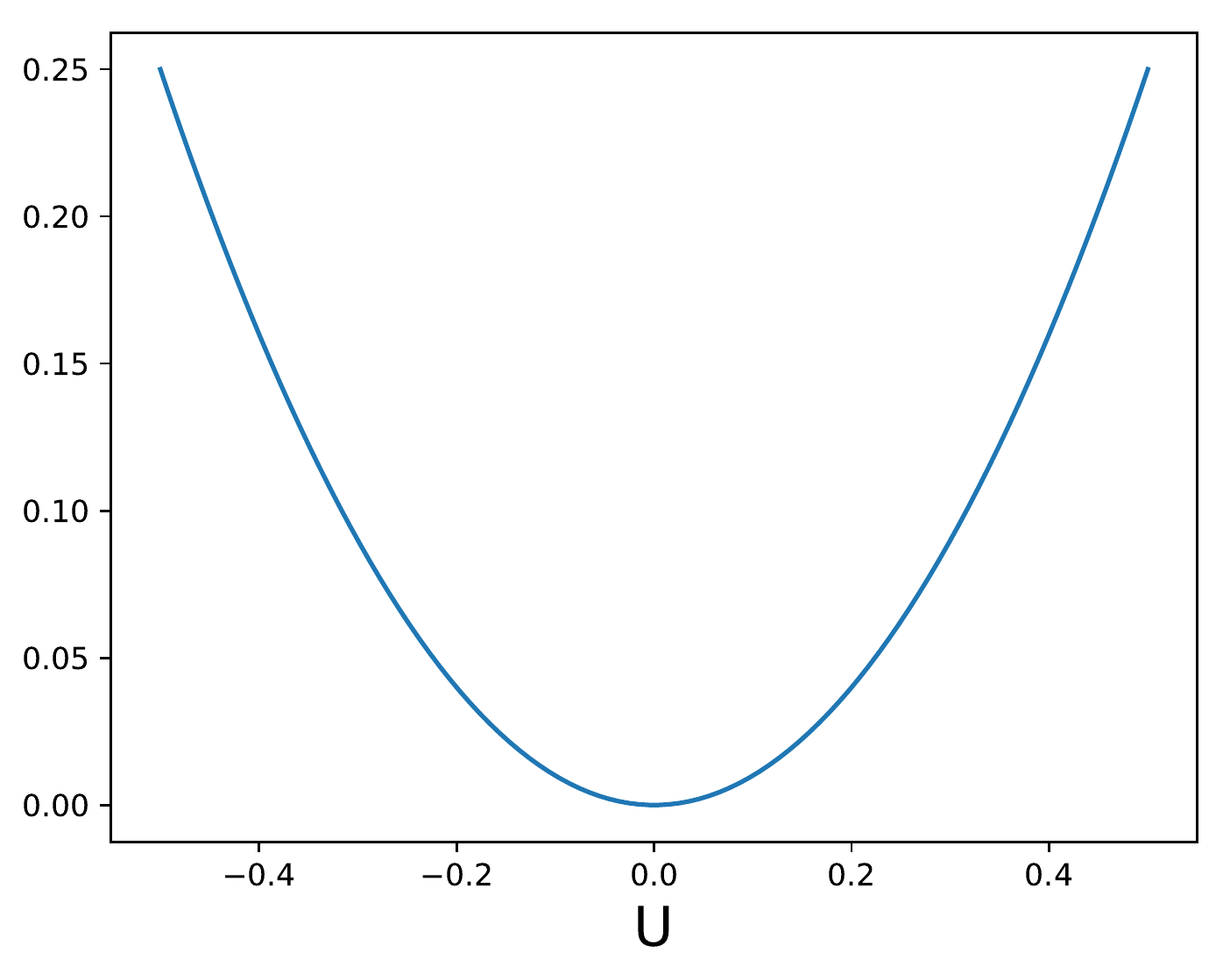}
	\caption{
	Saddle ($u^2-v^2$)\\
	\small
	\begin{tabular}{|c|c|}
	\hline
	critical pts & $\{(0,0)\}$ \\
	saddle pts & $\{(0,0)\}$ \\
	minimax pts & $\{(0,0)\}$\\
	\hline
	\end{tabular} 
	}
\end{subfigure}
\begin{subfigure}{.33\linewidth}
	\centering
	\captionsetup{justification=centering}
	\includegraphics[width=0.66\linewidth]{demo_surfaces2_points.pdf}
	\includegraphics[width=0.32\linewidth]{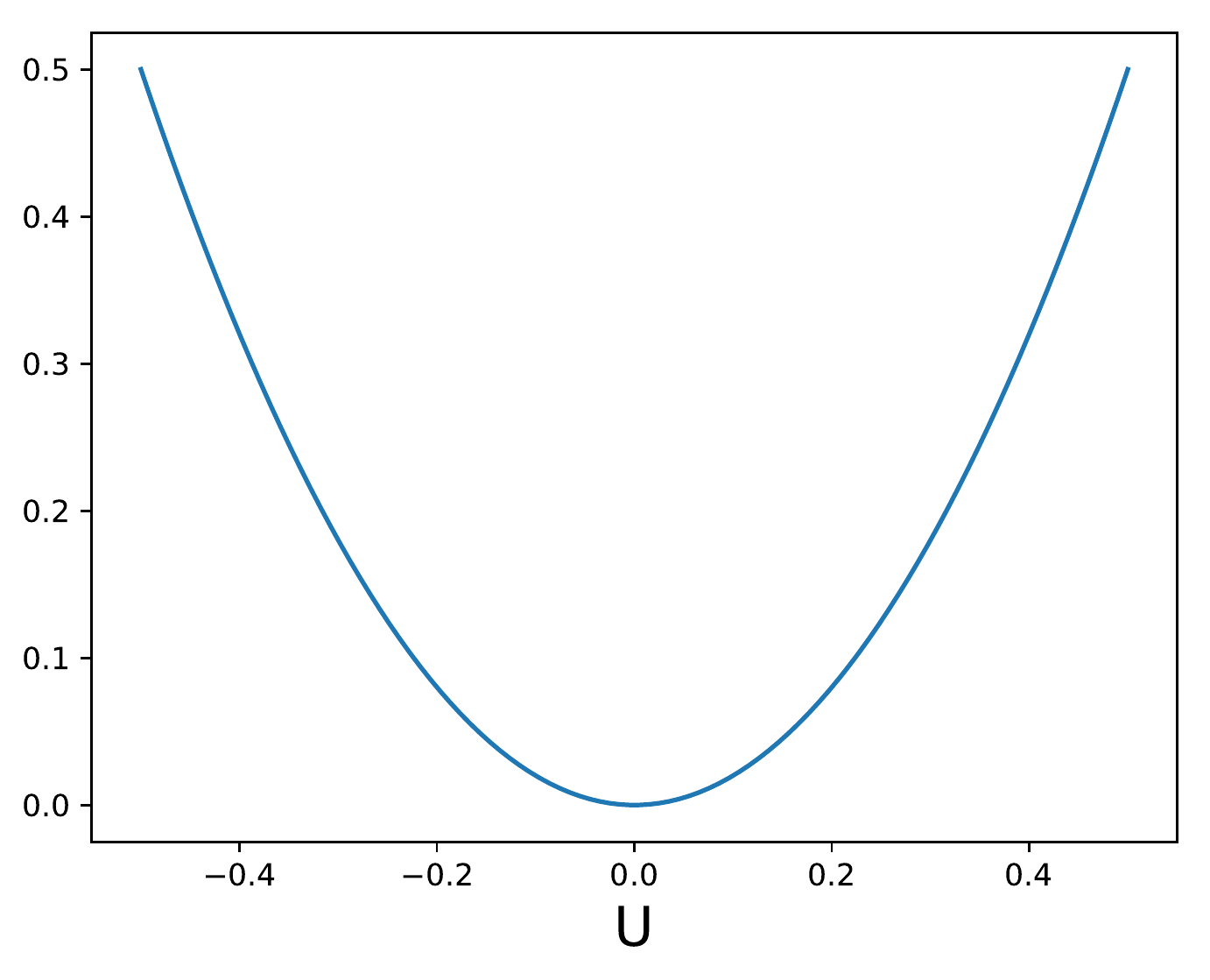}
	\caption{
	Rotated saddle ($u^2-v^2+2uv$)
	\small
	\begin{tabular}{|c|c|}
	\hline
	critical pts & $\{(0,0)\}$ \\
	saddle pts & $\{(0,0)\}$ \\
	minimax pts & $\{(0,0)\}$\\
	\hline
	\end{tabular} 
	}
\end{subfigure}
\begin{subfigure}{.33\linewidth}
	\centering
	\captionsetup{justification=centering}
	\includegraphics[width=0.66\linewidth]{demo_surfaces8_points.pdf}
	\includegraphics[width=0.32\linewidth]{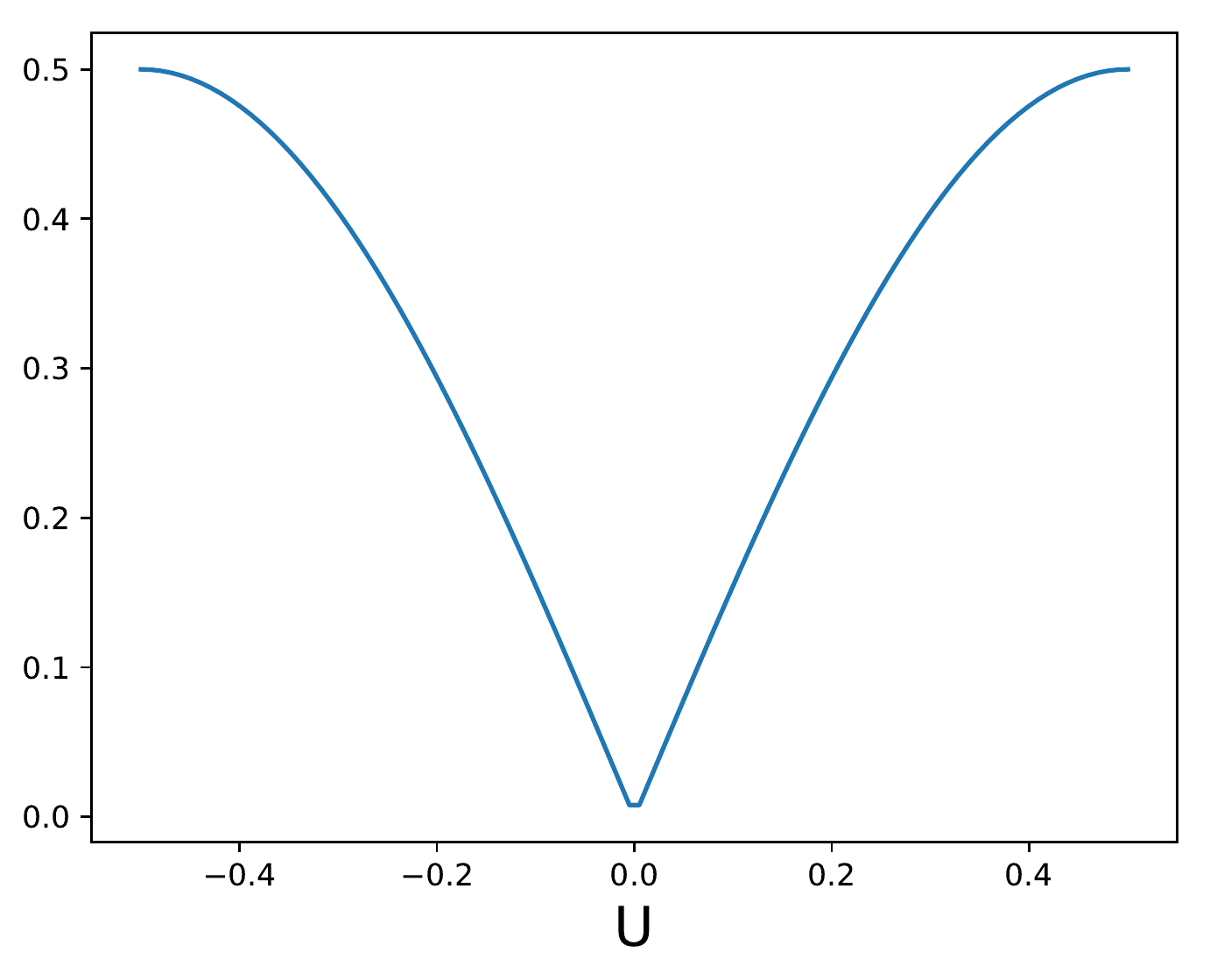}
	\caption{
	Seesaw ($-v\sin(\pi u)$)	
	\small
	\begin{tabular}{|c|c|}
	\hline
	critical pts & $\{(0,0)\}$ \\
	saddle pts & $\{(0,0)\}$ \\
	minimax pts & $\{(0,v)|v \in [-0.5,0.5])\}$\\
	\hline
	\end{tabular} 
	}
\end{subfigure}
\begin{subfigure}{.33\linewidth}
	\centering
	\captionsetup{justification=centering}
	\includegraphics[width=0.66\linewidth]{demo_surfaces32_points.pdf}
	\includegraphics[width=0.32\linewidth]{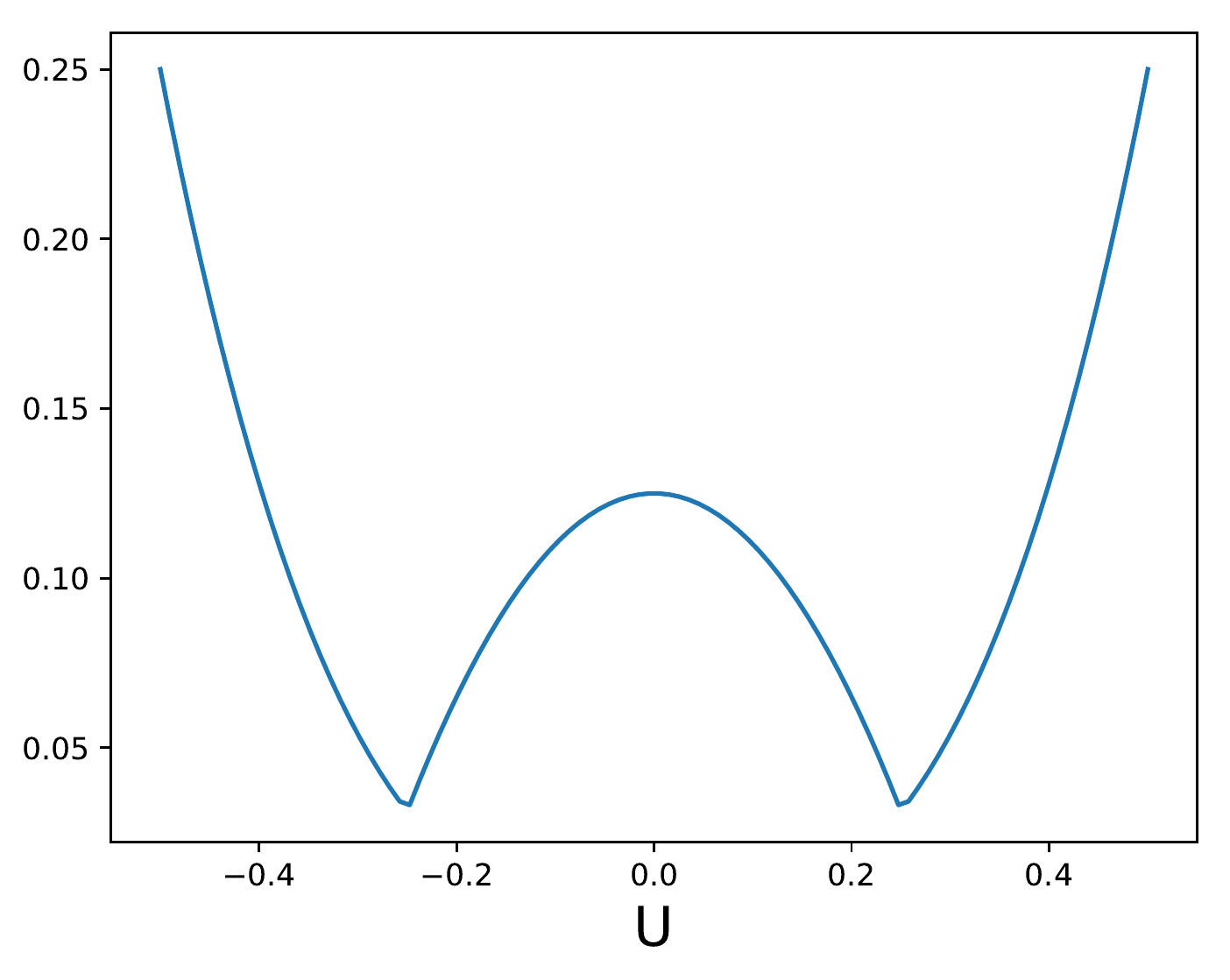}
	\caption{
	Monkey saddle ($v^3-3vu^2$)
	\small
	\begin{tabular}{|c|c|}
	\hline
	critical pts & $\{(0,0)\}$ \\
	saddle pts & $\{\}$ \\
	minimax pts & $\{(\pm0.25,-0.25),(\pm0.25,0.5)\}$\\
	\hline
	\end{tabular} 
	}
\end{subfigure}
\begin{subfigure}{.33\linewidth}
	\centering
	\captionsetup{justification=centering}
	\includegraphics[width=0.66\linewidth]{demo_surfaces12_points.pdf}
	\includegraphics[width=0.32\linewidth]{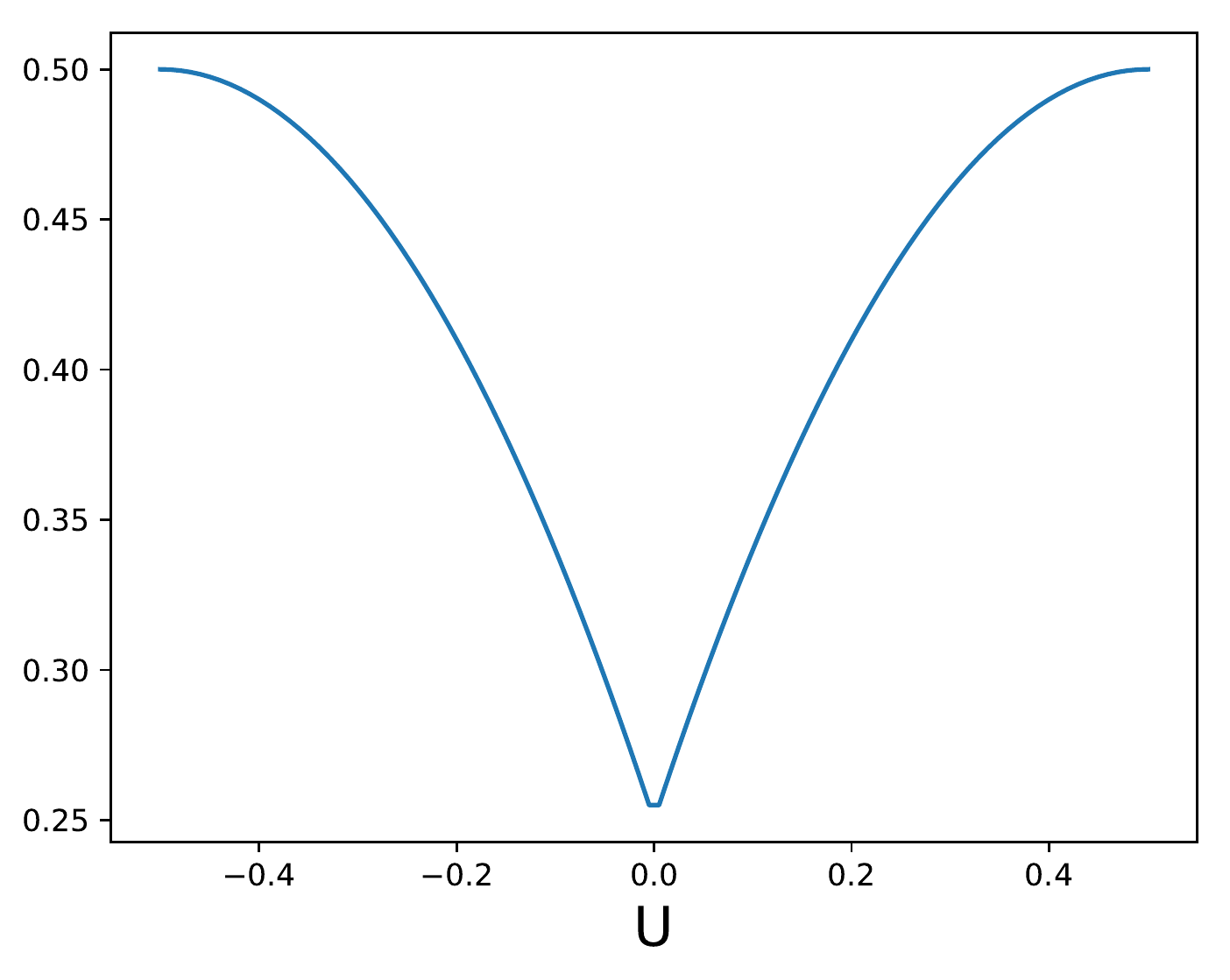}
	\caption{
	Anti-saddle ($-u^2+v^2+2uv$)
	\small
	\begin{tabular}{|c|c|}
	\hline
	critical pts & $\{(0,0)\}$ \\
	saddle pts & $\{\}$ \\
	minimax pts & $\{(0,\pm0.5)\}$\\
	\hline
	\end{tabular} 
	}
\end{subfigure}
\begin{subfigure}{.33\linewidth}
	\centering
	\captionsetup{justification=centering}
	\includegraphics[width=0.66\linewidth]{demo_surfaces9_points.pdf}
	\includegraphics[width=0.32\linewidth]{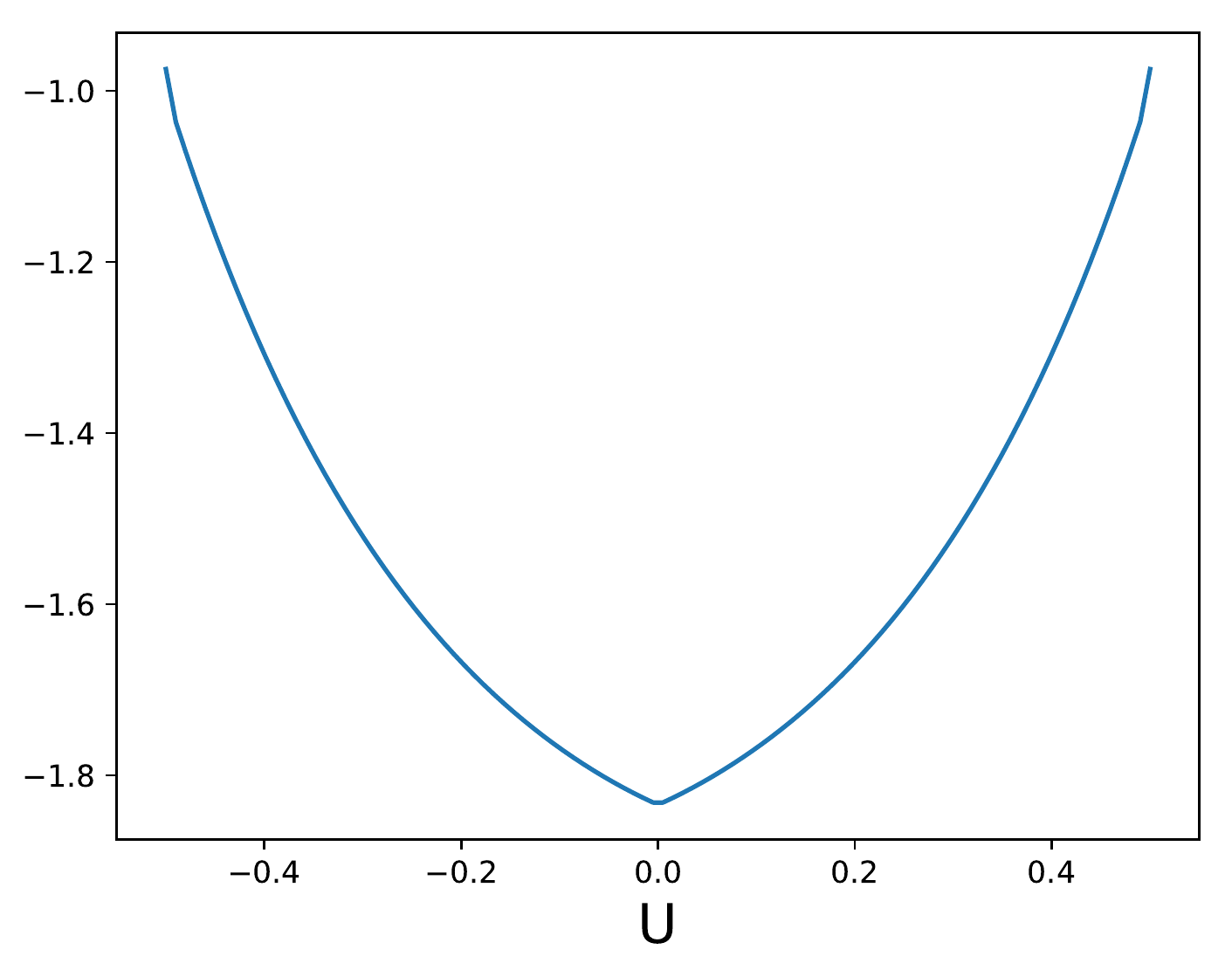}
	\caption{
	Weapons ($ e^{-10(u+.5)e^{-(v+.5)}}+e^{-10(.5-u)e^{v-.5}}$)
	\small
	\begin{tabular}{|c|c|}
	\hline
	critical pts & $\{(0,0)\}$ \\
	saddle pts & $\{\}$ \\
	minimax pts & $\{(0,\pm0.5)\}$\\
	\hline
	\end{tabular} 
	}
	
\end{subfigure}
\caption{Examples of saddle point (top row) and non-saddle point (bottom row)
problems. The smaller inset after each surface is the max value function $\phi(u)=\max_v f(u,v)$.
}
\label{fig:simple2}
\end{figure*}

Fig.1 shows the six surfaces $f(u,v)$ and the maximum value function 
$\phi(u)=\max_{v\in \Vc} f(u,v)$.
From $\phi(u)$ one can check the minima $\arg\min_u \phi(u)$ are:\\
(a) $u=0$, (b) $u=0$, (c) $u=0$, (d) $u=\pm 0.25$, (e) $u=0$, and (f) $u=0$.\\
The corresponding maxima $R(u)=\arg\max_{v\in \Vc} f(u,v)$ at the minimum are:\\
(a) $R(0)=\{0\}$, (b) $R(0)=\{0\}$, (c) $R(0)=[-0.5,0.5]$, 
(d) $R(\pm0.25)=\{-0.25,0.5\}$, (e) $R(0)=\{-0.5,0.5\}$, and (f) $R(0)=\{-0.5,0.5\}$.

Furthermore, $R(\Uc)$ for the whole domain is:\\
(a) $R(\Uc)=\{0\}$, (b) $R(\Uc)=[-0.5,0.5]$,
(c) $R(\Uc)=\{-0.5,0.5\}$ except for $R(0)=[-0.5,0.5]$, (d) $R(\Uc)=[-0.5,-0.25] \cup \{0.5\}$,
(e) $R(\Uc)=\{-0.5,0.5\}$, and (f) $R(\Uc)=\{-0.5,0.5\}$.
These can be verified by solving the minimax problems in closed form.

Note that the origin $(0,0)$ is a critical point for all surfaces.
It is also a global saddle point and minimax point for surfaces (a)-(c),
but is neither a saddle nor a minimax point for surfaces (d)-(f).

\section{Proofs}

\setcounter{theorem}{0}

\begin{lemma}[Corollary 4.3.2, Theorem 4.4.2, \cite{hiriart2001fundamentals}]
Suppose $f(u,v)$ is convex in $u$ for each $v \in A$. 
Then $\partial \phi_A(u)=\mathrm{co}\{\cup_{v \in A} \nabla_u f(u,v)\}$.
Similarly, suppose $f(u,v)$ is convex in $u$ for each $v \in \Vc$. Then $\partial \phi(u)=\mathrm{co}\{\cup_{v \in \Vc} \nabla_u f(u,v)\}$.
\end{lemma}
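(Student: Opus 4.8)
Both identities are instances of the classical formula for the subdifferential of a pointwise maximum (or supremum) of smooth convex functions, so the plan is to reduce to that fact, treating the finite set $A$ first and then the compact set $\Vc$. Write $f_v(u):=f(u,v)$; by hypothesis each $f_v$ is convex and continuously differentiable in $u$, so $\partial f_v(u)=\{\nabla_u f(u,v)\}$, and $\phi_A=\max_{v\in A}f_v$ and $\phi=\sup_{v\in\Vc}f_v$ are convex as pointwise suprema of convex functions; since $\Vc$ is compact and $f$ is continuous, the supremum defining $\phi$ is attained, i.e.\ $\phi(u)=\max_{v\in R(u)}f_v(u)$. The inclusion ``$\supseteq$'' is the easy half: if $g=\sum_j a_j\nabla_u f(u_0,v_j)$ with $a_j\ge 0$, $\sum_j a_j=1$ and each $v_j$ a maximizer at $u_0$ (the union is to be read as running over the maximizing $v$'s, which is all that is used in the sequel), then convexity of $f_{v_j}$ gives, for every $u$, $\phi_A(u)\ge f_{v_j}(u)\ge f_{v_j}(u_0)+\langle\nabla_u f(u_0,v_j),\,u-u_0\rangle=\phi_A(u_0)+\langle\nabla_u f(u_0,v_j),\,u-u_0\rangle$; taking the convex combination $\sum_j a_j(\cdot)$ of these inequalities shows $g\in\partial\phi_A(u_0)$, and identically for $\phi$.

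\textbf{The hard inclusion.} For ``$\subseteq$'' in the finite case I would first prove the directional-derivative formula $\phi_A'(u_0;p)=\max_{v\in R_A(u_0)}\langle\nabla_u f(u_0,v),\,p\rangle$ for every direction $p$: for small $t>0$ the $v$'s inactive at $u_0$ stay inactive at $u_0+tp$ (they lie below $\phi_A(u_0)$ by a positive margin and $f$ is continuous), while for an active $v$, differentiability gives $f_v(u_0+tp)=f_v(u_0)+t\langle\nabla_u f(u_0,v),\,p\rangle+o(t)$; dividing by $t$ and letting $t\downarrow 0$ yields the formula. Since $\phi_A$ is finite and convex, $\partial\phi_A(u_0)=\{g:\ \langle g,p\rangle\le\phi_A'(u_0;p)\ \text{for all }p\}$. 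The set $C:=\mathrm{co}\{\nabla_u f(u_0,v):v\in R_A(u_0)\}$ is compact and convex, and its support function is $\sigma_C(p)=\max_{v\in R_A(u_0)}\langle\nabla_u f(u_0,v),\,p\rangle=\phi_A'(u_0;p)$; since a closed convex set is the intersection of the halfspaces $\{g:\langle g,p\rangle\le\sigma_C(p)\}$, the condition ``$\langle g,p\rangle\le\phi_A'(u_0;p)$ for all $p$'' is exactly $g\in C$. This proves the first identity.

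\textbf{Compact case and main obstacle.} For $\phi$ I would repeat the same support-function argument, now invoking the Danskin-type directional-derivative theorem $\phi'(u_0;p)=\max_{v\in R(u_0)}\langle\nabla_u f(u_0,v),\,p\rangle$, which holds because $\Vc$ is compact and $f,\nabla_u f$ are continuous; here $R(u_0)$ is a closed subset of the compact $\Vc$ and hence compact, so $\{\nabla_u f(u_0,v):v\in R(u_0)\}$ is compact as a continuous image, and by Carath\'eodory its convex hull is already compact and therefore closed, which is why $\mathrm{co}$ (rather than $\overline{\mathrm{co}}$) suffices. Alternatively one can derive the compact case from the finite one by taking finite $\delta$-nets $A_\delta$ dense in $\Vc$ and using the $l$-Lipschitzness of $f$ in $v$ and $r$-Lipschitzness of $\nabla_u f$ in $v$ assumed later in the paper to control $\phi-\phi_{A_\delta}$ and the Hausdorff distance between the corresponding gradient sets, then letting $\delta\downarrow 0$. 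The main obstacle is precisely this ``$\subseteq$'' direction: establishing the Danskin directional-derivative identity and handling the topology of the gradient set with care (compactness, so that its convex hull is closed, and upper semicontinuity of $R(\cdot)$ near $u_0$ so that the directional-derivative formula is valid). Convexity of $\phi$, the ``$\supseteq$'' inclusion, and attainment of the supremum are routine.
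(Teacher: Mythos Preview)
The paper does not prove this lemma at all: it is stated as a citation of Corollary~4.3.2 and Theorem~4.4.2 of Hiriart-Urruty and Lemar\'echal, and no proof appears in either the body or the appendix. Your proposal therefore goes well beyond what the paper does---you supply a complete, correct argument for the cited textbook results (easy inclusion by convex combinations of the subgradient inequalities, hard inclusion via the directional-derivative/support-function identification, with Danskin's theorem and compactness of $R(u_0)$ handling the continuous case). There is nothing to compare against on the paper's side.

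One point worth flagging explicitly: you are right that the union in the stated formula must be read as running over the \emph{active} indices $R_A(u)$ (resp.\ $R(u)$), not over all of $A$ (resp.\ $\Vc$). As literally written, $\partial\phi_A(u)=\mathrm{co}\{\cup_{v\in A}\nabla_u f(u,v)\}$ is false (take $A=\{v_1,v_2\}$ with $f(u,v_1)=u$, $f(u,v_2)=0$ at $u=1$). The paper's downstream uses---the $\epsilon$-stationarity characterization via $\mathrm{co}\{\cup_{v\in R^\epsilon(u)}\nabla_u f(u,v)\}$ and the conclusion $\partial\phi(u)=\partial\phi_A(u)$ in the lemma that assumes $R(u)=R_A(u)$---confirm that the intended formula is the active-set version, exactly as you interpreted it.
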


\begin{lemma}[Chap 3.6, \cite{dem1974introduction}]\label{lem:stationary}
A point $u$ is an $\epsilon$-stationary point of $\phi_A(u)$
if and only if $0 \in \mathrm{co}\{\cup_{v \in R^\epsilon_A(u)} \nabla_u f(u,v)\}$.
\end{lemma}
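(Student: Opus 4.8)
The plan is to recognize this as a standard convex-analytic duality: the finite family of gradients generates a compact convex hull, and $\epsilon$-stationarity asserts that the support function of this hull is nonnegative in every direction, which holds precisely when the origin lies inside the hull. First I would fix the notation $Z := \{\nabla_u f(u,v)\mid v \in R^\epsilon_A(u)\}$, a finite set since $R^\epsilon_A(u) \subseteq A$ is finite, and write $C := \mathrm{co}(Z)$, which is compact and convex. Because a linear functional over a polytope attains its maximum at a vertex, we have $\max_{v \in R^\epsilon_A(u)} \langle \nabla_u f(u,v), g\rangle = \max_{z \in C}\langle z, g\rangle$ for every $g$, so the defining condition of $\epsilon$-stationarity is exactly that the support function $\sigma_C(g) := \max_{z \in C}\langle z, g\rangle$ satisfies $\sigma_C(g) \ge 0$ for all $g \in \mathbb{R}^d$.

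For the easy direction ($0 \in C \Rightarrow$ stationary), I would write $0 = \sum_j a_j z_j$ with $z_j \in Z$, $a_j \ge 0$, and $\sum_j a_j = 1$. Pairing with an arbitrary $g$ gives $\sum_j a_j \langle z_j, g\rangle = 0$; since this is a convex average, at least one summand $\langle z_j, g\rangle$ must be nonnegative, whence $\max_{z \in Z}\langle z, g\rangle \ge 0$. As $g$ was arbitrary, $u$ is $\epsilon$-stationary.

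For the converse I would argue by contraposition: assuming $0 \notin C$, I produce a direction violating stationarity. Let $z^\ast$ be the Euclidean projection of the origin onto the compact convex set $C$; since $0 \notin C$ we have $z^\ast \ne 0$. The variational inequality characterizing the projection, $\langle 0 - z^\ast,\, z - z^\ast\rangle \le 0$ for all $z \in C$, rearranges to $\langle z^\ast, z\rangle \ge \|z^\ast\|^2 > 0$ for every $z \in C$, hence for every $z \in Z$. Choosing $g = -z^\ast$ then yields $\langle \nabla_u f(u,v), g\rangle = -\langle \nabla_u f(u,v), z^\ast\rangle \le -\|z^\ast\|^2 < 0$ for all $v \in R^\epsilon_A(u)$, so $\max_{v \in R^\epsilon_A(u)} \langle \nabla_u f(u,v), g\rangle < 0$, contradicting $\epsilon$-stationarity. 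Thus $\epsilon$-stationarity forces $0 \in C$, completing the equivalence.

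The only genuine content is the separation step, and I expect that to be the main (mild) obstacle: one must use compactness and convexity of $C$ to guarantee the projection exists and to apply the obtuse-angle inequality, while everything else is bookkeeping. An alternative is a direct appeal to the strict separating hyperplane theorem for a point and a disjoint closed convex set, which produces the same $g$; I prefer the projection version because the separating vector $-z^\ast/\|z^\ast\|$ is exactly the $\epsilon$-steepest-descent direction, which makes the geometry transparent and dovetails with the descent construction in Alg.~\ref{alg:descent direction}.
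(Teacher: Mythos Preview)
Your proof is correct and is the standard convex-analytic argument for this equivalence. Note, however, that the paper does not actually supply a proof of this lemma at all: both in the main text and in the Appendix it is simply stated as a citation to Chapter~3.6 of \citet{dem1974introduction}, with no accompanying argument. So there is no ``paper's own proof'' to compare against; you have filled in a result that the authors treated as background.

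For the record, the approach you take---identifying $\epsilon$-stationarity with nonnegativity of the support function of the convex hull $C$ of the active gradients, then using either the projection/obtuse-angle inequality or strict separation to produce a descent direction when $0\notin C$---is exactly the classical Demyanov--Malozemov argument, and the direction $-z^\ast/\|z^\ast\|$ you single out is indeed their $\epsilon$-steepest-descent direction. The only step that requires any care is the separation, and you handle it cleanly via the Euclidean projection onto the compact convex polytope $C$, which exists because $R^\epsilon_A(u)\subseteq A$ is finite. Nothing is missing.
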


\if0
\begingroup
\def\thetheorem{\ref{mythm}}
\begin{theorem}
Let $x$ ...
\end{theorem}
\addtocounter{theorem}{-1}
\endgroup
\fi

\begin{lemma}\label{lem:zero dist}
Suppose $R(u)$ is finite at $u$.
If $d_H(R(u),A)=0$, then $R(u) = R_{A}(u)$ and therefore 
$\partial \phi(u) = \partial \phi_{A}(u)$.
\end{lemma}
\begin{proof}
Since $A \subseteq \Vc$ , $\max_{v\in\Vc}f(u,v)=\max_{v\in R(u)}$ $ f(u,v)\geq \max_{v \in A} f(u,v)$. 
By $d_H(R(u),A)=0$, we have $R(u) \subseteq A$ and therefore for each $v \in R(u)$,
$f(u,v)=\max_{v\in \Vc} f(u,v) = \max_{v\in A}f(u,v)$, so $v \in R_A(u)$.
Conversely, if $v \in R_{A}(u)$ then $f(u,v)=\max_{v \in A}f(u,v)=\max_{v \in \Vc}f(u,v)$, so $v \in R(u)$. 
The remainder of the theorem follows from the definition of subdifferentials.
\end{proof}

Fig.~\ref{fig:auxil} explains several symbols used in the following lemmas. 
\begin{figure}[thb]
\centering
\includegraphics[width=0.7\linewidth]{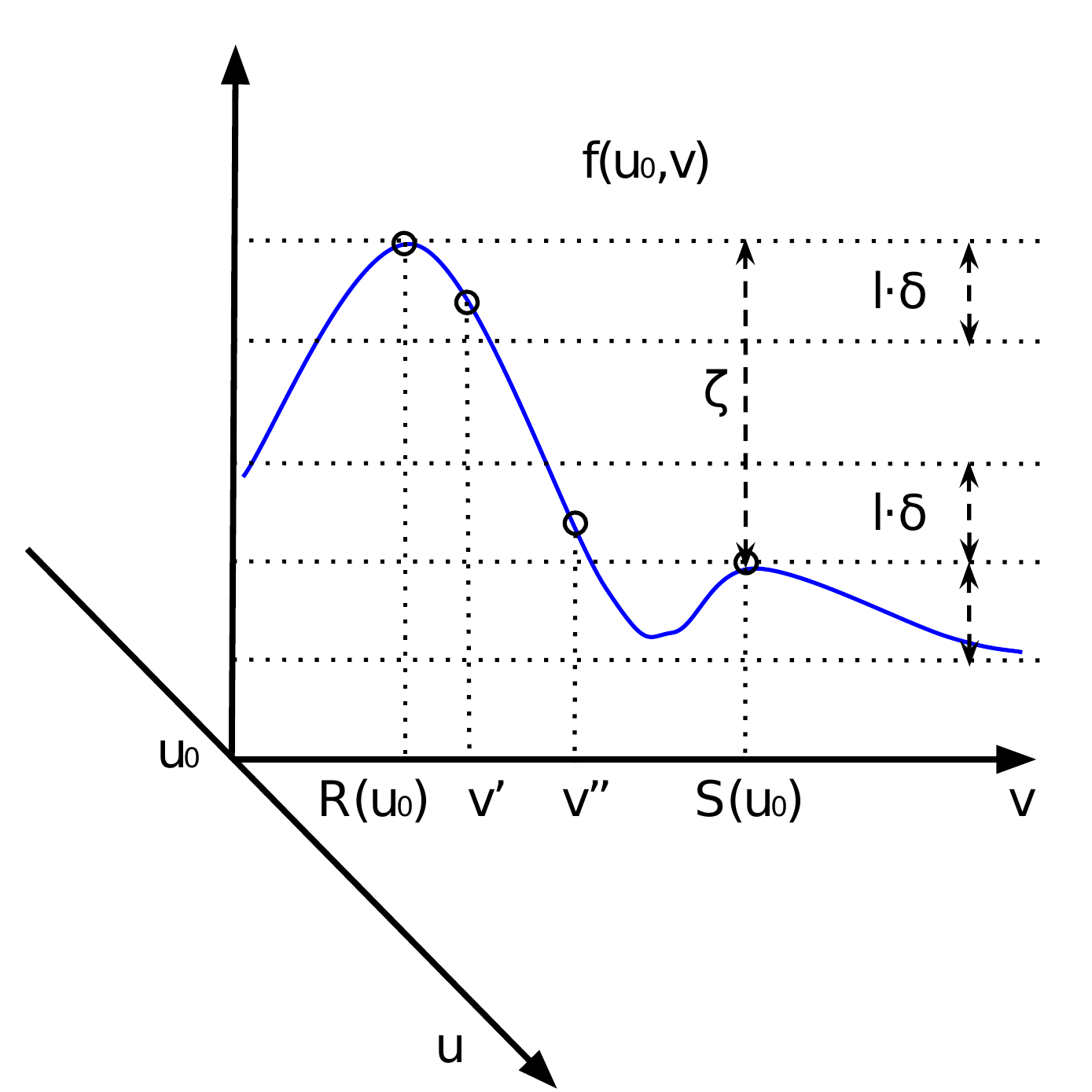}
\caption{Consider a slice of $f(u,v)$ at $u=u_0$.
$\zeta$: smallest gap between the $f$ values of global maxima $R(u)$ and 
non-global maxima $S(u)\setminus R(u)$. 
$v'$ is no farther than $\delta$ to a point in $R(u_0)$ and 
$v''$ is no farther than $\delta$ to a point in $S(u_0)\setminus R(u_0)$. 
By choosing $\epsilon< \zeta-2l\delta$, we have $v' \in R^\epsilon_A(u_0)$ and
$v'' \notin R^\epsilon_A(u_0)$. See Lemma~\ref{lem:separation}.
}
\label{fig:auxil}
\end{figure}

\begin{lemma}\label{lem:lipschitz}
If $d_H(R(u),A)\leq \delta$, then for each $v \in R(u)$ there is one or more
$v' \in A$ such that $\phi(u) - f(u,v') \leq l \delta$ and 
$\|\nabla_u f(u,v) - \nabla_u f(u,v')\| \leq r \delta$. 
\end{lemma}
The proof follows directly from the Lipschitz assumptions. 

\begin{lemma}\label{lem:separation}
Assume $R(u)$ and $S(u)$ are both finite at $u$.
Let $\zeta = \phi(u) - \max_{v \in S(u)\setminus R(u)} f(u,v)$ be the
smallest gap between the global and the non-global maximum values at $u$. 
If all local maxima are global maxima, then set $\zeta=\infty$.
If $d_H(R(u),A)\leq \delta$ and $d_H(A,S(u))\leq \delta$ where
$\delta < 0.5(\zeta-\epsilon)/l$, 
then for each $v' \in R^\epsilon_A(u)$, there is
$v \in R(u)$ such that $\|v - v'\|\leq \delta$.
\end{lemma}
\begin{proof}
Let any $v' \in A$ be $\delta$-close to a global maximum, then 
$f(u,v') \geq \phi(u) - l \delta$.
Similarly, let any $v'' \in A$ be $\delta$-close to a non-global maximum,
then $f(u,v'') \leq \phi(u) -(\zeta - l\delta)$.
Consequently, 
$f(u,v') \geq f(u,v'') + \zeta - 2l\delta > f(u,v'') + \epsilon$, i.e.,
any $f(u,v')$ and $f(u,v'')$ are separated by at least $\epsilon$.
Therefore, each $v'$ satisfies $v' \in R^\epsilon_A= \{v\in A\;|\;\phi_A(u)-f(u,v)\leq \epsilon\}$
but no $v''$ satisfies $v'' \in R^\epsilon_A$.
\end{proof}

\begin{lemma}\label{lem:approximate subdifferential}
Suppose $\delta$ is chosen as in Lemma~\ref{lem:separation} and $\Uc$ is bounded
($\forall u\in\Uc,\;\|u\| = B < \infty$.)
Then any $z'\in \mathrm{co}\{\cup_{v\in {R^\epsilon_A}} \nabla_u f(u_0,v)\}$ is 
an $(2r\delta B)$-subgradient of $\phi(u_0)$.
\end{lemma}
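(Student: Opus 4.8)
The plan is to combine Lemma~\ref{lem:lipschitz} and Lemma~\ref{lem:separation} to transfer gradient information from the approximate maximum set $R^\epsilon_A(u_0)$ to the true maximum set $R(u_0)$, and then use the exactness of the subdifferential of $\phi$ (the second part of the convex-hull Lemma) together with the boundedness of $\Uc$ to bound the approximation error. Concretely, since $\delta$ is chosen as in Lemma~\ref{lem:separation}, every $v \in R^\epsilon_A(u_0)$ lies within distance $\delta$ of some point $\sigma(v)\in R(u_0)$; by the $r$-Lipschitz assumption on $\nabla_u f(u,\cdot)$ (restated in Lemma~\ref{lem:lipschitz}), $\|\nabla_u f(u_0,v) - \nabla_u f(u_0,\sigma(v))\| \leq r\delta$.

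First I would fix an arbitrary $z' \in \mathrm{co}\{\cup_{v\in R^\epsilon_A(u_0)} \nabla_u f(u_0,v)\}$, writing $z' = \sum_{j} a_j \nabla_u f(u_0,v_j)$ with $v_j \in R^\epsilon_A(u_0)$, $a_j \geq 0$, $\sum_j a_j = 1$. Set $w_j = \nabla_u f(u_0,\sigma(v_j)) \in \{\cup_{v\in R(u_0)} \nabla_u f(u_0,v)\}$ and let $z = \sum_j a_j w_j$; then $z \in \mathrm{co}\{\cup_{v\in \Vc}\nabla_u f(u_0,v)\} = \partial\phi(u_0)$ by the convex-hull Lemma, so $z$ is a genuine (0-)subgradient of $\phi$ at $u_0$: for all $u$, $\phi(u) - \phi(u_0) \geq \langle z, u - u_0\rangle$. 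Next I would estimate the perturbation: $\|z' - z\| \leq \sum_j a_j \|\nabla_u f(u_0,v_j) - w_j\| \leq \sum_j a_j \, r\delta = r\delta$.

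Then the closing step is a Cauchy--Schwarz bound using boundedness of $\Uc$. For any $u \in \Uc$, with $\|u\|, \|u_0\| \leq B$ so that $\|u - u_0\| \leq 2B$,
\[
\phi(u) - \phi(u_0) \geq \langle z, u-u_0\rangle = \langle z', u-u_0\rangle + \langle z - z', u-u_0\rangle \geq \langle z', u-u_0\rangle - \|z-z'\|\,\|u-u_0\| \geq \langle z', u-u_0\rangle - 2r\delta B.
\]
This is exactly the defining inequality for $z'$ being a $(2r\delta B)$-subgradient of $\phi$ at $u_0$, completing the proof.

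I do not expect a serious obstacle here: the only subtlety is making sure the map $v \mapsto \sigma(v)$ is well-defined (pick any nearest point of $R(u_0)$, which exists since $R(u_0)$ is finite and nonempty by the standing assumptions) and that $\sigma(v) \in R(u_0) \subseteq \Vc$ so the convex-hull identity $\partial\phi(u_0) = \mathrm{co}\{\cup_{v\in\Vc}\nabla_u f(u_0,v)\}$ applies to $z$; both are immediate. One should also note that $R^\epsilon_A(u_0)$ is nonempty (it contains the current best candidate), so $z'$ exists. If anything needs care, it is bookkeeping the constant: the bound uses $\|u - u_0\| \leq 2B$ rather than $\|u\| \leq B$, which is why the factor $2$ appears; this matches the statement.
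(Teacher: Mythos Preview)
Your proposal is correct and follows essentially the same approach as the paper's proof: pair each $v_j\in R^\epsilon_A(u_0)$ with a nearby $\sigma(v_j)\in R(u_0)$ via Lemma~\ref{lem:separation}, use the $r$-Lipschitz bound from Lemma~\ref{lem:lipschitz} to control the gradient discrepancy, and then invoke $\|u-u_0\|\le 2B$. The only cosmetic difference is order of operations: the paper first shows each vertex gradient $z'_k$ individually satisfies the $(2r\delta B)$-subgradient inequality and then averages, whereas you first form the convex combinations $z,z'$ and bound $\|z-z'\|\le r\delta$ before applying Cauchy--Schwarz once; both arrive at the same bound.
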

\begin{proof}
From Lemmas~\ref{lem:lipschitz} and ~\ref{lem:separation},
for each $(v^k)'\in R^\epsilon_A$, there is $v^k \in R(u_0)$ such that 
$\|\nabla_u f(u_0,v^k) - \nabla_u f(u_0,(v^k)')\| \leq r\delta.$
Let $z_k = \nabla_u f(u_0,v^k)$ and $z'_k = \nabla_u f(u_0,(v^k)')$.
Then, for all $k=1,...,|R^\epsilon_A|$ and for all $u$, 
\begin{eqnarray*}
&&\phi(u)-\phi(u_0) - \langle z_k',\;u-u_0\rangle\\
&=& \phi(u)-\phi(u_0) - \langle z_k+z'_k-z_k,\;u-u_0\rangle\\
&\geq& - \langle z'_k-z_k,\;u-u_0\rangle\\
&\geq& -\|z'_k-z_k\|\|u-u_0\|\\
&\geq& -r\delta \|u-u_0\|\geq -2r\delta B.
\end{eqnarray*}
By taking any convex combination of $\sum_{k=1}^n a_k (\cdot)$ on both sides, 
we have 
\[
\phi(u)-\phi(u_0) - \langle \sum_{k=1}^n  a_k z'_k,\;u-u_0\rangle \geq -2r\delta B,
\]
and therefore any $z'\in \mathrm{co}\{\cup_{v\in {R^\epsilon_A}} \nabla_u f(u_0,v)\}$
is a $(2r\delta B)$-subgradient of $\phi(u_0)$
\end{proof}

\begin{theorem}\label{thm:main}
Suppose the conditions of Lemmas~\ref{lem:lipschitz},~\ref{lem:separation} and ~\ref{lem:approximate subdifferential} hold, and also
suppose the max step in Alg.2 is accurate for sufficiently
large $i\geq i_0$ for some $i_0\geq 1$ so that
$\max[ d_H(R(u_i),A_i), d_H(A_i, S(u_i))] \leq \delta_i$ holds 
where $\delta_i \leq \min\left[ 0.5(\zeta_i - \epsilon_i)/l, 
\; 0.5\xi_i/(rB)\right]$ for some non-negative sequence $(\xi_1,\xi_2,...).$ 
If the step size satisfies $\rho_i\geq 0,\forall i$, $\sum_{i=1}^{\infty} \rho_i = \infty$, 
$\sum_{i=1}^\infty \rho_i^2 < \infty$, and $\sum_{i=1}^\infty \rho_i \xi_i <\infty$,
then 
$\min [\phi(u_1),...,\phi(u_i)]$ converges to the minimum value $\phi^\ast$.
\end{theorem}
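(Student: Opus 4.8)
The plan is to show that Algorithm~\ref{alg:proposed} is, for all iterations $i \geq i_0$, an instance of the classical $\xi_i$-subgradient descent method applied to the convex function $\phi(u)$, and then invoke the standard convergence analysis for subgradient methods with a non-summable, square-summable step size. First I would establish that the descent direction $g(u_i, A_i, \epsilon_i)$ returned by Algorithm~\ref{alg:descent direction} has the form $g_i = -z_i'$ where $z_i' \in \mathrm{co}\{\cup_{v \in R^\epsilon_{A_i}(u_i)} \nabla_u f(u_i, v)\}$ (this covers both the $n=1$ and $n>1$ branches, since a singleton is a trivial convex hull). By Lemma~\ref{lem:approximate subdifferential}, applied at $u_0 = u_i$ with the hypothesis $\delta_i \leq 0.5 \xi_i / (rB)$, the vector $z_i'$ is a $(2r\delta_i B)$-subgradient of $\phi$ at $u_i$, hence a $\xi_i$-subgradient; that is, $z_i' \in \partial_{\xi_i}\phi(u_i)$, so $\phi(u) - \phi(u_i) \geq \langle z_i', u - u_i\rangle - \xi_i$ for all $u$, and in particular for $u = \us$.

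The core of the argument is then the telescoping energy estimate. Writing $u_{i+1} = u_i - \rho_i z_i'$, I expand
\[
\|u_{i+1} - \us\|^2 = \|u_i - \us\|^2 + \rho_i^2 \|z_i'\|^2 + 2\rho_i \langle z_i', \us - u_i\rangle,
\]
and use the $\xi_i$-subgradient inequality to bound $\langle z_i', \us - u_i\rangle \leq \phi(\us) - \phi(u_i) + \xi_i = \phi^\ast - \phi(u_i) + \xi_i$. Summing from $i = i_0$ to $n$ and rearranging yields
\[
2\sum_{i=i_0}^{n} \rho_i\bigl(\phi(u_i) - \phi^\ast\bigr) \leq \|u_{i_0} - \us\|^2 + \sum_{i=i_0}^{n}\rho_i^2\|z_i'\|^2 + 2\sum_{i=i_0}^{n}\rho_i\xi_i.
\]
I would bound $\|z_i'\|$ uniformly: since $f$ is continuously differentiable on the compact set $\Uc \times \Vc$, the gradients $\nabla_u f(u,v)$ are bounded by some constant $G$, and convex combinations preserve this bound, so $\|z_i'\| \leq G$. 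Therefore, denoting $\underline{\phi}(u_n) := \min[\phi(u_{i_0}), \ldots, \phi(u_n)]$ and using $\sum_{i=i_0}^{n}\rho_i(\phi(u_i)-\phi^\ast) \geq (\sum_{i=i_0}^{n}\rho_i)(\underline{\phi}(u_n) - \phi^\ast)$,
\[
\underline{\phi}(u_n) - \phi^\ast \leq \frac{\|u_{i_0}-\us\|^2 + G^2\sum_{i=i_0}^{n}\rho_i^2 + 2\sum_{i=i_0}^{n}\rho_i\xi_i}{2\sum_{i=i_0}^{n}\rho_i}.
\]
By the hypotheses $\sum \rho_i = \infty$, $\sum \rho_i^2 < \infty$, $\sum \rho_i \xi_i < \infty$, the numerator stays bounded while the denominator diverges, so the right side tends to $0$ as $n \to \infty$; since the finitely many early iterates $i < i_0$ only add a finite amount to $\min[\phi(u_1),\ldots,\phi(u_i)]$ and cannot increase it, this gives $\min[\phi(u_1),\ldots,\phi(u_i)] \to \phi^\ast$.

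The main obstacle, and the step most deserving of care, is justifying that the algorithm's direction genuinely lies in the $\epsilon$-subdifferential of the \emph{true} $\phi$ rather than merely that of $\phi_{A_i}$ — this is exactly the content of Lemmas~\ref{lem:lipschitz}, \ref{lem:separation}, and \ref{lem:approximate subdifferential}, whose hypotheses I must verify are in force: the separation condition $\delta_i < 0.5(\zeta_i - \epsilon_i)/l$ guarantees (Lemma~\ref{lem:separation}) that every $v' \in R^\epsilon_{A_i}(u_i)$ is $\delta_i$-close to a genuine global maximizer, which is what feeds into the $(2r\delta_i B)$-subgradient conclusion. A secondary subtlety is that the random choice of $z \in \mathrm{co}\{z_1 \cup \cdots \cup z_n\}$ in Algorithm~\ref{alg:descent direction} does not affect the bound at all, since Lemma~\ref{lem:approximate subdifferential} holds for \emph{any} element of that convex hull — so no expectation or martingale argument is needed. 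Finally I note that the convexity of $\phi$ (an assumption) is essential both for the subgradient inequality and for the conclusion that a vanishing objective gap means convergence to the global minimum value.
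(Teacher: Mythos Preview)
Your proposal is correct and follows essentially the same route as the paper's own proof: both reduce to the standard $\epsilon$-subgradient descent analysis via the telescoping distance-to-minimizer identity, invoke Lemma~\ref{lem:approximate subdifferential} to certify that the algorithm's direction is a $\xi_i$-subgradient of $\phi$, and conclude from the step-size conditions. If anything, you are slightly more careful than the paper in summing from $i_0$ rather than $1$ and in justifying the uniform gradient bound via compactness rather than simply assuming it.
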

Note that a stronger result such as $\lim\inf_{i\to \infty} \phi(u_i) = \phi^\ast$
is possible (see, e.g., \cite{correa1993convergence}), but we give a simpler proof
similar to \cite{boyd2003subgradient} which assumes $\|\nabla_u f(u,v)\| \leq L$ for some $L>0$. 
\begin{proof}
We combine previous lemmas with the standard proof of the $\epsilon$-subgradient descent
method.
Let $u_{i+1} = u_i - \rho_i g_i$. Then,
\begin{eqnarray*}
&&\|u_{i+1}-\us\|^2\\
 &=& \|u_i-\us\|^2 + \rho_i^2 \|g_i\|^2 + 2\rho_i \langle g_i,\; \us-u_i\rangle\\
&\leq& \|u_i-\us\|^2 + \rho_i^2 \|g_i\|^2+2\rho_i(\phi(\us)-\phi(u_i)+\xi_i)
\end{eqnarray*}
from the definition of $\partial_\xi \phi(u)$.
Taking $\sum_{i=1}^N (\cdot)$ on both sides gives us
\begin{eqnarray*}
\|u_{N+1}-\us\|^2 &\leq& \|u_1 - \us\|^2 +\sum_{i=1}^N \rho_i^2 \|g_i\|^2 
\\ 
&& +2 \sum_{i=1}^N \rho_i(\phi(\us)-\phi(u_i)+\xi_i),
\end{eqnarray*}
or equivalently,
\[
2 \sum_{i=1}^N ( \rho_i(\phi(u_i)-\phi(\us)-\xi_i)  \leq  \|u_1 - \us\|^2 +\sum_{i=1}^N \rho_i^2 \|g_i\|^2.
\] 
If we define $\underline{\phi}(u_i):=\min [\phi(u_1), ... , \phi(u_i)]$, then
$\sum_{i=1}^N \rho_i (\phi(u_i)-\phi^\ast) \geq (\sum_{i=1}^N \rho_i) (\underline{\phi}(u_i) - \phi^\ast)$.
Combining the two inequalities, we have
\begin{eqnarray*}
0 &\leq& \underline{\phi}(u_i)-\phi^\ast \leq 
\frac{\sum_{i=1}^N \rho_i (\phi(u_i)-\phi^\ast)}{\sum_{i=1}^N \rho_i} \\
&\leq&
\frac{\|u_1 - \us\|^2 +\sum_{i=1}^N \rho_i^2 \|g_i\|^2 + 2 \sum_{i=1}^N \rho_i \xi_i}{2\sum_{i=1}^N \rho_i}  \\
&\leq&
\frac{\|u_1 - \us\|^2 +\sum_{i=1}^N \rho_i^2 L^2 +2\sum_{i=1}^N \rho_i \xi_i}{2\sum_{i=1}^N \rho_i}.
\end{eqnarray*}
With $\sum_{i=1}^{\infty} \rho_i = \infty$,  
$\sum_{i=1}^\infty \rho_i^2 < \infty$, and $\sum_{i=1}^\infty \rho_i \xi_i <\infty$, 
we get $\underline{\phi}(u_i) \to \phi^\ast$. 
\end{proof}
\begin{lemma}
For any $\epsilon>0$, one can choose a fixed $A=(v^1,...,v^k)$ such that 
$\phi(u)-\phi_{A}(u) \leq \epsilon$ holds for all $u$. Furthermore,
if $\hat{u}=\arg\min_u \phi_{A}(u)$ is the minimizer of the approximation, 
then $\phi(\uh) - \phi(\us) \leq \epsilon$.
\end{lemma}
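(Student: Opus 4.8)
The plan is to take $A$ to be a finite $\delta$-net of $\Vc$ with $\delta = \epsilon/l$, where $l$ is the constant from the standing assumption that $f(u,\cdot)$ is $l$-Lipschitz for every $u$. Since $\Vc$ is compact (Section~\ref{sec:backgrounds}), such a net exists: there are finitely many points $v^1,\dots,v^K \in \Vc$ so that every $v \in \Vc$ is within distance $\epsilon/l$ of some $v^k$. I would then verify the uniform approximation bound $\phi(u)-\phi_A(u)\le\epsilon$ directly from Lipschitzness, and deduce the suboptimality bound on $\uh$ by a three-term comparison.

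For the first claim, fix an arbitrary $u\in\Uc$ and pick $\vs\in R(u)$, so $f(u,\vs)=\phi(u)$. By the net property there is $v^k\in A$ with $\|\vs-v^k\|\le\epsilon/l$, and the $l$-Lipschitz continuity of $f(u,\cdot)$ gives $f(u,v^k)\ge f(u,\vs)-l\|\vs-v^k\|\ge\phi(u)-\epsilon$. Hence $\phi_A(u)=\max_{v\in A}f(u,v)\ge\phi(u)-\epsilon$. The reverse inequality $\phi_A(u)\le\phi(u)$ is immediate because $A\subseteq\Vc$. Therefore $0\le\phi(u)-\phi_A(u)\le\epsilon$ for all $u$, which is the first statement; note also that $\phi_A$, being a maximum of finitely many continuous functions on the compact set $\Uc$, attains its minimum, so $\uh=\arg\min_u\phi_A(u)$ is well defined.

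For the second claim I would chain three comparisons:
\[
\phi(\uh)-\phi(\us) = \big(\phi(\uh)-\phi_A(\uh)\big) + \big(\phi_A(\uh)-\phi_A(\us)\big) + \big(\phi_A(\us)-\phi(\us)\big).
\]
The first term is $\le\epsilon$ by the uniform bound just established; the second is $\le 0$ because $\uh$ minimizes $\phi_A$; and the third is $\le 0$ because $\phi_A\le\phi$ pointwise. Summing gives $\phi(\uh)-\phi(\us)\le\epsilon$.

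There is no genuinely hard step here; the only point requiring care is that the net cardinality $K=|A|$ must be chosen uniformly in $u$, which is precisely why the argument needs $\Vc$ compact and $f$ Lipschitz in $v$ with a constant $l$ independent of $u$ — both guaranteed by the standing assumptions. The caveat worth flagging (and already noted in the surrounding text) is that a $\delta$-net of a $D$-dimensional set has size on the order of $(\mathrm{diam}\,\Vc\cdot l/\epsilon)^D$, so while the bound is dimension-free in its statement, the required $K$ is not, which is why this construction is mainly of conceptual value and $K$ must be capped in practice.
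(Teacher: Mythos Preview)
Your proof is correct and follows essentially the same approach as the paper: both use an $\epsilon/l$-net (the paper calls it a ``uniform $\epsilon/l$-grid'') together with the Lipschitz assumption to get the uniform bound, and both establish the suboptimality of $\uh$ via the same three-term comparison $\phi(\uh)-\phi_A(\uh)$, $\phi_A(\uh)-\phi_A(\us)$, $\phi_A(\us)-\phi(\us)$. Your write-up simply spells out the first step in more detail than the paper does.
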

\begin{proof}
Since $\Vc$ is compact and $f$ is continuous, we can find a finite grid $A$
such as a uniform $\epsilon/l$-grid for $l$-Lipschitz $f$ so that
$\phi(u)-\phi_{A}(u) \leq \epsilon$ for all $u$.
Furthermore, we have
\begin{eqnarray*}
\phi(\uh) - \phi(\us) 
&=& \phi(\uh) - \phi_{A}(\uh)  + \phi_{A}(\uh) - \phi(\us)\\
&\leq& \phi(\uh) - \phi_{A}(\uh)  + \phi_{A}(\us) - \phi(\us)\\
&\leq& \phi(\uh) - \phi_{A}(\uh) \leq \epsilon,
\end{eqnarray*}
since $\phi_{A}(u)=\max_{v \in A}f(u,v) \leq \max_{v \in \Vc}f(u,v)=\phi(u)$
for all $u$.
\end{proof}

\begin{lemma}
Let $\epsilon=\epsilon'+l\delta$ $(\epsilon,\epsilon'\geq 0)$ where $l$ is the
Lipschitz coefficient of $f(u,v)$ in $v$. 
If $u_0$ is an $\epsilon$-stationary point of $\phi(u)$, 
then $u_0$ is also an $\epsilon'$-stationary point of $\phi_{A}(u)$.
\end{lemma}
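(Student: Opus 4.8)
The plan is to translate both stationarity conditions into the convex-hull language of Lemma~\ref{lem:stationary} and then relate the two $\epsilon$-maximum sets at $u_0$. By Lemma~\ref{lem:stationary}, the hypothesis ``$u_0$ is $\epsilon$-stationary for $\phi$'' is equivalent to $0\in\mathrm{co}\{\cup_{v\in R^\epsilon(u_0)}\nabla_u f(u_0,v)\}$, while the desired conclusion is equivalent to $0\in\mathrm{co}\{\cup_{v\in R^{\epsilon'}_A(u_0)}\nabla_u f(u_0,v)\}$. Thus the lemma reduces to transporting a convex-combination certificate of the origin from the gradient family indexed by $R^\epsilon(u_0)$ to the one indexed by $R^{\epsilon'}_A(u_0)$.

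First I would pin down how the two index sets sit relative to one another. Using only the $l$-Lipschitz-in-$v$ bound on $f$ together with $d_H(R(u_0),A)\le\delta$, every exact maximizer is $\delta$-approximated in $A$, so $\phi(u_0)-\phi_A(u_0)\le l\delta$; substituting this into the definitions shows that the choice $\epsilon=\epsilon'+l\delta$ is exactly the budget coupling the two tolerances and in fact yields the identity $R^{\epsilon'}_A(u_0)=A\cap R^\epsilon(u_0)$. Next, under the separation hypothesis of Lemma~\ref{lem:separation} (with $\epsilon$ small relative to the gap $\zeta$), the set $R^\epsilon(u_0)$ is confined to $\delta$-neighborhoods of the global maxima $R(u_0)$, each of which is represented in $A$. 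This is the structural fact that makes it plausible to realize the origin using only gradients attached to points of $A$.

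The step I would carry out next is the gradient transfer. By Carath\'eodory's theorem I write $0=\sum_j a_j\,\nabla_u f(u_0,v_j)$ with $v_j\in R^\epsilon(u_0)$, $a_j\ge 0$, $\sum_j a_j=1$, and finitely many terms; Lemma~\ref{lem:lipschitz} (applied through the nearby exact maximizers supplied by the separation structure) then replaces each $v_j$ by a neighbor $v_j'\in A$ with $\|\nabla_u f(u_0,v_j)-\nabla_u f(u_0,v_j')\|\le r\delta$, so that $\sum_j a_j\,\nabla_u f(u_0,v_j')$ lies within $r\delta$ of the origin. The final move is to verify $v_j'\in R^{\epsilon'}_A(u_0)$ from the value bound and to absorb the residual $r\delta$ gradient error into the stationarity slack, invoking the bounded-domain estimate of Lemma~\ref{lem:approximate subdifferential} to certify that the perturbed combination still witnesses $\epsilon'$-stationarity of $\phi_A$.

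The hard part — and the step that must be handled with the most care — is precisely the \emph{direction} of the set relationship. Since $R^{\epsilon'}_A(u_0)=A\cap R^\epsilon(u_0)\subseteq R^\epsilon(u_0)$, the convex hull on the $\phi_A$ side is the \emph{smaller} one, so a certificate of the origin living on the larger set $R^\epsilon(u_0)$ need not survive restriction to $A$ unless every active $\epsilon$-maximizer carrying that certificate is genuinely captured by $A$. I expect the main obstacle to be showing that the Carath\'eodory support $\{v_j\}$ can always be chosen among, or $\delta$-close to, points of $A$ — i.e. that $A$ covers not merely the exact maximizers $R(u_0)$ but the whole active superlevel set $R^\epsilon(u_0)$ that supports stationarity — and simultaneously that the induced $r\delta$ error stays within the tolerance. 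This is exactly where the separation condition of Lemma~\ref{lem:separation} and the estimate of Lemma~\ref{lem:approximate subdifferential} must be pushed hardest, and it is what distinguishes this implication from the comparatively routine inclusion argument that handles the reverse direction.
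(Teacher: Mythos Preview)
Your diagnosis of the logical structure is sound, but you should be aware of a mismatch: the paper's own three-line proof actually argues the \emph{converse} of the stated lemma. It begins ``At the $\epsilon'$-stationary point of $\phi_A$\ldots'' and, using only the inclusion $R^{\epsilon'}_A(u_0)\subseteq R^{\epsilon}(u_0)$ (which follows from $\phi(u_0)-\phi_A(u_0)\le l\delta$), concludes $\max_{v\in R^\epsilon}\langle\nabla_u f(u_0,v),g\rangle\ge 0$ for all $g$, i.e.\ that $u_0$ is $\epsilon$-stationary for $\phi$. This is exactly the ``comparatively routine inclusion argument'' you flag in your last paragraph as handling the reverse direction.

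You have therefore put your finger on a real issue: the direction literally stated in the lemma passes from the larger index set $R^\epsilon(u_0)$ to the smaller one $R^{\epsilon'}_A(u_0)$, and cannot be obtained from the bare inclusion. The more elaborate program you sketch --- transporting a Carath\'eodory certificate from $R^\epsilon(u_0)$ onto $A$ via the separation Lemma~\ref{lem:separation} and the gradient-Lipschitz bound of Lemma~\ref{lem:lipschitz}, then absorbing the residual via Lemma~\ref{lem:approximate subdifferential} --- is what the stated direction would actually require, and it would also need those lemmas' hypotheses (the Hausdorff bounds on $A$, the gap $\zeta$, the radius $B$) to be added to the statement, which as written omits them. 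One small correction: your identity $R^{\epsilon'}_A(u_0)=A\cap R^\epsilon(u_0)$ is in general only the inclusion $\subseteq$, since $\phi(u_0)-\phi_A(u_0)$ may be strictly less than $l\delta$. In short, either the lemma's statement or its proof in the paper has the direction reversed; your proposal targets the stated (harder) direction, while the paper's argument establishes the converse.
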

\begin{proof}
At the $\epsilon'$-stationary point of $\phi_A$, we have
$\max_{v \in R^{\epsilon'}_A} \langle \nabla_u f(u,v),\;g\rangle \geq 0$ for all $g$
by definition.
Since $R^{\epsilon}(u)=R^{\epsilon'+l\delta}(u) \supseteq R^{\epsilon'}_A(u)$, 
we have $\max_{v \in R^{\epsilon}} \langle \nabla_u f(u,v),\;g\rangle \geq 
\max_{v \in R^{\epsilon'}_A} \langle \nabla_u f(u,v),\;g\rangle \geq 0$ for all $g$.
\end{proof}

\section{GAN training for MNIST}

\begin{figure}[thb]
\centering
\begin{subfigure}{.49\linewidth}
	\centering
	\includegraphics[width=0.99\linewidth]{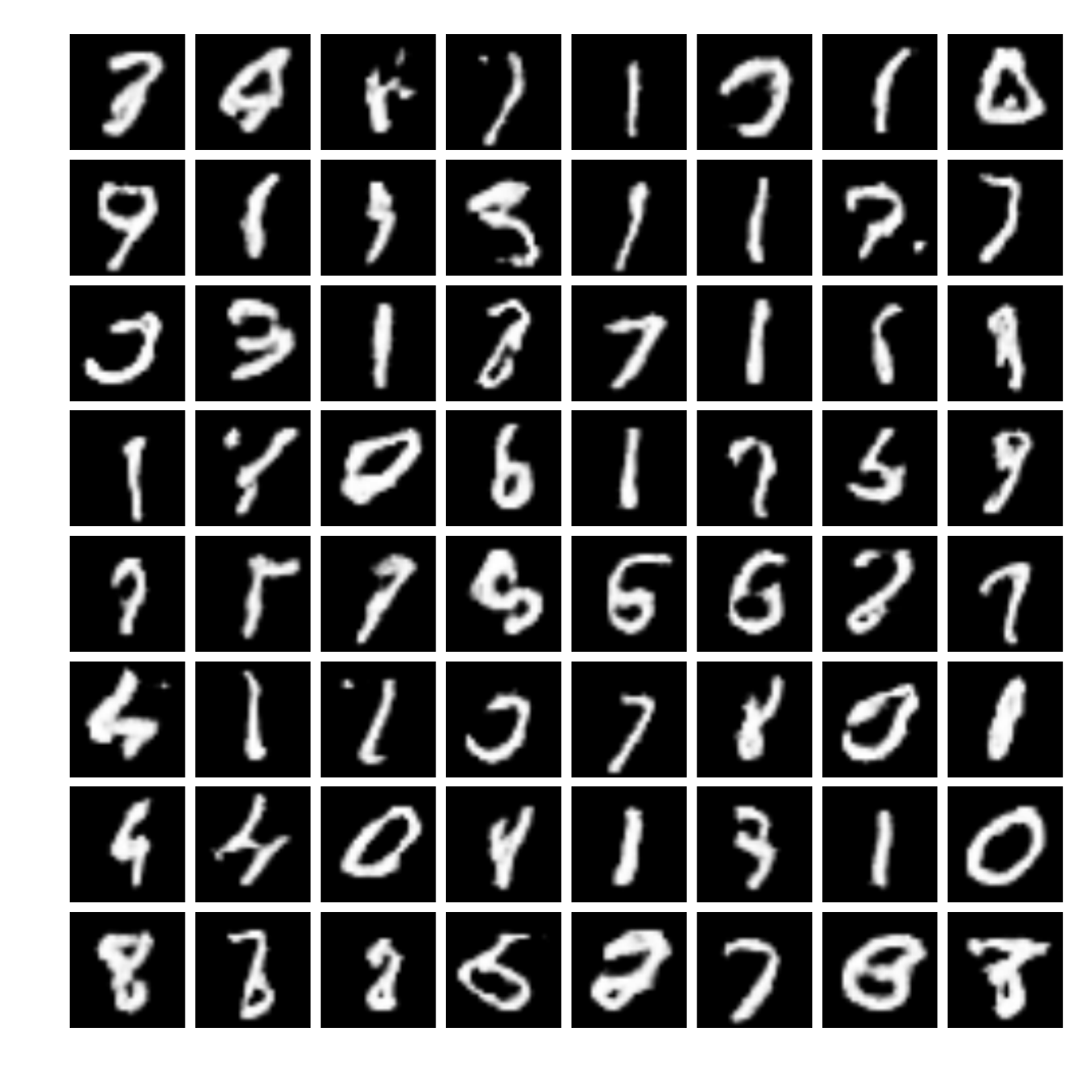}
	\caption{$K=1$}
\end{subfigure}
\begin{subfigure}{.49\linewidth}
	\centering
	\includegraphics[width=0.99\linewidth]{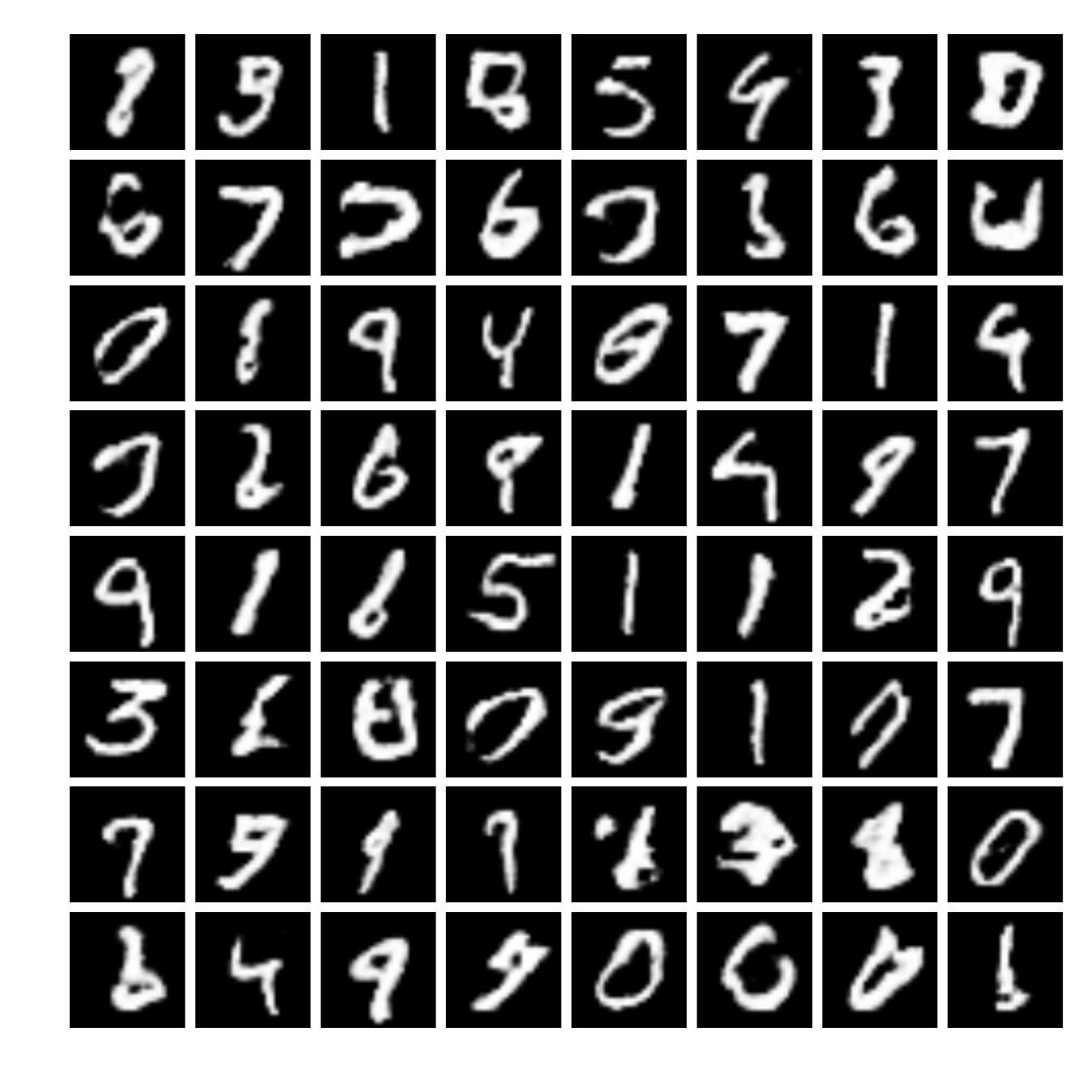}
	\caption{$K=2$}
\end{subfigure}
\begin{subfigure}{.49\linewidth}
	\centering
	\includegraphics[width=0.99\linewidth]{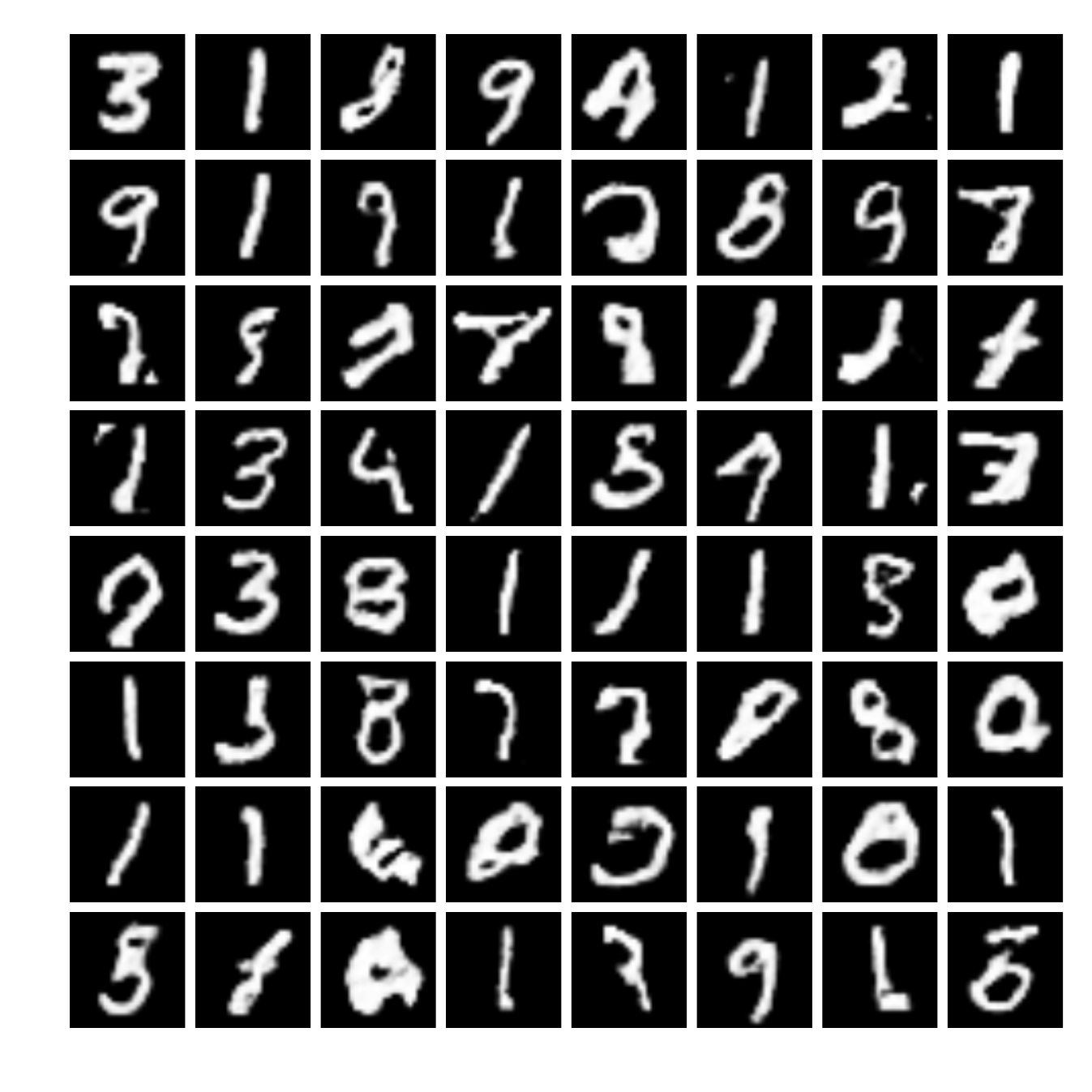}
	\caption{$K=5$}
\end{subfigure}
\begin{subfigure}{.49\linewidth}
	\centering
	\includegraphics[width=0.99\linewidth]{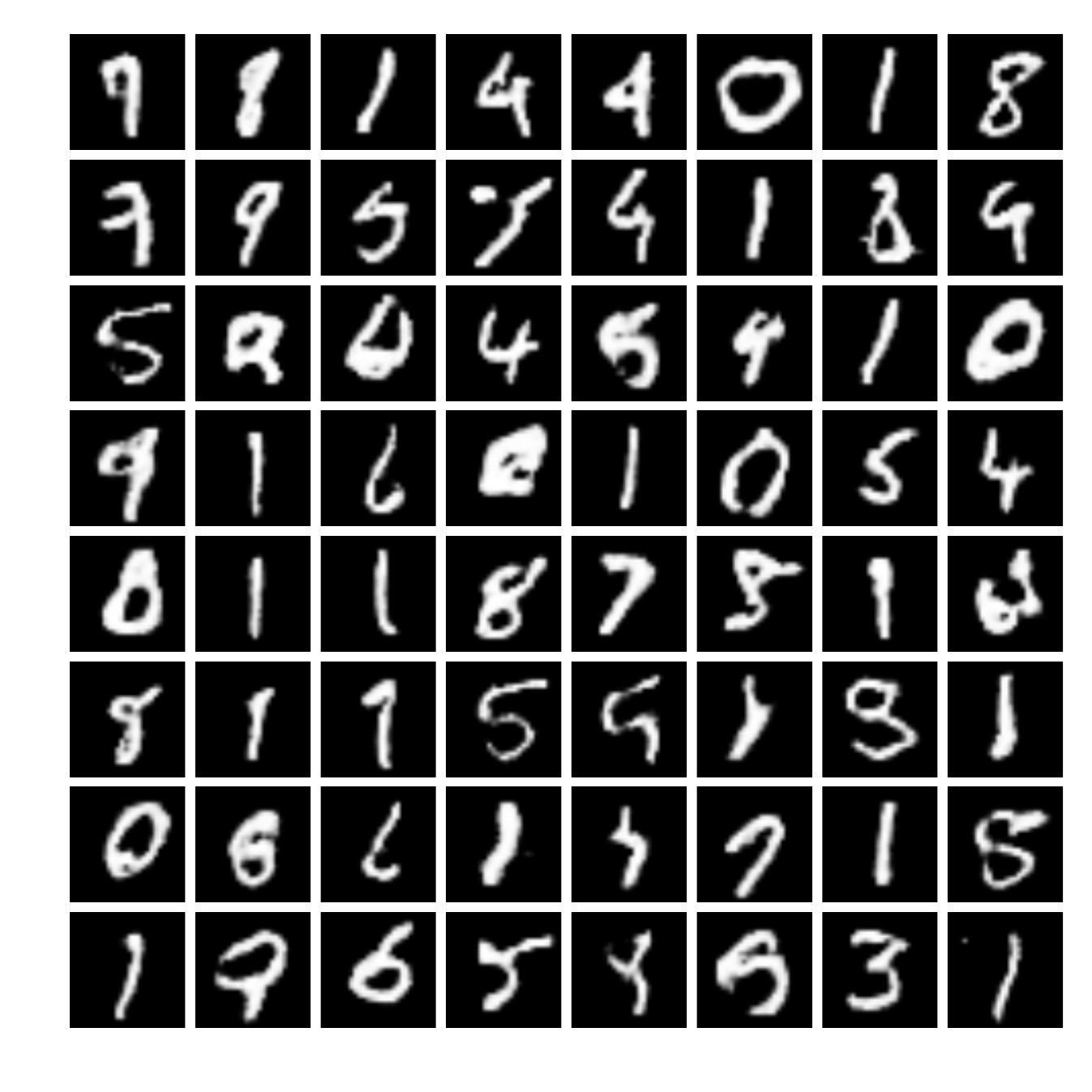}
	\caption{$K=10$}
\end{subfigure}
\caption{MNIST images generated using GAN after 10000 iterations,
trained with $K=1,2,5,10$.}
\label{fig:mnist}
\end{figure}

We also trained GANs to generate MNIST images with the $K$-beam method. 
The objective function is the same as the MoG experiments,
but the generator $G$ and the discriminator networks $D$ are more complex as
shown in Table~\ref{tbl:gan-mnist}.
 

The networks are trained with the batch size of 128 using the Adam optimizer
with the learning rate of $10^{-3}$.

Fig.~\ref{fig:mnist} shows typical training results for $K=1,2,5,10$ and $J=1$.
Images generated with a larger $K$ look slightly more natural than those with
a smaller $K$.  
However, an important difference is that GAN training often fails to converge to
a good solution due to ``mode collapsing'' \cite{nagarajan2017gradient} 
when $K$ is small, 
as observed by an abrupt change in the cost function during optimization.
The mode collapsing rarely happens with larger $K$'s such as $K$=10 with GAN-MNIST.
This difference in stability is not directly observable by qualitatively comparing
the best generated images from each setting, but it can be measured objectively by
average convergence and variance as shown in the figures of the main paper.

\begin{table}[htb]
\caption{Generator and discriminator networks for GAN-MNIST}
\begin{subtable}{1\linewidth}
	\centering
	\caption{Generator}
	\begin{tabular}{|l|l|}
	\hline
	Type & Size\\
	\hline
	Input & input dim=10 \\
	Fully connected & hidden nodes=7x7x64, ReLU \\
	Conv transpose & filter size=5x5x32, ReLU\\
	Conv transpose & filter size=5x5x1\\
	Sigmoid & output dim=28x28x1\\
	\hline
	\end{tabular}
\end{subtable}
\begin{subtable}{1\linewidth}
	\centering
	\caption{Discriminator}
	\begin{tabular}{|l|l|}
	\hline
	Type & Size\\
	\hline
	Input & input dim=28x28x1 \\
	Conv & filter size=5x5x16, ReLU\\
	Max pool & size=2x2, stride=2x2\\
	Conv & filter size=5x5x32, ReLU\\
	Max pool & size=2x2, stride=2x2\\
	Fully connected & hidden nodes=50, ReLU\\
	Fully connected & output dim=2	\\
	\hline
	\end{tabular}
\end{subtable}
\label{tbl:gan-mnist}
\end{table}

\end{document}